\def\eqref#1{equation~\ref{#1}}
\def\1{\bm{1}}
\DeclareMathAlphabet{\mathsfit}{\encodingdefault}{\sfdefault}{m}{sl}
\SetMathAlphabet{\mathsfit}{bold}{\encodingdefault}{\sfdefault}{bx}{n}
\DeclareRobustCommand*\graycircled[1]{\tikz[baseline=(char.base)]{
            \node[shape=circle,draw,inner sep=1pt, thick, color=gray] (char) {\small{\texttt{#1}}};}}
\definecolor{mygreen}{HTML}{79BF41}
\definecolor{myblue}{HTML}{4DBCC9}
\definecolor{myred}{HTML}{FFB6C1} 
\definecolor{codegreen}{rgb}{0,0.6,0}
\definecolor{codegray}{rgb}{0.5,0.5,0.5}
\definecolor{codepurple}{rgb}{0.58,0,0.82}
\definecolor{backcolour}{rgb}{0.95,0.95,0.92}
\lstdefinestyle{mystyle}{
    backgroundcolor=\color{backcolour},   
    commentstyle=\color{codegreen},
    keywordstyle=\color{magenta},
    numberstyle=\tiny\color{codegray},
    stringstyle=\color{codepurple},
    basicstyle=\ttfamily\scriptsize,
    breakatwhitespace=false,         
    breaklines=true,                 
    captionpos=b,                    
    keepspaces=true,                                 
    numbersep=1pt,                  
    showspaces=false,                
    showstringspaces=false,
    showtabs=false,                  
    tabsize=1
}
\newtcolorbox{greybox}[1]{
  colframe=black!15!white,
  base={#1},
  breakable
}
\newtcolorbox{promptbox}[1]{
  colframe=black!15!white,
  base={#1},
  leftrule=0mm,
  breakable,
}
\newtcolorbox{bluebox}[1]{
  colframe=myblue!50!white,
  colback=myblue!15!white,
  base={#1},
  breakable
}
\newtcolorbox{greenbox}[1]{
  colframe=mygreen!50!white,
  colback=mygreen!15!white,
  base={#1},
  breakable
}
\newtcolorbox{redbox}[1]{
  colframe=myred!50!white,
  colback=myred!15!white,
  base={#1},
  breakable
}
\definecolor{blue1}{HTML}{2E86AB}
\theoremstyle{plain}
\newtheorem{theorem}{Theorem}[section]
\newtheorem{lemma}[theorem]{Lemma}
\newtheorem{corollary}[theorem]{Corollary}
\theoremstyle{definition}
\newtheorem{definition}[theorem]{Definition}
\theoremstyle{remark}
\icmltitlerunning{Concept Component Analysis}
 \def\namedlabel#1#2{\begingroup
    #2%
    \def\@currentlabel{#2}%
    \phantomsection\label{#1}\endgroup
}
\begin{document}
\twocolumn[
  \icmltitle{Concept Component Analysis: \\ A Principled Approach for Concept Extraction in LLMs}



  \icmlsetsymbol{equal}{*}

  \begin{icmlauthorlist}
    \icmlauthor{Yuhang Liu}{yyy}
    \icmlauthor{Erdun Gao}{yyy}
    \icmlauthor{Dong Gong}{comp}
    \icmlauthor{Anton van den Hengel}{yyy}
    \icmlauthor{Javen Qinfeng Shi}{yyy}
  \end{icmlauthorlist}

  \icmlaffiliation{yyy}{Australian Institute for Machine Learning, The University of Adelaide}
  \icmlaffiliation{comp}{School of Computer Science and Engineering, The University of New South Wales}

  \icmlcorrespondingauthor{Yuhang Liu}{yuhang.liu01@adelaide.edu.au}
    
    \begin{center}
    \small Project page: \url{https://sites.google.com/view/yuhangliu/projects/cca}
    \end{center}
  \icmlkeywords{Concept Extraction, Concept Component Analysis, Concept Representation Learning}

  \vskip 0.3in
]



\printAffiliationsAndNotice{}  

\doparttoc  
\faketableofcontents

\begin{abstract}
Developing human understandable interpretation of large language models (LLMs) becomes increasingly critical for their deployment in essential domains. Mechanistic interpretability seeks to mitigate the issues through extracts human-interpretable process and concepts from LLMs' activations. Sparse autoencoders (SAEs) have emerged as a popular approach for extracting interpretable and monosemantic concepts by decomposing the LLM internal representations into a dictionary. Despite their empirical progress, SAEs suffer from a fundamental theoretical ambiguity: the well-defined correspondence between LLM representations and human-interpretable concepts remains unclear. This lack of theoretical grounding gives rise to several methodological challenges, including difficulties in principled method design and evaluation criteria. In this work, we show that, under mild assumptions, LLM representations can be approximated as a {linear mixture} of the log-posteriors over concepts given the input context, through the lens of a latent variable model where concepts are treated as latent variables. This motivates a principled framework for concept extraction, namely Concept Component Analysis (ConCA), which aims to recover the log-posterior of each concept from LLM representations through a {unsupervised} linear unmixing process. We explore a specific variant, termed sparse ConCA, which leverages a sparsity prior to address the inherent ill-posedness of the unmixing problem. We implement 12 sparse ConCA variants and demonstrate their ability to extract meaningful concepts across multiple LLMs, offering theory-backed advantages over SAEs.
\end{abstract}

\section{Introduction}



One of the critical questions surrounding the practical application of LLMs is the extent to which and how the concepts they espouse are ground in reality. 
The more general question is whether a model trained only on natural language can develop representations of concepts grounded in the real world
\citep{bowman2024eight,naveed2023comprehensive}. Understanding this relationship is crucial, as it informs not only how we interpret model mechanism, but also how we can systematically analyze, evaluate, and manipulate these representations. A promising approach to investigating such questions is to extract meaningful semantic units, i.e., human-interpretable concepts, embedded within the models' internal representations and behaviors~\citep{singh2024rethinking}. By studying these units, we can begin to probe which aspects of a model’s behavior are aligned with human-interpretable concepts, and how multiple concepts interact.

\subsection{Revisiting SAEs for Concept Extraction}
\textbf{Sparse autoencoders (SAEs).} Recently, SAEs have been explored for this purpose \citep{elhage2022superposition,bricken2023monosemanticity,huben2023sparse}, offering a potential perspective through which to analyze model behavior, including how such concepts interact and compose to generate the next token. Technically, SAEs learn a set of features whose \textit{linear} combinations can reconstruct the internal representations of LLMs, while enforcing a \textit{sparsity} prior on the features, in the hope that each feature corresponds to a monosemantic concept \citep{huben2023sparse,gao2025scaling,braun2024identifying,rajamanoharan2024improving,rajamanoharan2024jumping,mudide2024efficient,chanin2024absorption,lieberum2024gemma,he2024llama,karvonen2024measuring,bussmann2024batchtopk}. 

\textbf{Hypotheses Behind SAEs.} Linearity and sparsity, the two key components of SAEs, are jointly expected to promote the emergence of monosemantic and interpretable concepts. The justification for these two components primarily relies on two key hypotheses, (i) the linear representation hypothesis and (ii) the superposition hypothesis. The former suggests that concepts are often encoded linearly in LLMs \citep{tigges2023linear,nanda2023emergent,moschella2022relative,park2023linear,li2024inference,gurnee2023finding,rajendran2024learning,jiang2024origins}, enabling them to be recovered via linear decoding. The latter argues that LLMs tend to represent more features than they have neurons for, leading to overlapping (i.e., superimposed) representations within the same neurons \citep{elhage2022superposition}. To make such representations reliable and interpretable, features should activate sparsely, reducing interference between them \citep{elhage2022superposition,huben2023sparse}. 

\subsection{Motivation and Contributions}
While these two hypotheses support SAEs, the deeper theoretical question remains unresolved.
\begin{center}
\begin{bluebox}{}
\faThumbtack \ \textbf{Problem.} What is the theoretical relationship between LLM representations and human-interpretable concepts?
\end{bluebox}
\label{key_question}
\end{center}
\textbf{A Deeper Look into SAEs.} Without a clear answer, both principled method design and evaluation become major concerns. In terms of method design, for example, while the decoder in SAEs reconstructs representations through \textit{linear} combinations of learned features, the encoder typically includes a \textit{nonlinear} activation function, typically \texttt{Relu}, following a linear layer. This asymmetry raises a concern about the rationale for introducing the nonlinear activation functions. Moreover, it remains unclear whether sparsity should be imposed directly on the feature space learned by SAEs, or instead on a transformed space derived from it, given the unclear relationship between these features and the underlying concepts. This unclear relationship, on the evaluation side, also makes principled assessment difficult, i.e., it remains unclear what criteria should be used to determine whether a feature meaningfully captures a concept, as also recognized in recent works \citep{makelov2024towards,gao2025scaling,kantamneni2025sparse}.

\begin{figure*}[t]
\centering
\includegraphics[width=\linewidth]{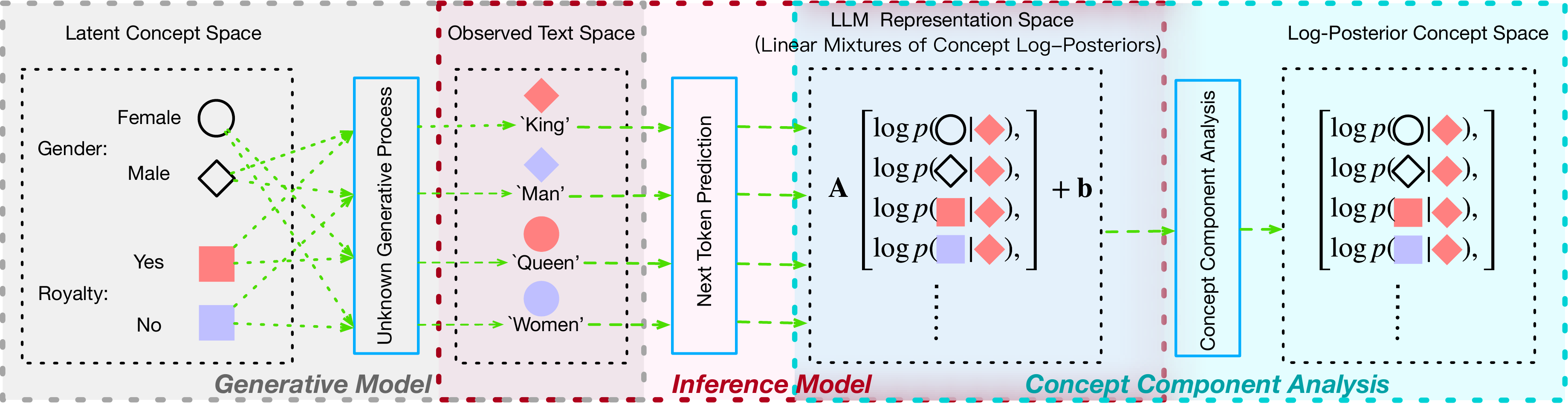}
    \caption{We introduce a latent variable generative model in which observed the input context $\mathbf{x}$ and next token ${y}$, arises from an unknown underlying process over latent concepts $\mathbf{z}$ (Sec.~\ref{sec: setup}). Under this model, we show that LLM representations $\mathbf{f}_{\mathbf{x}}({\mathbf{x}})$, learned by next-token prediction, can be approximated as a linear mixture of the  column vector obtained by stacking log-posteriors of individual latent concepts $\log p(z_i|\mathbf{x})$, conditioned on the input, i.e., $\mathbf{f}_{\mathbf{x}}({\mathbf{x}}) \approx \mathbf{A} \bigl[[\log p(z_1 \mid \mathbf x)]_{z_1};\dots; [\log p(z_\ell \mid \mathbf x)]_{z_\ell} \bigr] + \mathbf{b}$, where $\mathbf{A}$ is a mixing matrix and $\mathbf{b}$ is a constant (Sec.~\ref{sec: linear}). Motivated by this, we propose Concept Component Analysis (ConCA), a method for linearly unmixing LLM representations $\mathbf{f}_{\mathbf{x}}({\mathbf{x}})$ to recover the log-posteriors over individual latent concepts $\log p(z_i|\mathbf{x})$  (Sec.~\ref{sec: cca}).}
\label{intro: models}\vskip -0.2in
\end{figure*}

\textbf{Contributions.} We propose a principled approach for extracting concepts from LLM representations, grounded in a well-defined theoretical relationship between the representations and human-interpretable concepts (see Figure \ref{intro: models}). We begin by analyzing this relationship through the lens of a latent variable model, in which text data are generated by an unknown process over latent, human-interpretable concepts. We show that, under mild conditions, LLM representations learned by next-token prediction frameworks can be approximately expressed as a linear mixture of the logarithm of the posteriors of individual latent concepts, conditioned on the input context. Based on this insight, we introduce a principled approach, that we label Concept Component Analysis (ConCA), which aims to invert the linear mixture to recover the log-posterior of each concept in an unsupervised manner. We propose a specific variant of ConCA, referred as Sparse ConCA, which incorporates a sparsity prior to regularize the solution space, motivated by the widespread adoption of the superposition hypothesis. We emphasize that alternative regularization strategies remain flexible and open for future exploration. We evaluate the proposed Sparse ConCA using linear probing with counterfactual text pairs, a theoretically motivated supervised method for concept extraction, and benchmark its performance against SAE variants across multiple model scales and architectures (Pythia \citep{biderman2023pythia}, Gemma3 \citep{team2025gemma}, Qwen3 \citep{qwen3technicalreport}). We further test it on a downstream task spanning 113 datasets, empirically confirming the advantages of ConCA.

\section{What Do Representations in LLMs Learn?}
\label{sec: model}
In this section we establish a theoretical connection between LLM representations learned through next-token prediction framework and human-interpretable concepts.
To this end, we first construct a latent variable model (LVM) in which observed text data are generated by an unknown process over latent variables representing human-interpretable concepts. We then show that, when LLMs are trained on the observed data using a next-token prediction framework, their learned representations can be approximated as a linear mixture of the log-posteriors of individual latent variables, conditioned on the input context. This perspective not only deepens our understanding of how human-interpretable concepts are organized within LLM representations, but more importantly, it provides a principled foundation for extracting concepts from the representations. We define concept as follows:
{
\begin{definition}
A \emph{concept} is defined as a discrete latent variable $z_i \in \mathcal{V}_i, |\mathcal{V}_i| = k_i,$
where each value in $\mathcal{V}_i$ corresponds to a distinct, human-interpretable semantic attribute (e.g., tense, plurality, sentiment, syntactic role, or topic).  
The full latent configuration is given by $\mathbf{z} = (z_1,\dots,z_\ell)$,
whose components specify the underlying semantic factors that give rise to the observed input context $\mathbf{x}$ and the next token $y$ through the latent generative process.
\end{definition}}
\subsection{Preliminary: A Discrete Latent Variable Generative Model For Text}
\label{sec: setup}
We begin by a LVM in which human-interpretable concepts are modeled as latent variables governing the generation of text data \citep{liu2025predict}. Formally, both the observed context $\mathbf{x}$ and the next token $y$ are assumed to be generated from a set of latent variables $\mathbf{z}$. 
Here $\mathbf{x}$ and $y$ serve as input to the next-token prediction objective used to train LLMs. A human-interpretable concept is formally defined as a latent variable $z_i$ that captures a human-interpretable factor underlying the generation of text data, such as a topic, sentiment, syntactic role, or tense.
Notably, arbitrary interdependencies or structural relationships among the latent variables are allowed. 
We assume the observed variables $\mathbf{x} \in \mathcal V^n$ and $y \in \mathcal V$, and the latent variables $\mathbf z = (z_1,\dots,z_\ell) \in \mathcal V_1 \times \cdots \times \mathcal V_\ell$ to be discrete\footnote{A detailed justification for the discrete assumption can be found in \citet{liu2025predict}.}, with $z_i \in \mathcal V_i,  |\mathcal V_i| = k_i, i=1,\dots,\ell$. Under this formulation, the joint distribution over the observed context $\mathbf{x}$ and next token $y$ is given by:
\begin{equation}
    p(\mathbf{x}, y) = \sum_{\mathbf{z}} p(\mathbf{x}|\mathbf{z}) \, p(y | \mathbf{z}) \, p(\mathbf{z}) \,,
    \label{eq:generative}
\end{equation}
where $p(\mathbf{z})$ is a prior over the latent concepts, and $p(\mathbf{x}|\mathbf{z})$ and $p(y|\mathbf{z})$ model the conditional generation of context and next token, respectively. 

\subsection{Representations in LLMs Linearly Encode Log-Posteriors over Concepts}
\label{sec: linear}
Intuitively, since the latent concepts $\mathbf{z}$ characterize the underlying generative factors of the text data, as defined in Eq.~\ref{eq:generative}, the representations learned from such data should encode information about these concepts. To examine in detail how these representations capture latent concepts, we now turn to the next-token prediction framework, which serves as the standard training framework for LLMs. Specifically, the next-token prediction framework models the conditional distribution of the next token $y$ given the input context $\mathbf{x}$\footnote{More rigorously, this assumes a parametric form of the conditional distribution $p(y|\mathbf{x})$ as a softmax over inner products in an optimally discriminative representation space. Such optimality assumption is canonical for establishing clear and meaningful identifiability results \citep{hyvarinen2016unsupervised,hyvarinen2019nonlinear, khemakhem2020variational}.}: 
\begin{equation}
    p(y|\mathbf{x}) = \frac{\exp{(\mathbf{f}(\mathbf{x})^{T} \mathbf{g}(y))}}{\sum_{y_i} {\exp{(\mathbf{f}(\mathbf{x})^{T} \mathbf{g}(y_i))}}} .
    \label{eq: next-token}
\end{equation}
Here, $y_i$ denotes a specific value of the output token $y$, $\mathbf{f}(\mathbf{x}) \in \mathbb{R}^m $ maps the input $\mathbf{x}$ into a $m$-dimensional (depending on the specific model used) representation space, and $\mathbf{g}(y) \in \mathbb{R}^m $ retrieves the classifier weight vector corresponding to token $y$, i.e., the look-up table used for prediction.

Given the generative model from Eq.~\ref{eq:generative} and the inference model from Eq.~\ref{eq: next-token}, our goal is to formally characterize how the learned representations $\mathbf{f}(\mathbf{x})$ relate to the latent concepts $\mathbf{z}$. In particular, we seek to establish a precise mathematical relationship, thereby serving as the theoretical foundation for concept component analysis developed in Sec.~\ref{sec: cca}. We now present the following key result:

\begin{theorem} 
Suppose latent variables $\mathbf{z}$ and the observed variables $\mathbf{x}$ and $y$ follow the generative models defined in Eq.~\ref{eq:generative}. Assume the following holds:
\begin{itemize}[leftmargin=*]
\item [\namedlabel{itm:diversity} {(i)}] \textbf{(Diversity Condition)} There exist $m+1$ values of $y$, such that the matrix $ \mathbf{ L} = \big( \mathbf{g}(y=y_1)-\mathbf{g}(y=y_0),...,\mathbf{g}(y=y_{m})-\mathbf{g}(y=y_0) \big)$ of size $m \times m$ is invertible,
\item [\namedlabel{itm:injective}{(ii)}] \textbf{(Informational Sufficiency Condition)} The conditional entropy of the latent concepts given the context is close to zero, i.e., $H(\mathbf{z}|\mathbf{x}) \to 0$,
\item[\namedlabel{itm:regularity}{(iii)}]
\textbf{(Regularity Condition)}
For any two values $y_i$ and $y_0$, the total variation (TV) distance between the
corresponding latent concept posteriors satisfies $\mathrm{TV}\big(p(\mathbf z\mid y_i),\,p(\mathbf z\mid y_0)\big)
=
\frac12\sum_{\mathbf z}\big|p(\mathbf z\mid y_i)-p(\mathbf z\mid y_0)\big|
=
o (
|\log(1-e^{-H(\mathbf z\mid \mathbf x)})|^{-1}
),$
as $H(\mathbf z\mid \mathbf x)\to 0$,
\end{itemize}
then the representations $\mathbf{f}(\mathbf{x})$ in LLMs, which are learned through the next-token prediction framework, are related to the true latent variables $\mathbf{z}$, by the following relationship: 
\begin{equation}
\small 
\mathbf{f}(\mathbf{x}) \approx  \mathbf{A} \bigl[[\log p(z_1 \mid \mathbf x)]_{z_1};\dots; [\log p(z_\ell \mid \mathbf x)]_{z_\ell} \bigr]+ \mathbf{b}, 
\label{eq:mix}
\end{equation}
where
$\mathbf{A}$ is a $m \times (\sum_{i=1}^\ell k_i)$ matrix, and $\mathbf{b}$ is a bias vector\footnote{Here, $[\log p(z_i \mid \mathbf{x})]_{z_i}$ denotes the column vector of log-probabilities of all possible values of $z_i$. $\bigl[[\log p(z_1 \mid \mathbf x)]_{z_1};\dots; [\log p(z_\ell \mid \mathbf x)]_{z_\ell} \bigr]$ represents the column vector obtained by stacking these vectors.}.
\label{theory}
\end{theorem}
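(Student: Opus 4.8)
The plan is to invert the softmax read-out so that $\mathbf f(\mathbf x)$ becomes a linear image of a vector of next-token log-probabilities, then to rewrite those log-probabilities in terms of concept posteriors via a Bayes identity, and finally to collapse the resulting residual using the two informational conditions. For Step~1, note that in the parametrisation \ref{eq: next-token} the partition function cancels in any log-odds, so $\mathbf f(\mathbf x)^{\top}\big(\mathbf g(y_i)-\mathbf g(y_0)\big)=\log p(y_i\mid\mathbf x)-\log p(y_0\mid\mathbf x)$ exactly. Stacking this over the $m$ token-differences furnished by the diversity condition \ref{itm:diversity} gives $\mathbf L^{\top}\mathbf f(\mathbf x)=\big[\log p(y_i\mid\mathbf x)-\log p(y_0\mid\mathbf x)\big]_{i=1}^{m}$, and since $\mathbf L$ is invertible, $\mathbf f(\mathbf x)=(\mathbf L^{\top})^{-1}\big[\log p(y_i\mid\mathbf x)-\log p(y_0\mid\mathbf x)\big]_{i=1}^{m}$. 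It therefore suffices to express each log-odds as an affine function (up to vanishing error) of the stacked marginal log-posteriors $\mathbf q(\mathbf x):=\big[[\log p(z_1\mid\mathbf x)]_{z_1};\dots;[\log p(z_\ell\mid\mathbf x)]_{z_\ell}\big]$; composing that affine map with $(\mathbf L^{\top})^{-1}$ then reads off $\mathbf A$ and $\mathbf b$.

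For Step~2, I would use the conditional independence $\mathbf x\perp y\mid\mathbf z$ built into \ref{eq:generative}. Expanding $\log p(\mathbf x,y,\mathbf z_0)$ two ways and noting $\log p(\mathbf x\mid\mathbf z_0)+\log p(\mathbf z_0)-\log p(\mathbf x)=\log p(\mathbf z_0\mid\mathbf x)$ yields, for any configuration $\mathbf z_0$ of positive posterior mass,
\[
\log p(y\mid\mathbf x)=\log p(y\mid\mathbf z_0)+\log p(\mathbf z_0\mid\mathbf x)-\log p(\mathbf z_0\mid\mathbf x,y).
\]
Choosing $\mathbf z_0=\mathbf z^{\star}(\mathbf x):=\argmax_{\mathbf z}p(\mathbf z\mid\mathbf x)$ and differencing over $y_i,y_0$ cancels $\log p(\mathbf z^{\star}\mid\mathbf x)$ and leaves
\[
\log\frac{p(y_i\mid\mathbf x)}{p(y_0\mid\mathbf x)}=\log\frac{p(y_i\mid\mathbf z^{\star})}{p(y_0\mid\mathbf z^{\star})}+\log\frac{p(\mathbf z^{\star}\mid\mathbf x,y_0)}{p(\mathbf z^{\star}\mid\mathbf x,y_i)}.
\]

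For Step~3, the informational sufficiency condition \ref{itm:injective} gives, via $H(\mathbf z\mid\mathbf x)\ge-\log p(\mathbf z^{\star}\mid\mathbf x)$, the bound $1-p(\mathbf z^{\star}\mid\mathbf x)\le 1-e^{-H(\mathbf z\mid\mathbf x)}\to 0$; combining this with Bayes' rule $p(\mathbf z^{\star}\mid\mathbf x,y)=p(y\mid\mathbf z^{\star})p(\mathbf z^{\star}\mid\mathbf x)/p(y\mid\mathbf x)$ and the regularity condition \ref{itm:regularity} — whose role is to bound the perturbation of the concept posterior produced by conditioning on a single extra token, at exactly the $|\log(1-e^{-H(\mathbf z\mid\mathbf x)})|$ scale of the off-mode log-posteriors — shows the second term above is $o(1)$ uniformly over the finitely many tokens $y_0,\dots,y_m$. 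Hence $\log p(y_i\mid\mathbf x)-\log p(y_0\mid\mathbf x)=\log p(y_i\mid\mathbf z^{\star}(\mathbf x))-\log p(y_0\mid\mathbf z^{\star}(\mathbf x))+o(1)$. In the same regime each marginal posterior concentrates on its mode, so $[\log p(z_j\mid\mathbf x)]_{z_j}$ determines $z_j^{\star}(\mathbf x)$ up to $o(1)$, hence $\mathbf q(\mathbf x)$ determines $\mathbf z^{\star}(\mathbf x)$; as $\mathbf z^{\star}$ ranges over a finite set, the map $\mathbf z^{\star}\mapsto\log p(y_i\mid\mathbf z^{\star})-\log p(y_0\mid\mathbf z^{\star})$ can be realised by a linear read-out $\mathbf q\mapsto\mathbf M_i\mathbf q+c_i$, and substituting back through Step~1 gives \ref{eq:mix} with $\mathbf A=(\mathbf L^{\top})^{-1}[\mathbf M_1;\dots;\mathbf M_m]$ and $\mathbf b=(\mathbf L^{\top})^{-1}[c_1;\dots;c_m]$.

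I expect the main obstacle to be Step~3, i.e.\ showing $\log p(\mathbf z^{\star}\mid\mathbf x,y_0)-\log p(\mathbf z^{\star}\mid\mathbf x,y_i)\to0$: concentration of $p(\mathbf z\mid\mathbf x)$ by itself is not enough, since conditioning additionally on a token might shift mass onto a competing configuration, and condition \ref{itm:regularity} has to be used in precisely its quantitative form (TV perturbation times off-mode log-posterior magnitude $=o(1)$) to rule this out; tracking these two vanishing scales against each other is the delicate part. A secondary subtlety is the affine-realisability claim (the last part of Step~3), which leans on finiteness of the concept space and on the near-degeneracy of the marginal log-posteriors in the $H(\mathbf z\mid\mathbf x)\to 0$ limit, so one should be explicit about which constants in $\mathbf A,\mathbf b$ are allowed to depend on that limit.
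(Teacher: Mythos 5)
Your Step~1 is exactly the paper's opening move (the partition function cancels in log-odds, stack the $m$ token differences, invert $\mathbf L$), and your Step~2 pointwise identity is algebraically correct. The gap is in Step~3, in both halves. For the residual $\log p(\mathbf z^{\star}\mid\mathbf x,y_0)-\log p(\mathbf z^{\star}\mid\mathbf x,y_i)$, the Bayes-rule route you propose is circular: $p(\mathbf z^{\star}\mid\mathbf x,y)=p(y\mid\mathbf z^{\star})\,p(\mathbf z^{\star}\mid\mathbf x)/p(y\mid\mathbf x)$ reintroduces the very log-odds $\log p(y_i\mid\mathbf x)-\log p(y_0\mid\mathbf x)$ you are trying to approximate, so substituting it back into Step~2 gives a tautology rather than a bound. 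Nor do the stated hypotheses control this pointwise quantity by another route: condition \ref{itm:regularity} constrains the token-level posteriors $p(\mathbf z\mid y)$ (marginalized over contexts), not the context-conditional $p(\mathbf z\mid\mathbf x,y)$, so nothing rules out a token $y_i$ with $p(y_i\mid\mathbf z^{\star}(\mathbf x))$ arbitrarily small or zero, in which case conditioning on $y_i$ evicts the mode, your residual is unbounded, and the Step~2 identity itself fails (it needs $p(\mathbf z^{\star}\mid\mathbf x,y_i)>0$). The paper never meets this obstacle because it uses the expectation decomposition $\log p(y\mid\mathbf x)=\mathbb E_{p(\mathbf z\mid y)}[\log p(\mathbf z\mid\mathbf x)]+\mathbb E_{p(\mathbf z\mid y)}[\log p(y\mid\mathbf z)]-\mathbb E_{p(\mathbf z\mid y)}[\log p(\mathbf z\mid y,\mathbf x)]$; its residual is a difference of expectations under $p(\mathbf z\mid y_i)$ versus $p(\mathbf z\mid y_0)$, bounded by a total-variation distance times a log-scale factor, which is exactly the form condition \ref{itm:regularity} is calibrated for (Lemma~\ref{app: lemma2}).

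The second half, the affine-realizability claim, is false in precisely the regime you invoke. As $H(\mathbf z\mid\mathbf x)\to0$ the non-mode entries of $\mathbf q(\mathbf x)$ diverge to $-\infty$ while the mode entries tend to $0$; a fixed read-out $\mathbf M_i\mathbf q+c_i$ must therefore either put zero weight on the coordinates that can be non-modal (and is then constant) or blow up, so it cannot represent a non-constant function of the mode such as $\log p(y_i\mid\mathbf z^{\star})-\log p(y_0\mid\mathbf z^{\star})$; functions of the argmax are approximately linear in the exponentiated posteriors (one-hot codes), not in the log-posteriors. This is the step the paper's proof is engineered to avoid: it never passes through the argmax, but keeps the exact linear term $\hat{\mathbf L}\,[\log p(\mathbf z\mid\mathbf x)]_{\mathbf z}$ with fixed weights $p(\mathbf z\mid y_i)-p(\mathbf z\mid y_0)$, and uses the entropy condition only to factorize the joint log-posterior, $\log p(\mathbf z\mid\mathbf x)\approx\sum_j\log p(z_j\mid\mathbf x)$ (Lemmas~\ref{appendix: lemma1} and \ref{lemma:full-marginals}), a purely linear passage to the stacked marginals via a selector matrix $\mathbf S$, giving $\mathbf A=(\mathbf L^{\top})^{-1}\hat{\mathbf L}\,\mathbf S$. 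Repairing your route would require both a non-circular control of $p(\mathbf z^{\star}\mid\mathbf x,y)$ that the theorem's conditions do not supply and a different encoding of the mode, at which point you are effectively back to the paper's joint-factorization argument.
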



\textbf{Justification for the Conditions}. Condition~\ref{itm:diversity} is closely related to the data diversity assumption in earlier work for identifiability analysis in the context of nonlinear independent component analysis \citep{hyvarinen2016unsupervised, hyvarinen2019nonlinear, khemakhem2020variational}, and has recently been employed to identifiability analyses of latent variables in the context of LLMs \citep{roeder2021linear, marconatoall, liu2025predict}. Intuitively, Condition~\ref{itm:diversity} implies that there is a sufficiently large number of distinct values of $y$ that the $m$ difference vectors $\mathbf{g}(y_i) - \mathbf{g}(y_0)$ (for $i = 1, \ldots, m$) span the image of $\mathbf{g}$. This is a mild assumption, as pointed out by \citeauthor{roeder2021linear} who emphasized that the 
set of $m + 1$ values $\{y_i\}_{i=0}^{m}$ required to generate 
difference vectors $\mathbf{g}(y_i) - \mathbf{g}(y_0)$ that are linearly dependent has measure zero, given that both the initialization and subsequent updates of the parameters of $\mathbf{g}$ are stochastic. Turning to Condition~\ref{itm:injective}, it can be seen as a \textit{mild relaxation} of a standard invertibility assumption commonly adopted for identifiability analysis within causal representation learning community, i.e., the mapping from $\mathbf{z}$ to $\mathbf{x}$ in the generative model (Eq.~\ref{eq:generative}) is assumed to be deterministic and invertible. This implies $H(\mathbf{z} \mid \mathbf{x}) = 0$, to ensure exact recovery of the latent variables. By contrast, our assumption only requires that $H(\mathbf{z} \mid \mathbf{x}) \to 0$, allowing for approximate invertibility from $\mathbf{z}$ to $\mathbf{x}$ in practice. This relaxation also implies that only an approximate recovery is achievable, as shown in Eq.~\ref{eq:mix}.  Finally, in the context of the latent variable model in Eq.~\ref{eq:generative}, Condition~\ref{itm:regularity} requires that the posterior distributions $p(\mathbf z\mid y)$ vary smoothly across different tokens. This is intuitive in practice, as individual tokens are typically associated with only a small subset of latent concepts, while each latent concept may correspond to many different tokens, leading to limited variation in $p(\mathbf z\mid y)$~\citep{liu2025predict}.

\begin{center}
\begin{bluebox}{}
\faLightbulb \ \textbf{Inspiration:}
Eq.~\ref{eq:mix} in Theorem~\ref{theory} implies that LLM representations, learned through the next-token prediction framework, are essentially a linear mixture of the log-posteriors of individual latent concepts given the input context, that is, the $\log p(z_i|\mathbf{x})$. This provides a theoretical foundation for exploring individual concepts by linearly unmixing the representations, which motivates the development of our \textit{ConCA} in Sec.~\ref{sec: cca}.
\end{bluebox}
\end{center}


\section{ConCA: A Principled Approach for Concept Extraction in LLMs}
\label{sec: cca}
Grounded in Theorem~\ref{theory}, we now introduce \textit{ConCA}, a principled approach that recovers the \textit{log posteriors} of individual latent concepts conditional on the input context, i.e., $\log p(z_i|\mathbf{x})$, by inverting the linear mixture in Eq.~\ref{eq:mix}, in a unsupervised way. This enables us to decompose LLM representations $\mathbf{f}(\mathbf{x})$ into interpretable concept-level components. 

\subsection{Challenges in Designing ConCA}
Recovering \(\log p(z_i|\mathbf{x})\) from \(\mathbf{f}(\mathbf{x})\) presents two main challenges:
\begin{center}
\begin{redbox}{}
\faExclamationTriangle \ \textbf{Key Challenges:}
$\graycircled{1}$ \textbf{Ill-posed inverse problem:} The inversion of Eq.~\ref{eq:mix} is inherently ill-posed because there exist many possible solutions that produce the same LLM representations. Without additional constraints or regularization, the solution space is large and ambiguous, leading to non-unique decompositions. $\graycircled{2}$ \textbf{Underdetermined problem and overfitting:} The issue above is particularly critical when the dimension of the latent concept space exceeds that of the observed representations (i.e., \(\ell > m\)), corresponding to the well-known underdetermined case. This occurs because, from the model’s perspective, the number of degrees of freedom to be estimated increases, which not only significantly expands the set of possible solutions but also amplifies the risk of overfitting.
\end{redbox}
\end{center}

To address challenge $\graycircled{1}$, we impose a sparsity prior, requiring that for each context $\mathbf{x}$, only a small subset of latent concepts $\mathbf{z}$ is activated. We refer to this variant as \textit{sparse ConCA}. This inductive bias is motivated by two considerations: (i) it aligns with the superposition hypothesis, a widely discussed phenomenon in the study of SAEs, and (ii) sparsity ensures the identifiability of latent factors under certain assumptions, i.e., the individual log-probabilities of latent concepts can be uniquely recovered, as established in the theory of sparse dictionary learning \citep{elad2002generalized, gribonval2010dictionary, spielman2012exact, arora2014new}. Importantly, we highlight the following: 
\begin{center}
\begin{bluebox}{}
\faStar \ \textbf{Highlight:} Unlike SAEs, which recover concepts loosely, like a blurry sketch, sparse ConCA recovers a clearly defined concepts, i.e., the log-posteriors $\log p(z_i|\mathbf{x})$. 
\end{bluebox}
\end{center}
This key distinction fundamentally changes how sparsity should be interpreted and enforced. 
In SAEs and sparse coding, sparsity is directly imposed on the latent feature space, where values near zero correspond to inactive features. 
In ConCA, however, the situation is inverted. Specifically, the latent feature space in ConCA corresponds to the log-posteriors, i.e., $\log p(z_i|\mathbf{x})$. 
A value of $\log p(z_i|\mathbf{x}) = 0$ corresponds to $p(z_i|\mathbf{x}) = 1$, meaning the concept $z_i$ is \emph{fully active} rather than inactive. 
Consequently, sparsity in ConCA should be enforced in the exponential form of the latent feature space, i.e., the posteriors $p(z_i|\mathbf{x})$, ensuring that only a small subset of concepts is truly active.

\begin{table*}[h]
\centering
\small
\begin{tabularx}{\linewidth}{X X X}
\toprule
\textbf{Aspect} & \textbf{SAEs} & \textbf{ConCA} \\
\midrule
\textbf{Theoretical grounding} 
& \makecell[l]{(i) Linear representation hypothesis \\ (ii) Superposition hypothesis} 
& Theorem~\ref{theory} \\
\addlinespace[0.3em]
\textbf{Objective} 
& Recover monosemantic features 
& Recover $\log p(z_i|\mathbf{x})$ \\
\addlinespace[0.3em]
\textbf{Architecture} 
& \makecell[l]{Encoder: linear + nonlinear activation \\Decoder: linear}
& \makecell[l]{Encoder: linear +  module for overfitting \\ Decoder: linear}\\
\textbf{Role of sparsity} 
& On feature space 
& On exp-transformed feature space \\
\addlinespace[0.3em]
\textbf{Evaluation criterion} 
& Heuristic, lacks principled metric 
& Theoretically motivated (Sec.~\ref{sec: exp}) \\
\bottomrule
\end{tabularx}
\caption{Comparison between SAEs and the proposed ConCA. ConCA provides a principled, theoretically grounded framework for disentangling LLM representations, while SAEs are largely motivated by empirical hypotheses.}
\label{tab:conca-vs-sae}
\vskip -0.2in
\end{table*}

Despite its advantages, sparsity alone does not fully resolve the challenge of overfitting, particularly in underdetermined settings where the dimensionality of $\mathbf{z}$ is much higher than that of $\mathbf{f}$, as mentioned in challenge $\graycircled{2}$. While the sparsity prior helps restrict the solution space, it does not guarantee generalizable or semantically meaningful decompositions in practical scenarios. Although theoretical results provide identifiability guarantees under ideal conditions, real-world challenges, such as limited data and optimization difficulties, often violate these conditions, leading to potential overfitting and less reliable concept recovery. In such settings, multiple sparse solutions may fit the observed representation equally well, and some may capture meaningless noise or non-semantic patterns rather than true underlying concepts. Therefore, techniques to mitigate overfitting may be both useful and even necessary in real applications.

\subsection{Sparse ConCA: Architecture and Training Objective}

According to the analysis above, we propose sparse ConCA as follows:
\begin{align}
    \hat{\mathbf{z}} = {\mathcal{R}}(\mathbf{W}_e \mathbf{f}(\mathbf{x})  + \mathbf{b}_e), \qquad
    \mathbf{\hat f}(\mathbf{x}) = \mathbf{W}_d \hat{\mathbf{z}} + \mathbf{b}_d.
    \label{eq: cca}
\end{align}
This is a typical autoencoder architecture, where $\mathcal{R}(\cdot)$ denotes a general regularization module applied to mitigate overfitting, including but not limited to Dropout, and LayerNorm, as the analysis above to address the challenge $\graycircled{2}$. $\mathbf{W}_e$ and $\mathbf{W}_d$ are learnable weight matrices of the encoder and decoder, respectively, and $\mathbf{b}_e$, $\mathbf{b}_d$ are the corresponding biases. The vector $\hat{\mathbf{z}}$ corresponds to an estimate of $[\log p(z_i \mid \mathbf{x})]_{z_i}$. Let the set of all learnable parameters be $\Theta = \{\mathbf{W}_e, \mathbf{b}_e, \mathbf{W}_d, \mathbf{b}_d\}$. We train the proposed sparse ConCA by minimizing the following objective with respect to $\Theta$:
\begin{equation}
\min_{\Theta} \quad \mathbb{E}_{\mathbf{x}} \left[ ||\hat{\mathbf{f}}(\mathbf{x}) - \mathbf{f}(\mathbf{x}) ||_2^2 + \alpha  \mathcal{S}(\boldsymbol{\mathbf{g}}(\hat{\mathbf{z}})) \right],
\label{eq:scca}
\end{equation}
where we apply $\boldsymbol{\mathbf{g}}(\cdot)$ to the representations $\hat{\mathbf{z}}$ (corresponding to log-posterior in theory), to map them back into the probability domain, where sparse activation patterns can be meaningfully enforced, as motivated by the analysis above to address challenge $\graycircled{1}$. Ideally, the exact $\exp(\cdot)$ function would be optimal, but it is prone to numerical instability and exploding gradients when $\hat{\mathbf{z}}$ takes large values. Therefore, we employ a smooth surrogate in practice, see Sec.~\ref{sec: exp} for further implementation details. This regularization function $\mathcal{S}(\cdot)$ is then applied so as to encourage sparsity on $\boldsymbol{\mathbf{g}}(\hat{\mathbf{z}})$. This can be implemented using standard sparsity constraints such as $L_1$ regularization or structured sparsity variants \footnote{We emphasize that sparsity is a design choice, other forms of regularization are potentially applicable. For instance, non-negativity \citep{lee1999learning,hoyer2004non} or bounded-range constraints \citep{cruces2010bounded,erdogan2013class}, given that the learned features are expected to correspond to probabilities.}. The hyperparameter $\alpha$ controls the trade-off between reconstruction fidelity and sparsity, allowing the model to be tuned to the expected degree of sparsity.

The key distinctions between our proposed ConCA framework and SAEs are summarized in Table~\ref{tab:conca-vs-sae}. In particular, ConCA challenges two empirically common design choices in SAE-style methods:

\begin{center}
\begin{bluebox}{}
\faStar \ {Nonlinear activation functions (e.g., ReLU) in SAE-style encoders may be unnecessary.}

\faStar \ {Sparsity should be imposed on the exponentially transformed representation space, rather than raw one.}
\end{bluebox}
\end{center}

\section{Experiments}\label{sec: exp}
We train the proposed sparse ConCA on a subset of the Pile (the first 200 million tokens) \citep{gao2020pile}. The regularization function $\mathcal{R}(\cdot)$ is implemented using 4 normalization strategies, including \texttt{LayerNorm} \citep{ba2016layer}, \texttt{Dropout} \citep{srivastava2014dropout}, \texttt{BatchNorm} \citep{ioffe2015batch}, and \texttt{GroupNorm} \citep{wu2018group}. For the function $\boldsymbol{\mathbf{g}}(\cdot)$, not exponential function directly, we explore the exponential with 3 different activation functions, \texttt{SELU} \citep{klambauer2017self}, \texttt{SoftPlus} \citep{dugas2000incorporating}, and \texttt{ELU} \citep{clevert2015fast}. Although they are not exact exponentials, these functions preserve exponential-like behavior for small (i.e., negative) values, ensure numerical and gradient stability, and provide smooth surrogates suitable for applying sparsity regularization. In total, we implement 12 sparse ConCA variants across these configurations. Sparsity, i.e., $\mathcal{S}(\cdot)$, is primarily enforced via $L_1$ normalization in this work, other choices remain flexible. To evaluate the effect of model scale, we use representations from Pythia models of varying sizes: 70M, 1.4B, and 2.8B \citet{biderman2023pythia}. To assess model generalization, we also test across different architectures, including Pythia-1.4B, Gemma3-1b \citep{team2025gemma}, and Qwen3-1.7B \citep{qwen3technicalreport}. We compare the proposed sparse ConCA with various SAE variants, including top-\(k\) SAE \citep{gao2025scaling}, batch-top-\(k\) SAE \citep{bussmann2024batchtopk}, \(p\)-annealing SAE \citep{karvonen2024measuring}.

\begin{figure}[h]
\includegraphics[width=\linewidth]
{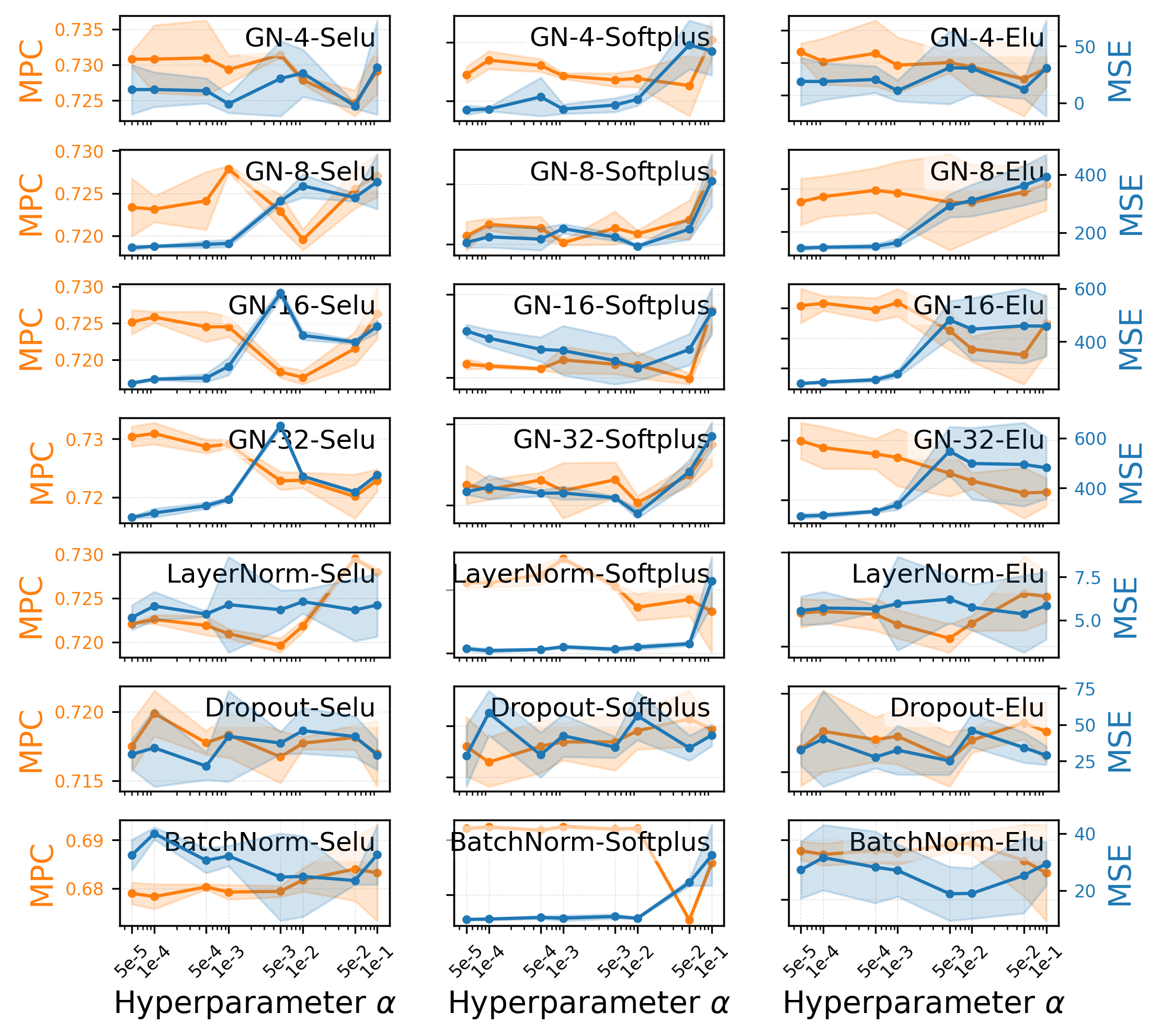}
\centering
\caption{Ablation study of 4 different normalization methods and 3 activation functions. For \texttt{GroupNorm}, the number of groups (\texttt{num\_groups}) is set to 4, 8, 16, or 32. Left axis shows Mean of Pearson Correlation (MPC), right axis shows MSE. Each subplot corresponds to one combination of normalization method and activation function, with each configuration run three times. {Across all configurations, ConCA exhibits a remarkably stable correlation regime (MPC $\approx$ 0.72–0.74, excluding BatchNorm), with performance largely insensitive to the sparsity level. Notably, the MSE is affected by the number of groups used in GroupNorm, i.e., fewer groups lead to lower MSE. Overall, LayerNorm emerges as a strong choice, offering consistently good performance in both MPC and MSE.}}\label{fig:ablation_summary}
\vskip -0.2in
\end{figure}

We evaluate sparse ConCA using two metrics designed to assess both faithfulness and interpretability:
\begin{itemize}[leftmargin=*, itemsep=0.em, parsep=0.em]
    \item \textbf{Reconstruction loss} captures how well the original LLM representations are preserved after decomposition. Since our goal is to reveal the internal structure of the model without altering its behavior, low reconstruction loss is essential to ensure that concept extraction introduces minimal distortion. Specifically, we use mean squared error (MSE) as our reconstruction loss metric.
\item \textbf{Pearson correlation} quantifies how well 
the ConCA-extracted features align with theoretically consistent supervised estimates of $\log p(z_i|\mathbf{x})$ for each latent concept $z_i$. 
Specifically, for each latent concept $z_i$, we construct counterfactual pairs that differ only in the value of $z_i$ while keeping all other variables unchanged, and train a linear classifier to predict this difference, yielding a supervised estimate of $\log p(z_i|\mathbf{x})$. This estimator is theoretically motivated, see Sec.~\ref{app:sec:linear}\footnote{We note that \citet{liu2025predict} provide a similar approach, but our result is derived from Theorem~\ref{theory}.}. 
We then compute the Pearson Correlation (PC) between this supervised estimate of $\log p(z_i|\mathbf{x})$ and the unsupervised ConCA feature. Higher correlation indicates more accurate recovery.
\end{itemize}
To compute PC, we require counterfactual text pairs as mentioned above. However, constructing such counterfactual pairs is highly challenging due to the complexity and subtlety of natural language, as noted in prior works \citep{park2023linear,jiang2024origins}, and remains non-trivial even for human annotators. For our evaluation, we adopt 27 counterfactual pairs from \citet{park2023linear}, each differing in a single concept, as testing dataset. These pairs were derived from the Big Analogy Test dataset \citep{gladkova2016analogy}.

\paragraph{Ablation Study}
We first conduct an ablation study 
over normalization strategies, activation functions, and sparsity strength as mentioned above, to understand the design choices of sparse ConCA. In total, this yields 21 configurations (For \texttt{GroupNorm}, the number of groups (\texttt{num\_groups}) is set to 4, 8, 16, and 32, respectively). Each configuration is trained with varying sparsity coefficients 
$\alpha \in \{1e^{-1}, 5e^{-2}, 1e^{-2}, \ldots, 5e^{-5}\}$, 
and every experiment is repeated three times with training on Pythia-70M. We report results on the two key evaluation metrics as mentioned, i.e., reconstruction loss and Pearson correlation. Both metrics are summarized in Figure~\ref{fig:ablation_summary}, where the left $y$-axis shows correlation and the right $y$-axis shows reconstruction loss. 

\begin{figure*}[h]
    \centering
\includegraphics[width=0.8\linewidth]{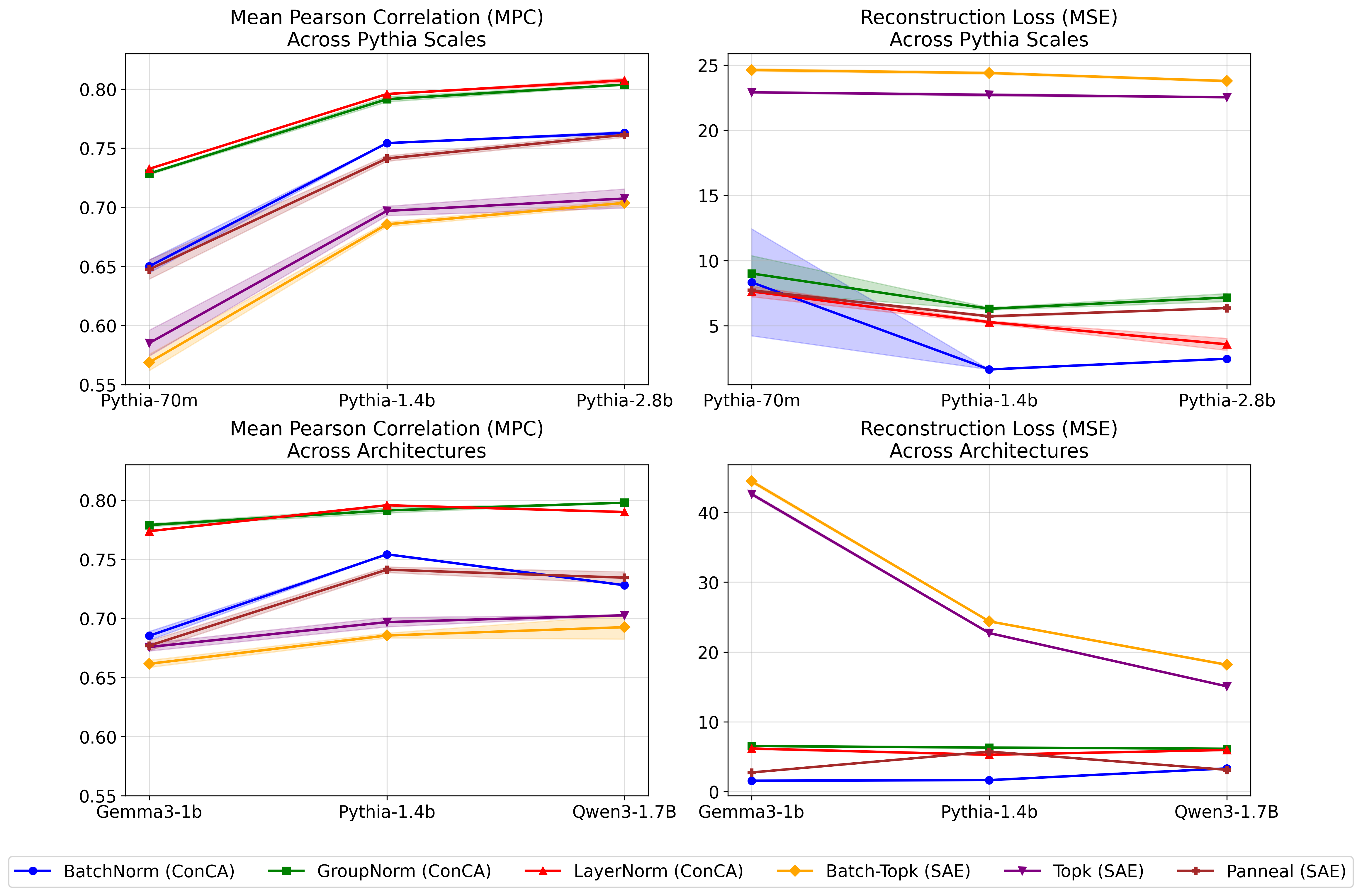}
\caption{Comparison of SAE variants and the proposed ConCA variant across different scales and architectures. The left two shows the results for Pythia family with varying sizes (70m, 1.4b, 2.8b), while the right compares different architectures across multiple models (Gemma-3-1b, Pythia-1.4b, Qwen3-1.7b). Pearson correlation (left axis) and MSE (right axis) are reported for each method. ConCA variants (BatchNorm, GroupNorm, LayerNorm), overall, achieve higher MPC than SAE baselines (Top-k, Batch-Top-k, Panneal), with LayerNorm-ConCA performing best across all settings (approximately 0.70–0.80). SAE methods remain in a lower band (approximately 0.60–0.70) and show weaker gains with model scale. Reconstruction error (MSE) varies substantially across methods: only Panneal (SAE) obtain lower MSE, whereas ConCA maintains strong both MPC and MSE. Overall, the figure highlights that ConCA more reliably extracts concepts, robust across model size and architecture. Full numerical mean and std values, see Sec.~\ref{sec:detailsvalue}.}
    \label{fig:joint}
    \vskip -0.2in
\end{figure*} 

\textbf{Findings.} 
Three main observations emerge:  
\begin{itemize}[leftmargin=*, itemsep=0.em, parsep=0.em]
    \item \textbf{Normalization.} The primary role of normalization methods in our framework is to mitigate overfitting. For \texttt{GroupNorm}, increasing the number of groups (\texttt{num\_groups}) tends to result in slightly higher reconstruction loss. Conversely, \texttt{LayerNorm} achieves the lowest reconstruction loss among the considered methods (\texttt{Dropout}, \texttt{BatchNorm}). This trend suggests that full-feature normalization, as performed by \texttt{LayerNorm}, better preserves the overall structure of LLM representations. This may be because, \texttt{LayerNorm} stabilizes per-sample activations and preserves global feature correlations, which helps ConCA recover concept log-posteriors more faithfully and generalize better than the others.
\item \textbf{Activation.} The purpose of the activation functions is to serve as a surrogate for the exact $\exp(\cdot)$ function, enabling more effective enforcement of sparsity. Roughly, for \texttt{GroupNorm}, all three activations (\texttt{SELU}, \texttt{ELU}, and \texttt{SoftPlus}) perform similarly, with Pearson correlation values around 0.725–0.73. In the context of \texttt{LayerNorm} and \texttt{Dropout}, \texttt{SoftPlus} appears slightly better than the other two, whereas for \texttt{BatchNorm}, \texttt{SELU} seems slightly better.
\item \textbf{Sparsity.} The sparsity coefficient $\alpha$ controls a trade-off: too large a value may introduce excessive information loss, while too small a value fails to induce meaningful structure. Overall, across the range $[5e^{-3}, \dots,  1e^{-2}]$, the performance in both reconstruction loss and Pearson correlation remains relatively stable.
  \vskip -0.2in
\end{itemize}

\textbf{Takeaway.} The ablation study demonstrates that careful choices of normalization and activation functions significantly improve the performance of sparse ConCA. In the following experiments, considering both reconstructuion loss and Pearson Correlation, we focus on the most promising configurations: \texttt{GroupNorm} with \texttt{num\_groups= 4} and \texttt{Softplus}, \texttt{LayerNorm} with \texttt{Softplus}, 
\texttt{BatchNorm} with \texttt{Softplus}, labeled as Groupnorm, LayerNorm, and BatchNorm in the following, respectively. For all of these, the sparsity hyperparameter, we set $\alpha = 1e^{-4}$. These design choices are consistent across repeated trials, highlighting the stability of the proposed sparse ConCA.

\begin{figure*}[h]
\centering
\includegraphics[width=0.3\linewidth]{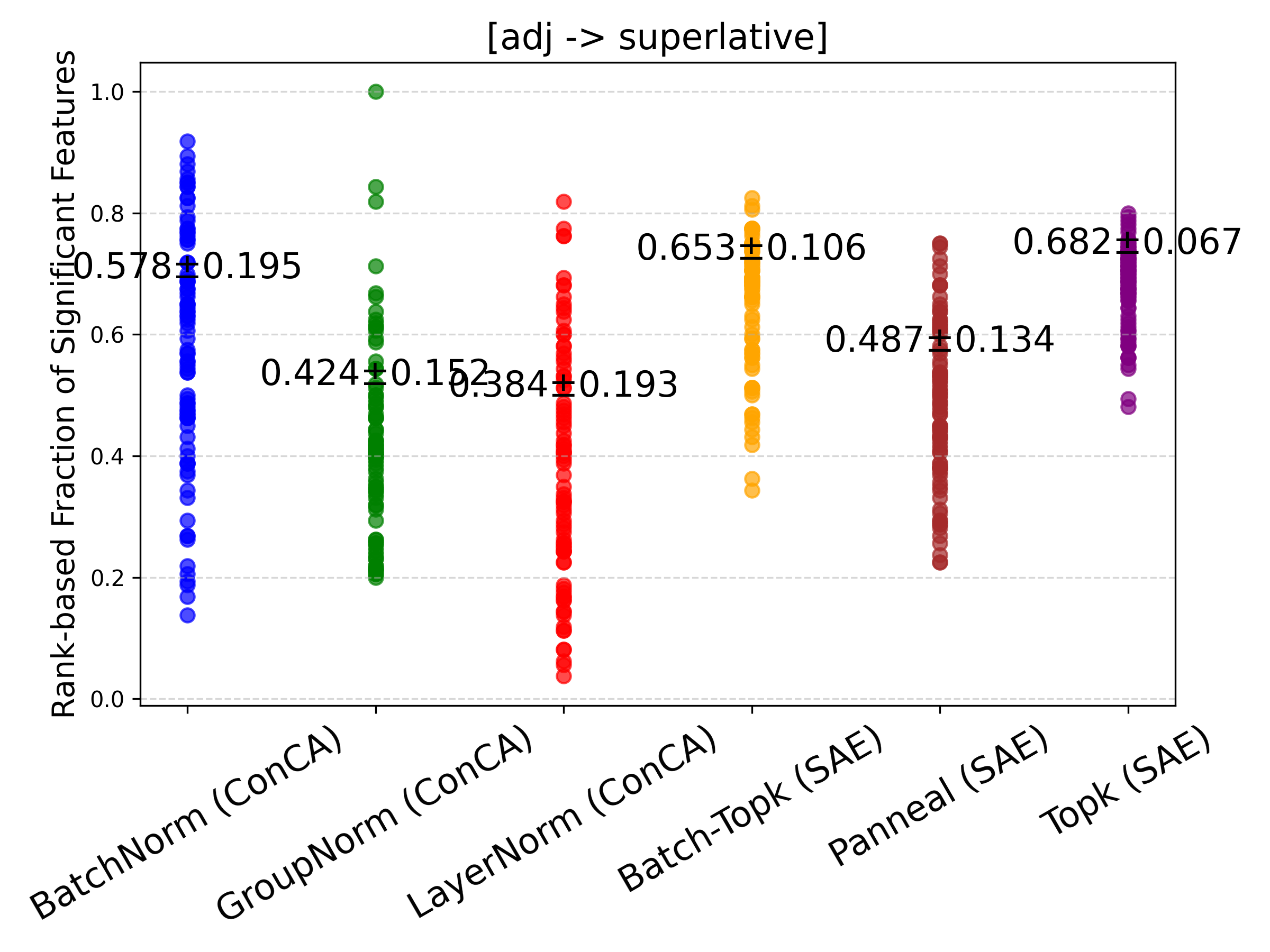}
\includegraphics[width=0.3\linewidth]{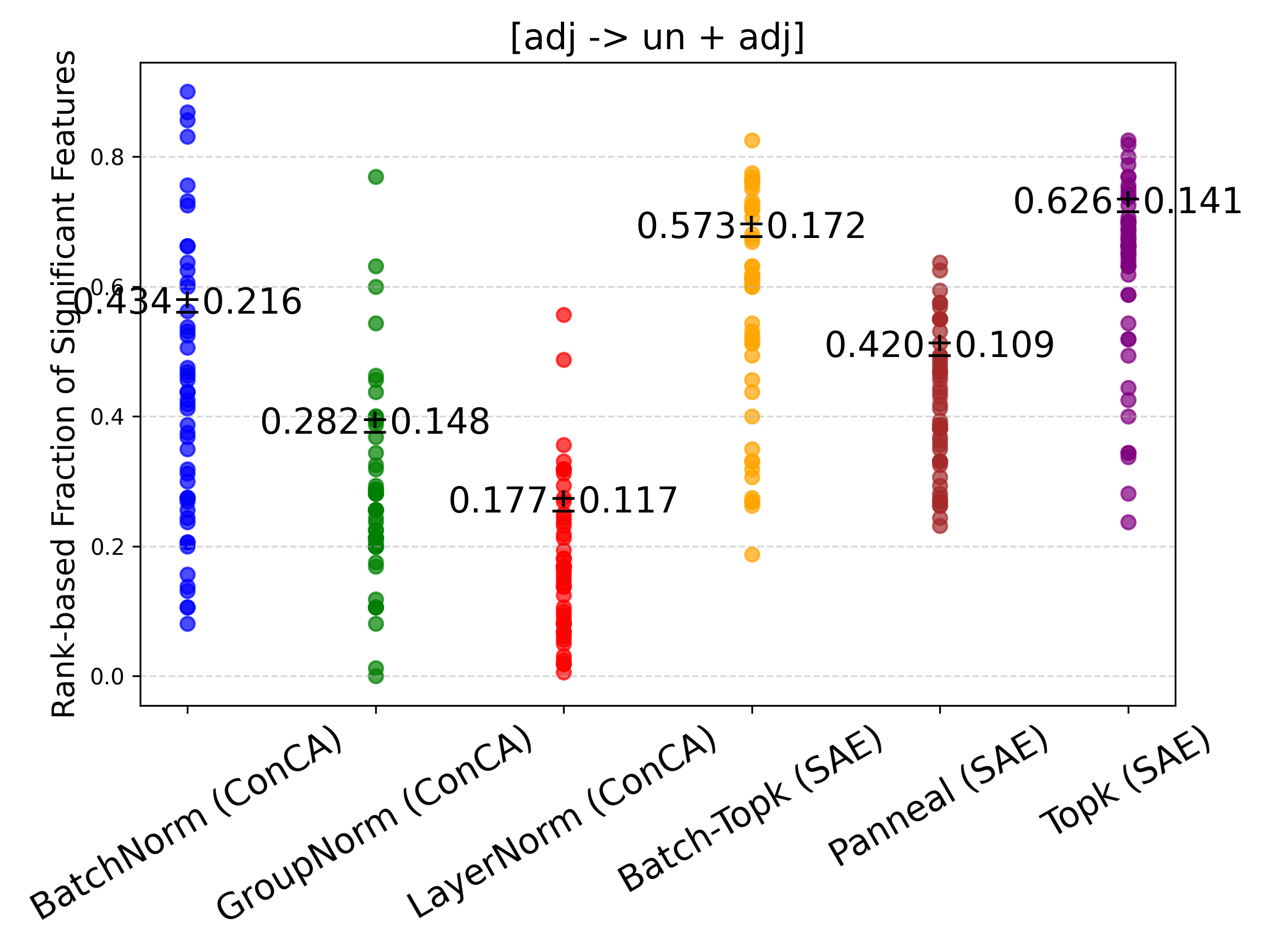}
\includegraphics[width=0.3\linewidth]{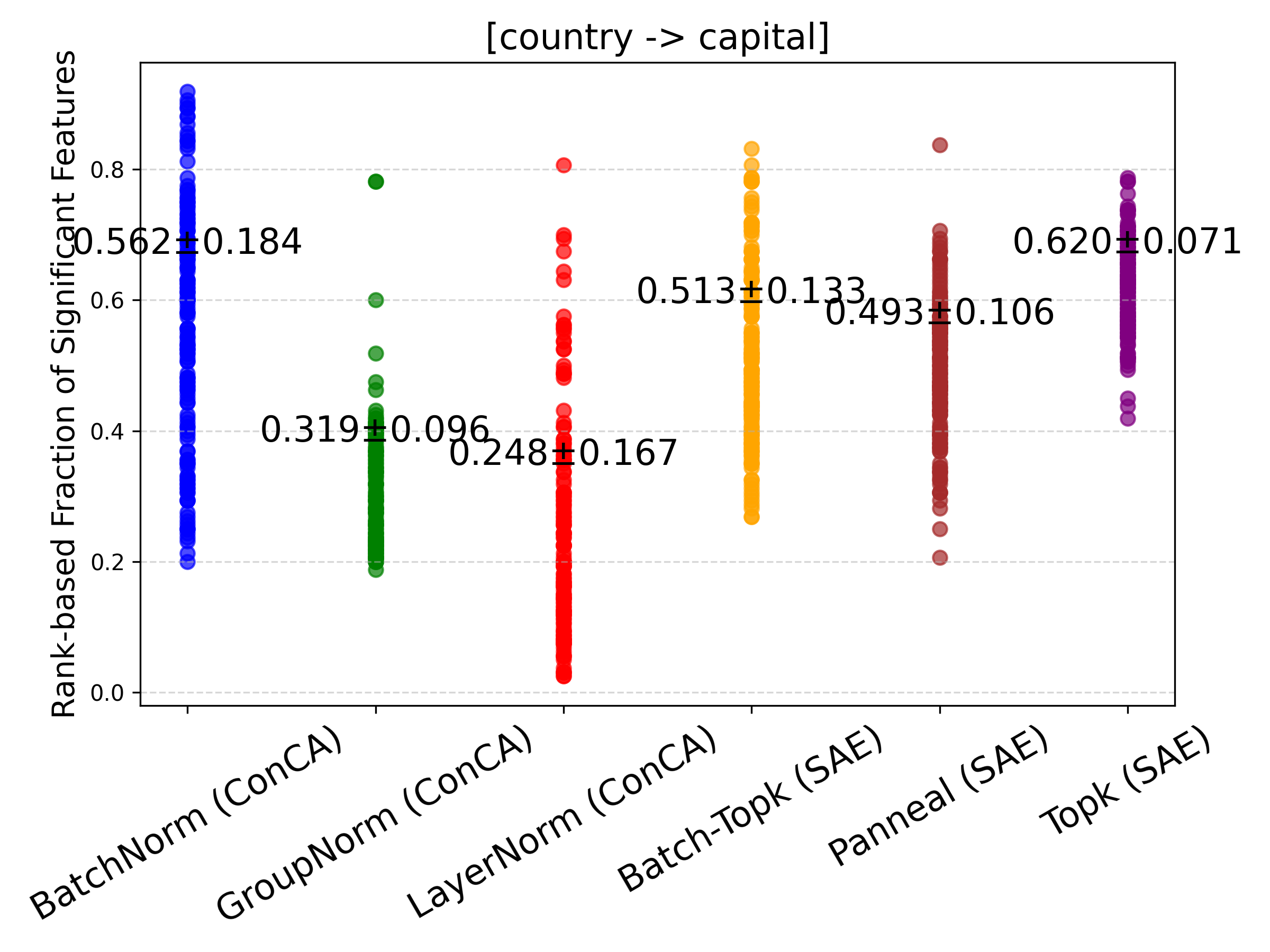}
\caption{Rank-based fraction of features exhibiting significant changes between counterfactual pairs for SAE and ConCA variants. ConCA shows smaller feature variations, indicating more stable feature responses under counterfactual pairs.}
\label{fig:visual4-6}
\vskip -0.2in
\end{figure*} 

\textbf{Comparison on Counterfactual Pairs.} We next conduct experiments comparing various SAE variants, including including Topk SAE \citep{gao2025scaling}, Batch‑Topk SAE \citep{bussmann2024batchtopk}, P-anneal SAE \citep{karvonen2024measuring}, and the proposed ConCA configurations mentioned in \textbf{Takeaway} above. We conduct these experiments across different scales of the Pythia family to evaluate the scalability and effectiveness of each method. Each method is run across multiple random seeds to ensure robustness, and we present both the mean and standard deviation of the metrics. The left in Figure \ref{fig:joint} highlights how the proposed ConCA configurations consistently achieve higher Pearson correlation while maintaining competitive reconstruction loss compared to the SAE variants, as model size increases from Pythia-70m to Pythia-2.8b. This performance advantage is mainly due to the theoretical grounding of ConCA. Specifically, ConCA employs a principled framework with sparsity on the exponentiated feature space, whereas SAEs rely on heuristic assumptions, resulting in more interpretable and accurate monosemantic concepts across Pythia scales. Notably, when considering both MSE and Pearson correlation, the LayerNorm configuration emerges as the better choice. The advantages of ConCA are further highlighted across different model families, including Gemma-3-1b, Pythia-1.4b, and Qwen3-1.7B, as shown in the right of Figure \ref{fig:joint}. ConCA configurations generally outperform SAE variants in both reconstruction and Pearson correlation, demonstrating the robustness and broad applicability of ConCA. Figure~\ref{fig:visual4-6} shows the rank-based fraction of features that change significantly between counterfactual pairs, indicating that ConCA produces smaller feature variations than SAE variants under counterfactual conditions. See Sec.~\ref{app: visual} for details on how this metric is computed and for additional visualization results.

\begin{figure}[h]
    \centering
\includegraphics[width=\linewidth]{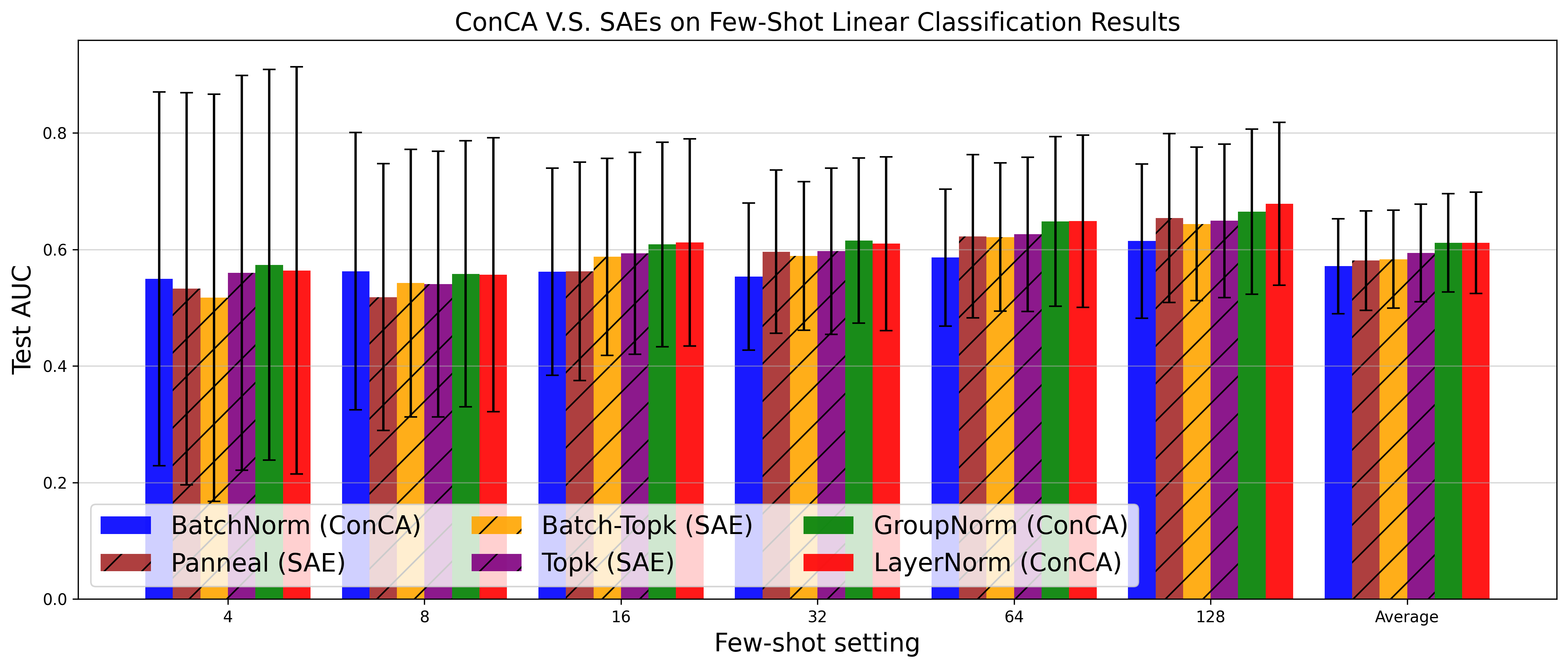}
\includegraphics[width=\linewidth]{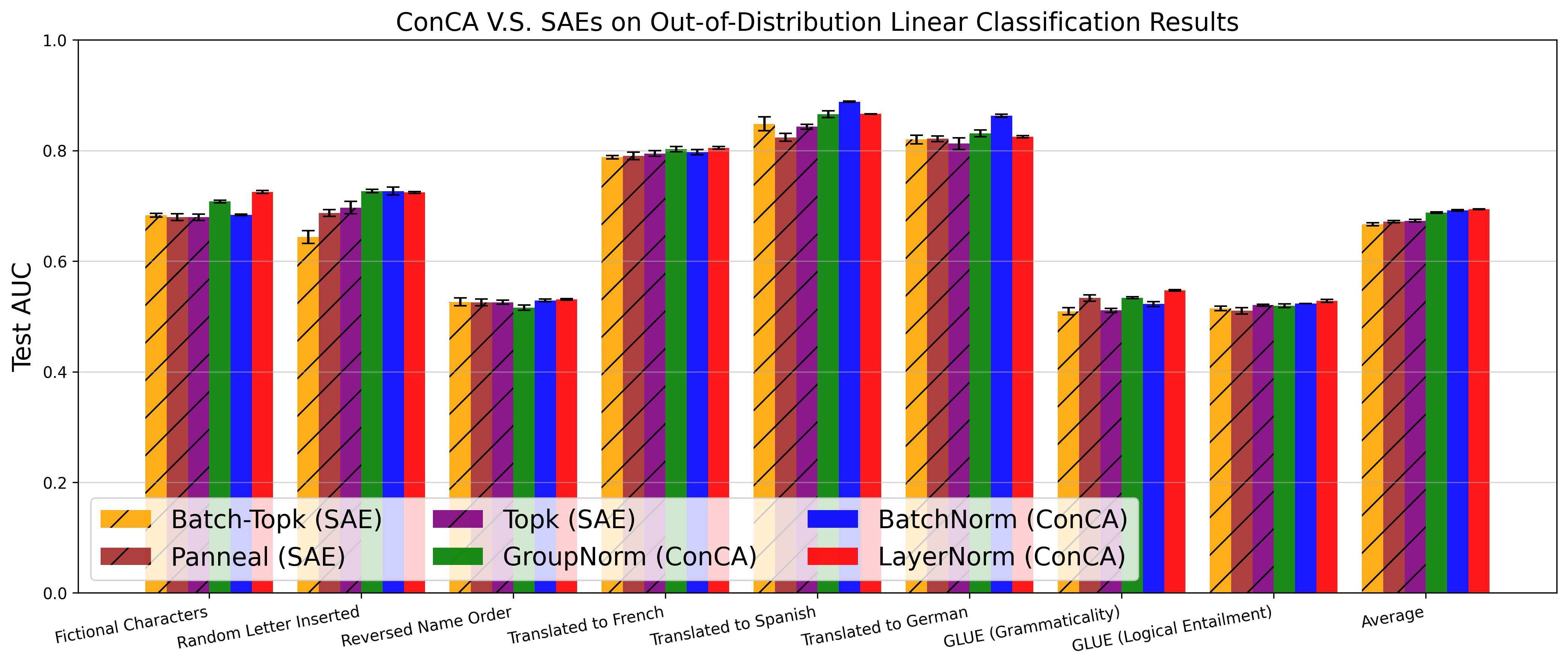} 
\caption{Test AUC of SAE variants and the proposed ConCA variants under different few-shot settings across 113 datasets (Top), and out-of-distribution tasks across 8 datasets (Bottom), respectively. An example of visualization can be found in Sec.~\ref{app: visual_class}.}
\label{fig:few_shot_sae}
\vskip -0.2in
\end{figure}

\paragraph{Downstream Tasks} In the final stage, we conduct a series of few-shot linear probing experiments to evaluate how well the features extracted by SAEs and ConCA capture monosemantic, human-interpretable concepts. This evaluation is particularly relevant because disentangled representations tend to transfer easily and robustly, making them especially suitable for few-shot learning and out-of-distribution shift tasks \citep{fumero2023leveraging}. 

To this end, we collect 113 binary classification datasets from \citet{kantamneni2025sparse} and use them to train linear classifiers on features extracted by SAEs and ConCA variants under limited training samples, specifically 4, 8, 16, 32, and 128 samples drawn randomly. After training, we evaluate the linear classifiers and report the Area Under the Receiver Operating Characteristic Curve (AUC). {The left panel of Figure 5 shows a trend where ConCA often achieves higher AUC than SAE variants in few-shot settings, particularly for LayerNorm and GroupNorm variants, although the differences are not statistically significant under the current sample size.}

Furthermore, we extend our evaluation to 8 \emph{out-of-distribution} (OOD) datasets from \citet{kantamneni2025sparse}, which are designed to test robustness under distributional shifts. These datasets include fictional character substitution, random letter insertion, name order reversal, multilingual translation perturbations, as well as OOD splits from GLUE-X. As shown in the right in Figure \ref{fig:few_shot_sae}, ConCA consistently achieves superior performance across nearly all OOD settings, indicating that the representations it learns generalize more robustly under distributional shifts.

The improvements above are likely attributable to ConCA’s principled framework, grounded in Theorem~\ref{theory}, which motivates theorem-driven method design by enforcing sparsity in the exponentiated feature space and leveraging normalization to avoid overfitting, thereby yielding transferable features under both few-shot and OOD scenarios.

\section{Conclusion}
\label{sec:conclusion}
Observing the lack of a clear theoretical understanding behind SAEs motivated us to formalize the relationship between LLM representations and human-interpretable concepts. We showed that, under mild assumptions, LLM representations can be approximated as linear mixtures of the log-posteriors of latent concepts. Building on this insight, we introduced ConCA, including a sparse variant, to recover these concept posteriors in an unsupervised manner. Empirical results across multiple models and benchmarks demonstrate that ConCA extracts features that outperform SAE variants in both faithfulness and utility. Looking forward, our framework opens the door to principled analysis, manipulation, and evaluation of LLM representations, as well as exploration of alternative regularization strategies to further enhance interpretability.

\newpage
\section*{Impact Statement}

This paper presents work whose goal is to advance the field of machine learning. There are many potential societal consequences of our work, none of which we feel must be specifically highlighted here.

\bibliography{icml2026}
\bibliographystyle{icml2026}


\newpage

\appendix
\onecolumn
\part{Appendix} 

\parttoc         %

\newpage
\section{Related Work}
\label{app: rw}
\paragraph{Sparse Autoencoders and Dictionary Learning}
The proposed ConCA framework is closely related to Sparse Autoencoders (SAEs) \citep{rajamanoharan2024improving, rajamanoharan2024jumping, gao2025scaling, braun2024identifying, bricken2023monosemanticity,huben2023sparse,gao2025scaling,mudide2024efficient,chanin2024absorption,lieberum2024gemma,he2024llama,karvonen2024measuring,bussmann2024batchtopk}, as both aim to extract and monosemantic human-interpretable concepts from LLM representations in order to provide mechanistic explanations for their success. However, the two approaches differ fundamentally in their theoretical foundations. ConCA is grounded in a rigorous theoretical framework, as established in Theorem~\ref{theory}, while SAEs rely on assumptions such as the linear representation hypothesis and the superposition hypothesis. This foundational difference leads to notable divergences in both method design and evaluation protocols, as discussed in the Introduction. In addition, our work is also closely connected to the well-established framework of dictionary learning \citep{dumitrescu2018dictionary, eggert2004sparse, elad2010sparse, elad2002generalized, aharon2006k, arora2015simple}. Specifically, this work bridges next-token prediction framework and dictionary learning by showing that LLM representations acquired through the next-token prediction framework can be further meaningfully decomposed.

\paragraph{Causal Representation Learning}
This work is also related to causal representation learning \citep{scholkopf2021toward}, which seeks to identify latent causal variables from observational data \citep{brehmer2022weakly, von2021self, massidda2023causal, von2024nonparametric, ahuja2023interventional, seigal2022linear, shen2022weakly, liu2022identifying, buchholz2023learning, varici2023score, liuidentifiable, liu2022identifying1, liu2024identifiable, liu2024beyond, hyvarinen2016unsupervised, hyvarinen2019nonlinear, khemakhem2020variational,cai2025value,rajendran2024learning}. Most of those works focus on continuous latent and observed variables, we explore the setting of discrete variables. A subset of studies has investigated causal representation learning in discrete spaces \citep{gu2023bayesian, kong2024learning, kivva2021learning}, but these typically assume specific graphical structures and rely on invertible mappings from latent to observed variables. In contrast, our approach does not require such assumptions, offering greater flexibility.

\paragraph{Identifiability Analysis for LLMs}
Several prior studies \citep{marconato2024all, roeder2021linear} have explored identifiability within the inference space, revealing alignments between representations obtained from distinct inference models. However, these findings remain confined to the inference space and do not extend to identifying the true latent variables in latent variable models.
More recently, \citet{jiang2024origins} examined the emergence of linear structures under a different generative framework, attributing them to the implicit bias introduced by gradient-based optimization. In contrast, our approach offers a theoretical explanation rooted in identifiability theory, directly linking the observed linear patterns to the ground-truth latent structure. This shift in perspective provides a deeper and more principled understanding of the underlying mechanisms.

\paragraph{Concept Discovery}
Concept discovery aims to extract human-interpretable concepts from pre-trained models, and has emerged as a key area within machine learning \citep{schut2023bridging,yang2023language,marconato2023interpretability,oikarinen2023label,koh2020concept,schwalbe2022concept,poeta2023concept,taeb2022provable}. While empirical methods have flourished, theoretical understanding of when and how such concepts can be reliably identified remains limited. In contrast, the proposed ConCA is grounded in rigorous theoretical results. The work of \citet{leemann2023post} investigates concept identifiability under the assumption that the non-linear mapping is known a priori. In contrast, our results establish identifiability guarantees without requiring such prior knowledge. A recent advance by \citet{rajendran2024learning} offers formal identifiability results for continuous latent concepts under a likelihood-matching framework, while our work focuses on discrete concepts, and approaches the problem from a different angle, i.e., rooted in the next-token prediction paradigm, which underpins modern LLMs training. This shift in both focus and framework allows us to derive new identifiability for the discrete setting.

\newpage
\section{Limitations and Discussion}
\label{app: lim}
\paragraph{Theoretical limitations.} Our theoretical analysis in Theorem~\ref{theory} relies on several assumptions, including the Diversity Condition~\ref{itm:diversity} and the Informational Sufficiency Condition~\ref{itm:injective}. While we provide justifications and argue that these assumptions are likely mild in practice, they represent idealized conditions that may not hold exactly in real-world datasets. Nonetheless, we believe they are reasonable: most have been introduced in prior work, and some, such as the Informational Sufficiency Condition~\ref{itm:injective}, are already considered relaxations in previous identifiability analyses within the causal representation learning community.

\paragraph{Methodological limitations.} In addition, sparse ConCA applies regularization to recover concept-level posteriors, but in practice we do not use the exponential function directly on \(\hat{\mathbf{z}} \approx [\log p(z_i|\mathbf{x})]_{z_i}\). Instead, we employ exponential-like activation functions, such as \texttt{SELU}, \texttt{SoftPlus}, and \texttt{ELU}, which approximate exponential behavior for small input values while ensuring numerical stability and smooth gradients. As a result, the sparsity prior only approximately reflects true posterior activation. Additionally, underdetermined settings (\(\ell > m\)) and deviations from theoretical assumptions can lead to multiple plausible solutions, some capturing noise rather than meaningful concepts. 

\paragraph{Discussion.} 
Despite the theoretical, methodological, and evaluation limitations discussed above, sparse ConCA provides a principled framework for understanding and decomposing LLM representations at the concept level. The approach highlights the potential of leveraging sparsity and structured priors to recover interpretable latent factors, even under underdetermined settings or approximate assumptions. Furthermore, our work emphasizes the importance of carefully designed evaluation frameworks, as concept-level recovery in natural language remains inherently challenging. We hope that these insights will motivate future research to incorporate additional probabilistic constraints, explore alternative regularization strategies, and develop larger and more diverse benchmarks to better evaluate and improve concept extraction methods in LLMs. Ultimately, sparse ConCA serves as a starting point for building more interpretable, reliable, and theoretically grounded tools for analyzing complex representations in LLM.

\newpage

\section{Comparison of the result in \citet{liu2025predict}}
\label{app: liu2025}
The theoretical result Theorem \ref{theory} in this work is totally different with that of \citet{liu2025predict}. For comparison, here we re-write the result in \citet{liu2025predict} as follows:
\begin{theorem}[\citet{liu2025predict}]
Suppose latent variables $\mathbf{z}$ and the observed variables $\mathbf{x}$ and $y$ follow the generative models defined in Eq.~\ref{eq:generative}, and assume that $\mathbf{z}$ takes values in a finite set of cardinality $k$. Assume the following holds:
\begin{itemize}
\item [\namedlabel{app:diversity} {(i)}] \textbf{(Diversity Condition 1)} There exist $\prod_i k_i+1$ values of $y$, so that the matrix $ \mathbf{ L} = \big( \mathbf{g}(y=y_1)-\mathbf{g}(y=y_0),...,\mathbf{g}(y=y_{k})-\mathbf{g}(y=y_0) \big)$ of size $\prod_i k_i \times \prod_i k_i$ is invertible,
\item [\namedlabel{app:diversity2} {(ii)}] \textbf{(Diversity Condition 2)} There exist $k+1$ distinct values of $y$, i.e., $y_0$,...,$y_{k}$, 
such that the matrix $ \hat{\mathbf{L}} = \big( [p(\mathbf{z}=\mathbf{z}_i|y=y_1)-p(\mathbf{z}=\mathbf{z}_i|y=y_0)]_{\mathbf{z}_i},...,[p(\mathbf{z}=\mathbf{z}_i|y=y_{k})-p(\mathbf{z}=\mathbf{z}_i|y=y_0)]_{\mathbf{z}_i} \big)$ of the size $k \times k$ is invertible
\item[\namedlabel{app:injective}{(iii)}] \textbf{(Approximate Invertibility Condition)} The mapping from $\mathbf{z}$ to $(\mathbf{x}, y)$ is approximately invertible in the sense that the posterior $p(\mathbf{z} \mid \mathbf{x}, y)$ is sharply peaked, i.e., there exists a most probable $\mathbf{z}^*$ such that $p(\mathbf{z} = \mathbf{z}^* \mid \mathbf{x}, y) \geq 1 - \epsilon$ for some $\epsilon \in [0,1)$ with $\epsilon \to 0$.

\end{itemize}
Then the true latent variables $\mathbf{z}$ are mathematically related to the representations in LLMs, i.e., $\mathbf{f}(\mathbf{x})$, which are learned through the next-token prediction framework, by the following relationship: 
\begin{equation}
\mathbf{f}(\mathbf{x}) \approx \mathbf{A} [\log {p(\mathbf{z}=\mathbf{z}_i|\mathbf{x})}]_{\mathbf{z}_i} + \mathbf{b}, 
\label{liu:eq}
\end{equation}
where $ \mathbf{A} = (\mathbf{\hat L}^{T})^{-1} \mathbf{L}$, and $\mathbf{b}$ is a bias vector.
\label{liu:theory}
\end{theorem}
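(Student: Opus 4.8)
The plan is to derive the claim by chaining two linear identities and eliminating the unknown next-token probabilities between them: one identity comes from the discriminative side, namely the softmax parametrisation of Eq.~\ref{eq: next-token}, and the other from the generative side, namely Eq.~\ref{eq:generative} combined with the approximate invertibility of Condition~\ref{app:injective}. As a preliminary remark, since Condition~\ref{app:diversity} forces $\mathbf{L}$ to be square of size $k=\prod_i k_i$ and invertible, it implicitly asks that the representation dimension equal $k$ (equivalently, one restricts attention to the span of the difference vectors $\mathbf{g}(y_j)-\mathbf{g}(y_0)$), which I would state up front. First I would fix a reference token $y_0$, take logarithms in Eq.~\ref{eq: next-token} at tokens $y_j$ and $y_0$, and subtract; the context-dependent but token-independent log-normaliser cancels, leaving $\mathbf{f}(\mathbf{x})^{\top}\bigl(\mathbf{g}(y_j)-\mathbf{g}(y_0)\bigr)=\log p(y_j\mid\mathbf{x})-\log p(y_0\mid\mathbf{x})$ for each $j$. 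Stacking these over the $k$ tokens of Condition~\ref{app:diversity} gives $\mathbf{L}^{\top}\mathbf{f}(\mathbf{x})=\bigl[\log p(y_j\mid\mathbf{x})-\log p(y_0\mid\mathbf{x})\bigr]_{j}$, and invertibility of $\mathbf{L}$ expresses $\mathbf{f}(\mathbf{x})$ exactly as a fixed linear image of the vector of token log-ratios.

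Next I would linearise these token log-ratios in the concept log-posteriors. The conditional independence $y\perp\mathbf{x}\mid\mathbf{z}$ built into Eq.~\ref{eq:generative} gives $p(y\mid\mathbf{x})=\sum_{\mathbf{z}}p(y\mid\mathbf{z})\,p(\mathbf{z}\mid\mathbf{x})$, and Bayes' rule gives $p(\mathbf{z}\mid\mathbf{x},y)=p(y\mid\mathbf{z})\,p(\mathbf{z}\mid\mathbf{x})/p(y\mid\mathbf{x})$. Evaluating the latter at the dominant value $\mathbf{z}^{*}$ from Condition~\ref{app:injective} and using $p(\mathbf{z}^{*}\mid\mathbf{x},y)\ge 1-\epsilon$ yields $\bigl|\log p(y\mid\mathbf{x})-\log p(y\mid\mathbf{z}^{*})-\log p(\mathbf{z}^{*}\mid\mathbf{x})\bigr|\le -\log(1-\epsilon)$. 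Because the posterior $p(\cdot\mid\mathbf{x},y)$ concentrates on $\mathbf{z}^{*}$, the term $\log p(\mathbf{z}^{*}\mid\mathbf{x})$ can be rewritten, up to an $O(\epsilon)$ error, as the inner product $\sum_{\mathbf{z}_i}p(\mathbf{z}_i\mid\mathbf{x},y)\,\log p(\mathbf{z}_i\mid\mathbf{x})$; after differencing against $y_0$ the purely token-level residuals $\log p(y_j\mid\mathbf{z}^{*})-\log p(y_0\mid\mathbf{z}^{*})$ are absorbed into a bias, giving $\log p(y_j\mid\mathbf{x})-\log p(y_0\mid\mathbf{x})\approx\sum_{\mathbf{z}_i}\bigl(p(\mathbf{z}_i\mid y_j)-p(\mathbf{z}_i\mid y_0)\bigr)\log p(\mathbf{z}_i\mid\mathbf{x})+c_j$. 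The coefficient vectors appearing here are precisely the columns of $\hat{\mathbf{L}}$ from Condition~\ref{app:diversity2}, so in matrix form the vector of token log-ratios equals $\hat{\mathbf{L}}^{\top}\bigl[\log p(\mathbf{z}=\mathbf{z}_i\mid\mathbf{x})\bigr]_{\mathbf{z}_i}$ plus a constant, up to $O(\epsilon)$.

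Substituting the second identity into the first, and using invertibility of both $\mathbf{L}$ (Condition~\ref{app:diversity}) and $\hat{\mathbf{L}}$ (Condition~\ref{app:diversity2}) with the two Diversity Conditions taken on the same $k+1$ tokens, yields $\mathbf{f}(\mathbf{x})\approx\mathbf{A}\,\bigl[\log p(\mathbf{z}=\mathbf{z}_i\mid\mathbf{x})\bigr]_{\mathbf{z}_i}+\mathbf{b}$, where $\mathbf{A}$ is a product of $\mathbf{L}$, $\hat{\mathbf{L}}$ and their inverses; carrying out the bookkeeping identifies it with $\mathbf{A}=(\hat{\mathbf{L}}^{\top})^{-1}\mathbf{L}$ as in Eq.~\ref{liu:eq}, with the total error governed by $\epsilon$ and hence vanishing in the stated limit $\epsilon\to 0$.

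The main obstacle lies in the second step: one must argue that, for the chosen tokens, both the residual term $\log p(y_j\mid\mathbf{z}^{*})$ and the coefficient vector $p(\cdot\mid\mathbf{x},y_j)$ are, to leading order in $\epsilon$, independent of $\mathbf{x}$ — so that the resulting affine map from concept log-posteriors to token log-ratios is genuinely context-free and $\hat{\mathbf{L}}$ can be defined through $p(\mathbf{z}\mid y)$ rather than $p(\mathbf{z}\mid\mathbf{x},y)$. This is where the (near-)disjoint-support structure that approximate invertibility imposes on the generative process has to be exploited, and where the cross terms introduced when the mode is replaced by the full posterior, and $p(\cdot\mid\mathbf{x},y_j)$ by $p(\cdot\mid y_j)$, must be bounded uniformly; getting these error terms to vanish as $\epsilon\to 0$ is the crux.
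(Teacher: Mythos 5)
Your overall skeleton is the same as the derivation the paper uses for its own Theorem~\ref{theory} (Appendix~E), which is the template this cited result follows: take log-ratios of the softmax in Eq.~\ref{eq: next-token} against a reference token $y_0$ so the partition function cancels, stack the $\mathbf{g}(y_j)-\mathbf{g}(y_0)$ differences into $\mathbf{L}$, express the token log-ratios as an $\mathbf{x}$-independent linear map of the concept log-posteriors with coefficients $p(\mathbf z\mid y_j)-p(\mathbf z\mid y_0)$ (the columns of $\hat{\mathbf L}$), and invert. Where you diverge is the middle step, and that is where the genuine gap lies. The paper never passes through the mode $\mathbf z^*$: it uses the exact identity $\log p(y\mid\mathbf x)=\mathbb E_{p(\mathbf z\mid y)}\bigl[\log\frac{p(y,\mathbf z\mid\mathbf x)}{p(\mathbf z\mid y,\mathbf x)}\bigr]$ (valid for \emph{any} averaging distribution, since the integrand is constant in $\mathbf z$), and deliberately chooses the $\mathbf x$-free distribution $p(\mathbf z\mid y)$. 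This makes the coefficients of $\log p(\mathbf z\mid\mathbf x)$ and the bias $b_y=\mathbb E_{p(\mathbf z\mid y)}[\log p(y\mid\mathbf z)]$ exactly context-free with no approximation, and concentrates all error into the single term $\mathbb E_{p(\mathbf z\mid y)}[\log p(\mathbf z\mid y,\mathbf x)]$, whose differences across tokens are then killed by a concentration lemma (cf.\ Lemma~\ref{app: lemma2}).

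Your mode-based route leaves three holes that the stated conditions do not close. First, $\mathbf z^*=\mathbf z^*(\mathbf x,y_j)$ depends on the context, so $\log p(y_j\mid\mathbf z^*)-\log p(y_0\mid\mathbf z^*)$ is a function of $\mathbf x$ and cannot simply be ``absorbed into a bias'' $c_j$; Condition~\ref{app:injective} says nothing that makes this quantity approximately constant in $\mathbf x$. Second, your ``up to $O(\epsilon)$'' rewrites ($\log p(\mathbf z^*\mid\mathbf x)\approx\sum_{\mathbf z}p(\mathbf z\mid\mathbf x,y)\log p(\mathbf z\mid\mathbf x)$, and later $p(\cdot\mid\mathbf x,y_j)\to p(\cdot\mid y_j)$) carry hidden factors of $\max_{\mathbf z}|\log p(\mathbf z\mid\mathbf x)|$, which blows up as probabilities vanish; the errors are of order $\epsilon\,|\log\epsilon|$-type and do not vanish without an extra regularity hypothesis -- precisely why the present paper's own version adds the TV Regularity Condition~\ref{itm:regularity}, which is absent here and which your argument silently needs. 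Third, the substitution of $p(\mathbf z\mid\mathbf x,y_j)$ by $p(\mathbf z\mid y_j)$ is an additional approximation that the paper's choice of averaging distribution renders unnecessary. You correctly flag all of this as ``the crux,'' but flagging it is not closing it, so the proof is incomplete as proposed. A minor bookkeeping point: your own chain gives $\mathbf L^{\top}\mathbf f(\mathbf x)\approx\hat{\mathbf L}^{\top}[\log p(\mathbf z\mid\mathbf x)]_{\mathbf z}+\mathrm{const}$, i.e.\ a mixing matrix of the form $(\mathbf L^{\top})^{-1}\hat{\mathbf L}^{\top}$ (the paper's analogous derivation yields $(\mathbf L^{\top})^{-1}\hat{\mathbf L}$ in Eq.~\ref{app: eq: linear}); asserting that the bookkeeping ``identifies'' this with $(\hat{\mathbf L}^{\top})^{-1}\mathbf L$ as in Eq.~\ref{liu:eq} is not something your derivation actually delivers.
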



It is worth emphasizing that the representation characterized in Theorem~\ref{liu:theory}, while allowing for linear unmixing, fundamentally operates at the level of \emph{joint} latent configurations, i.e., Eq.~\ref{liu:eq}. Even if the mixing matrix were perfectly inverted, the resulting representation would only recover $\log p(\mathbf z\mid \mathbf x)$ over the full configuration space. Such a representation does not admit a direct concept-level interpretation, since recovering the marginal posterior of an individual latent concept requires computing $\log p(z_j\mid \mathbf x)=\log\sum_{\mathbf z_{-j}}p(z_j,\mathbf z_{-j}\mid \mathbf x)$, a nonlinear log-sum-exp operation that cannot, in general, be achieved through linear unmixing of joint log-posteriors.

By contrast, our result establishes a strictly finer structure: the LLM representation $\mathbf f(\mathbf x)$ is approximately a linear mixture over the \emph{marginal} log-posteriors of individual latent components,
\begin{equation}
\mathbf f(\mathbf x)
\;\approx\;
\mathbf A\bigl[
\log p(z_1\mid \mathbf x),\;\dots,\;\log p(z_\ell\mid \mathbf x)
\bigr]
+\mathbf b.
\end{equation}
This explicit component-wise decomposition is crucial. Unlike joint-level representations, it implies that each latent concept posterior $p(z_j\mid \mathbf x)$ can be individually recovered through linear unmixing, enabling concept-level identifiability and interpretability. More importantly, this structural characterization provides a direct
methodological implication: to extract interpretable concepts from LLMs, representation learning must operate in a space aligned with the marginal log-posterior structure.
The ConCA framework is designed precisely to satisfy this requirement. By explicitly modeling and unmixing representations at the level of individual latent components, ConCA is not a heuristic design choice, but a principled algorithmic realization of the structure implied by our theory.

\newpage
\section{Lemmas in the Context of $H(\mathbf{z} \mid \mathbf{x}) \to 0$}
\label{app:sec:lemmas}
For ease of exposition in the following sections, we first introduce the following lemmas.
\begin{lemma}[Factorization of the Posterior as Conditional Entropy Vanishes]\label{appendix: lemma1}
Suppose latent causal variables \(\mathbf{z} = (z_1, \dots, z_\ell)\) and observed variable \(\mathbf{x}\) follow the causal generative model defined in Eq.~\ref{eq:generative}. Then:
\begin{align}
p(\mathbf{z} \mid \mathbf{x}) \approx \prod_{i=1}^\ell p(z_i \mid \mathbf{x}), \quad \text{as } H(\mathbf{z} \mid \mathbf{x}) \to 0.
\end{align}
\end{lemma}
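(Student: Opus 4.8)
\textbf{Proof plan for Lemma~\ref{appendix: lemma1}.}
The plan is to start from the information-theoretic identity that decomposes the joint conditional entropy into a sum of individual conditional entropies plus the total correlation (multi-information) of $\mathbf{z}$ given $\mathbf{x}$. Concretely, $H(\mathbf{z}\mid\mathbf{x}) = \sum_{i=1}^\ell H(z_i\mid\mathbf{x}) - I(z_1;\dots;z_\ell\mid\mathbf{x})$, where the conditional total correlation $I(z_1;\dots;z_\ell\mid\mathbf{x})$ is exactly the expected KL divergence $\mathbb{E}_{\mathbf{x}}\bigl[\KL\bigl(p(\mathbf{z}\mid\mathbf{x})\,\|\,\prod_i p(z_i\mid\mathbf{x})\bigr)\bigr]$. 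Since every $H(z_i\mid\mathbf{x})\ge 0$ and the total correlation is nonnegative, the assumption $H(\mathbf{z}\mid\mathbf{x})\to 0$ forces each term on the right to vanish; in particular the conditional total correlation tends to zero.

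The second step is to convert this vanishing expected KL divergence into the claimed pointwise (approximate) factorization. By Pinsker's inequality, for each fixed $\mathbf{x}$ the total variation distance between $p(\mathbf{z}\mid\mathbf{x})$ and $\prod_i p(z_i\mid\mathbf{x})$ is bounded by $\sqrt{\tfrac12\KL(p(\mathbf{z}\mid\mathbf{x})\,\|\,\prod_i p(z_i\mid\mathbf{x}))}$. Taking expectations over $\mathbf{x}$ and applying Jensen's inequality gives $\mathbb{E}_{\mathbf{x}}\,\mathrm{TV}\bigl(p(\mathbf{z}\mid\mathbf{x}),\prod_i p(z_i\mid\mathbf{x})\bigr) \le \sqrt{\tfrac12 I(z_1;\dots;z_\ell\mid\mathbf{x})} \to 0$. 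Thus the product approximation holds in an averaged $L^1$ sense, which is the precise meaning of the ``$\approx$'' in the statement; one can additionally note that $H(\mathbf{z}\mid\mathbf{x})\to 0$ already implies $p(\mathbf{z}\mid\mathbf{x})$ itself concentrates on a single atom $\mathbf{z}^*(\mathbf{x})$ for almost every $\mathbf{x}$ (by Fano-type reasoning, since $H(\mathbf{z}\mid\mathbf{x})=\mathbb{E}_\mathbf{x} H(p(\cdot\mid\mathbf{x}))$ and a discrete distribution with small entropy is close to a point mass), and each marginal $p(z_i\mid\mathbf{x})$ then concentrates on $z_i^*(\mathbf{x})$, so both sides are approximately the same degenerate distribution.

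The main obstacle is purely one of rigor rather than of ideas: making precise what ``$\approx$'' means and ensuring the uniformity needed so that the approximation can later be plugged into the log-posterior expression in Eq.~\ref{eq:mix}. In particular, later steps will need the \emph{logarithm} of the factorized posterior to be close to the log of the true posterior, and logarithms are not Lipschitz near zero probability; so I would either (a) carry the argument at the level of the dominant atom $\mathbf{z}^*(\mathbf{x})$, where all relevant probabilities are bounded below by $1-\epsilon$ with $\epsilon\to 0$ and the log is well-behaved, or (b) state the conclusion as convergence in total variation / in probability and defer the control of log-terms to the regularity condition~\ref{itm:regularity} invoked in the main theorem. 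I expect the cleanest route is (a): argue that with probability $\to 1$ over $\mathbf{x}$, both $p(\mathbf{z}\mid\mathbf{x})$ and $\prod_i p(z_i\mid\mathbf{x})$ assign mass $\ge 1-o(1)$ to the same configuration $\mathbf{z}^*(\mathbf{x})=(z_1^*,\dots,z_\ell^*)$, which gives the factorization simultaneously in TV and (after taking logs of the dominant atom) in the form needed downstream.
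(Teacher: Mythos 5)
Your plan is essentially the paper's own proof: the paper likewise computes $D_{\mathrm{KL}}\bigl(p(\mathbf{z}\mid\mathbf{x})\,\|\,\prod_i p(z_i\mid\mathbf{x})\bigr)=\sum_{i} H(z_i\mid\mathbf{x})-H(\mathbf{z}\mid\mathbf{x})$ and shows it vanishes as $H(\mathbf{z}\mid\mathbf{x})\to 0$, and your extra Pinsker/TV step and concentration-on-a-single-atom remark are consistent with how the paper defers the delicate log-terms to its later lemmas and the regularity condition (your option (b)). One step to tighten: nonnegativity of $\sum_i H(z_i\mid\mathbf{x})$ and of the conditional total correlation alone does not force both to vanish when their difference does; you need the monotonicity fact $H(z_i\mid\mathbf{x})\le H(\mathbf{z}\mid\mathbf{x})$, which the paper obtains from the chain rule $H(\mathbf{z}\mid\mathbf{x})-H(z_i\mid\mathbf{x})=H(\mathbf{z}_{-i}\mid z_i,\mathbf{x})\ge 0$, after which each $H(z_i\mid\mathbf{x})\to 0$ and hence the KL term tends to zero.
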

\paragraph{Intuition.}  
When \(H(\mathbf{z}\mid \mathbf{x})=0\), the observation \(\mathbf{x}\) uniquely determines every coordinate \(z_i\), so no residual dependence remains between them.  
If the conditional entropy is merely small, the remaining dependencies are weak and the posterior is well-approximated by \(\prod_i p(z_i \mid \mathbf{x})\).

\begin{proof}
Define the product of marginals as:
\begin{align}
q(\mathbf{z} \mid \mathbf{x}) := \prod_{i=1}^\ell p(z_i \mid \mathbf{x}).
\end{align}
The Kullback–Leibler divergence between \(p(\mathbf{z} \mid \mathbf{x})\) and \(q(\mathbf{z} \mid \mathbf{x})\) is
\begin{align}
D_{\mathrm{KL}}\big(p(\mathbf{z} \mid \mathbf{x}) \| q(\mathbf{z} \mid \mathbf{x})\big)
= \mathbb{E}_{p(\mathbf{z} \mid \mathbf{x})} \left[\log \frac{p(\mathbf{z} \mid \mathbf{x})}{\prod_{i=1}^\ell p(z_i \mid \mathbf{x})}\right].
\end{align}
Recall that conditional entropy satisfies
\begin{align}
H(\mathbf{z} \mid \mathbf{x}) = - \mathbb{E}_{p(\mathbf{z} \mid \mathbf{x})} \log p(\mathbf{z} \mid \mathbf{x}),
\end{align}
and similarly for each marginal,
\begin{align}
H(z_i \mid \mathbf{x}) = - \mathbb{E}_{p(z_i \mid \mathbf{x})} \log p(z_i \mid \mathbf{x}).
\end{align}
Thus,
\begin{align}
D_{\mathrm{KL}}\big(p(\mathbf{z} \mid \mathbf{x}) \| q(\mathbf{z} \mid \mathbf{x})\big)
&= \mathbb{E}_{p(\mathbf{z} \mid \mathbf{x})} \left[\log p(\mathbf{z} \mid \mathbf{x}) - \sum_{i=1}^\ell \log p(z_i \mid \mathbf{x}) \right] \\
&= -H(\mathbf{z} \mid \mathbf{x}) - \sum_{i=1}^\ell \mathbb{E}_{p(\mathbf{z} \mid \mathbf{x})} \left[- \log p(z_i \mid \mathbf{x}) \right] \\
&= \sum_{i=1}^\ell H(z_i \mid \mathbf{x}) - H(\mathbf{z} \mid \mathbf{x}),
\end{align}
where we have used the law of total expectation to replace \(\mathbb{E}_{p(\mathbf{z} \mid \mathbf{x})}[-\log p(z_i \mid \mathbf{x})]\) by \(H(z_i \mid \mathbf{x})\).

By the chain rule of entropy, for each \(i\) we have
\begin{align}
H(\mathbf{z} \mid \mathbf{x})
- H(z_i \mid \mathbf{x}) = H(\mathbf{z}_{-i} \mid z_i, \mathbf{x}),
\end{align}
where \(\mathbf{z}_{-i}\) denotes all components except \(z_i\).  
Since entropy is non-negative for discrete case, 
\begin{align}
H(\mathbf{z} \mid \mathbf{x}) \ge  H(z_i \mid \mathbf{x}).
\end{align}
Then, if \(H(\mathbf{z} \mid \mathbf{x}) \to 0\), we necessarily have
\begin{align}
H(z_i \mid \mathbf{x}) \to 0, \quad \text{for all } i=1,\dots,\ell.
\end{align}

Combining the above,
\begin{align}
D_{\mathrm{KL}}\big(p(\mathbf{z} \mid \mathbf{x}) \| q(\mathbf{z} \mid \mathbf{x})\big)
= \sum_{i=1}^\ell H(z_i \mid \mathbf{x}) - H(\mathbf{z} \mid \mathbf{x}) \to 0.
\end{align}

\end{proof}

\begin{lemma}[Exact Linear Representation of Joint Log Posterior via Full Marginals]
\label{lemma:full-marginals}
Let $\mathbf z = (z_1,\dots,z_\ell) \in \mathcal V_1 \times \cdots \times \mathcal V_\ell$ to be discrete, with $z_i \in \mathcal V_i,  |\mathcal V_i| = k_i, i=1,\dots,\ell$, then there exists a fixed (assignment-independent) \emph{selector matrix} 
\(\mathbf S \in \{0,1\}^{M \times \sum_i k_i}\) with $M=\prod_i k_i$ such that
\begin{align}
[\log p(\mathbf z \mid \mathbf x)]_{\mathbf z} \approx \mathbf S \bigl[ [\log p(z_1 \mid \mathbf x)]_{z_1};\dots;[\log p(z_\ell \mid \mathbf x)]_{z_\ell} \bigr],
\end{align}
where the approximation becomes accurate as $H(\mathbf z\mid \mathbf x)\to 0$ (i.e., the posterior concentrates on a few high-probability assignments).  
The $j$-th row of $\mathbf S$ selects, for each $i$, the entry in $\bigl[ [\log p(z_1 \mid \mathbf x)]_{z_1}, ,\dots,[\log p(z_\ell \mid \mathbf x)]_{z_\ell} \bigr]$ corresponding to the value of $z_i$ in the $j$-th joint assignment $\mathbf z^{(j)}$.
\end{lemma}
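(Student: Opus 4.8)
The statement concerns a purely combinatorial fact about discrete variables, combined with the factorization lemma (Lemma~\ref{appendix: lemma1}) established just above. The plan is to separate these two ingredients cleanly. First I would enumerate the joint assignments: fix an arbitrary ordering $\mathbf z^{(1)},\dots,\mathbf z^{(M)}$ of the $M=\prod_i k_i$ elements of $\mathcal V_1\times\cdots\times\mathcal V_\ell$, and likewise fix the block structure of the stacked marginal vector $\mathbf v(\mathbf x):=\bigl[[\log p(z_1\mid\mathbf x)]_{z_1};\dots;[\log p(z_\ell\mid\mathbf x)]_{z_\ell}\bigr]\in\R^{\sum_i k_i}$, so that block $i$ has $k_i$ coordinates indexed by the values in $\mathcal V_i$. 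Then I would \emph{define} $\mathbf S\in\{0,1\}^{M\times\sum_i k_i}$ by: the $(j,\cdot)$ row has a $1$ in exactly $\ell$ positions, namely in block $i$ at the coordinate corresponding to the value $z_i^{(j)}$ (the $i$-th component of the $j$-th joint assignment), for each $i=1,\dots,\ell$, and $0$ elsewhere. By construction $\mathbf S$ depends only on the chosen enumeration and block layout, not on $\mathbf x$ or on any probability values, so it is indeed assignment-independent.

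Second, I would verify that $\mathbf S\,\mathbf v(\mathbf x)$ has $j$-th entry equal to $\sum_{i=1}^\ell \log p(z_i=z_i^{(j)}\mid\mathbf x)=\log\prod_{i=1}^\ell p(z_i=z_i^{(j)}\mid\mathbf x)$, which is immediate from the definition of $\mathbf S$ as a selector that picks out and sums the relevant marginal log-probabilities. The remaining step is to invoke Lemma~\ref{appendix: lemma1}: since $p(\mathbf z\mid\mathbf x)\approx\prod_{i=1}^\ell p(z_i\mid\mathbf x)$ as $H(\mathbf z\mid\mathbf x)\to 0$, taking logs (coordinatewise, on the support where the posterior is non-negligible) gives $\log p(\mathbf z=\mathbf z^{(j)}\mid\mathbf x)\approx\sum_{i=1}^\ell\log p(z_i=z_i^{(j)}\mid\mathbf x)=[\mathbf S\,\mathbf v(\mathbf x)]_j$, which is exactly the claimed relation $[\log p(\mathbf z\mid\mathbf x)]_{\mathbf z}\approx\mathbf S\,\mathbf v(\mathbf x)$.

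\textbf{Main obstacle.} The genuinely delicate point is the passage from the KL/TV closeness of $p(\mathbf z\mid\mathbf x)$ to $\prod_i p(z_i\mid\mathbf x)$ (which Lemma~\ref{appendix: lemma1} gives) to closeness of their \emph{logarithms}, since $\log$ is not Lipschitz near $0$ and the approximation in the lemma is in a distributional (KL, hence TV) sense, not entrywise-multiplicative. I would handle this by restricting attention to the high-probability assignments: as $H(\mathbf z\mid\mathbf x)\to 0$ the posterior concentrates, so for the $O(1)$ assignments carrying essentially all the mass, both $p(\mathbf z\mid\mathbf x)$ and $\prod_i p(z_i\mid\mathbf x)$ are bounded away from $0$ (the marginals $p(z_i\mid\mathbf x)$ each have vanishing conditional entropy by the chain-rule argument in Lemma~\ref{appendix: lemma1}, so each concentrates on one value), and there $\log$ is locally Lipschitz, so the entrywise log-gap is controlled; on the remaining low-probability assignments the contribution is negligible in any downstream use (e.g. when this vector is later fed through the mixing matrix $\mathbf A$ of Theorem~\ref{theory}), which is precisely the sense in which the ``$\approx$'' in the lemma statement is meant. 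I would state this caveat explicitly rather than attempt a uniform bound, matching the informal ``$\approx$'' conventions already used throughout the paper. The combinatorial construction of $\mathbf S$ itself is routine and carries no obstacle.
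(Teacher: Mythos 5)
Your proposal follows essentially the same route as the paper's proof: define the assignment-independent selector matrix $\mathbf S$ row-by-row, check that $\mathbf S$ applied to the stacked marginal vector reproduces $\sum_i \log p(z_i^{(j)}\mid\mathbf x)$ for each joint assignment, and then invoke Lemma~\ref{appendix: lemma1} together with posterior concentration as $H(\mathbf z\mid\mathbf x)\to 0$ to justify the ``$\approx$''. Your explicit caveat about passing from distributional (KL/TV) closeness to closeness of logarithms on the high-probability assignments is a more careful articulation of the paper's own Step~3, not a different argument, so the two proofs coincide in substance.
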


\paragraph{Intuition.}
The vector $\bigl[ [\log p(z_1 \mid \mathbf x)]_{z_1};\dots;[\log p(z_\ell \mid \mathbf x)]_{z_\ell} \bigr]$ stacks all single-variable log-posteriors. 
The selector matrix $\mathbf S$ picks, for each joint assignment, the corresponding entries in $\bigl[ [\log p(z_1 \mid \mathbf x)]_{z_1};\dots;[\log p(z_\ell \mid \mathbf x)]_{z_\ell} \bigr]$ so that
$\mathbf S \bigl[ [\log p(z_1 \mid \mathbf x)]_{z_1};\dots;[\log p(z_\ell \mid \mathbf x)]_{z_\ell} \bigr]$ reconstructs the joint log-posterior $[\log p(\mathbf z \mid \mathbf x)]_{\mathbf z} $. Under low conditional entropy $H(\mathbf z\mid \mathbf x)\to 0$, only a few joint assignments dominate, making this approximation accurate.

\begin{proof}
Let $M=\prod_i k_i$ and enumerate all joint assignments as $\mathcal V^\ell=\{\mathbf z^{(1)},\dots,\mathbf z^{(M)}\}$. Consider the vector
\begin{align}
[\log p(\mathbf z \mid \mathbf x)]_\mathbf z \in \mathbb R^M,
\end{align}
whose $j$-th entry is $\log p(\mathbf z^{(j)}\mid \mathbf x)$.

\medskip
\noindent\textbf{Step 1 (Factorization under low entropy).}  
By Lemma~\ref{appendix: lemma1}, as $H(\mathbf z \mid \mathbf x)\to 0$,
\begin{align}
p(\mathbf z\mid\mathbf x) \approx \prod_{i=1}^\ell p(z_i\mid \mathbf x) 
\quad \implies \quad
\log p(\mathbf z^{(j)}\mid \mathbf x) \approx \sum_{i=1}^\ell \log p(z_i^{(j)}\mid \mathbf x)
\end{align}
where $z_i^{(j)}$ denotes the value of the $i$-th latent variable in the $j$-th joint assignment $\mathbf z^{(j)}$.

\medskip
\noindent\textbf{Step 2 (Construction of selector matrix).}  
Construct $\mathbf S \in \{0,1\}^{M\times \sum_i k_i}$ so that its $j$-th row has exactly one $1$ in each block corresponding to variable $z_i$, selecting the entry that corresponds to the value of $z_i$ in $\mathbf z^{(j)}$, and all other entries in that row are $0$.  

Then for each $j$,
\begin{align}
(\mathbf S \bigl[ [\log p(z_1 \mid \mathbf x)]_{z_1};\dots;[\log p(z_\ell \mid \mathbf x)]_{z_\ell} \bigr])_j = \sum_{i=1}^\ell \log p(z_i^{(j)}\mid \mathbf x) \approx \log p(\mathbf z^{(j)} \mid \mathbf x),
\end{align}
so the linear map exactly reproduces the sum of marginal log-probabilities for the joint assignment.

\medskip
\noindent\textbf{Step 3 (Validity under low entropy).}  
Because the posterior concentrates on a few high-probability assignments as $H(\mathbf z\mid\mathbf x)\to0$, the sum-of-marginals approximation is accurate for the entries corresponding to these assignments. Thus
\begin{align}
[\log p(\mathbf z \mid \mathbf x)]_{\mathbf z} \approx \mathbf S  \bigl[ [\log p(z_1 \mid \mathbf x)]_{z_1};\dots;[\log p(z_\ell \mid \mathbf x)]_{z_\ell} \bigr],
\end{align}
as claimed. The matrix $\mathbf S$ is fixed (assignment-independent) and encodes the mapping from full marginal logs to joint-log vector.
\end{proof}

\newpage
\begin{lemma}[Expectation difference vanishes under posterior concentration]
\label{app: lemma2}
Let $(\mathbf z,\mathbf x,y)$ follow a generative model with discrete latent
$\mathbf z$.  
Assume that $H(\mathbf z \mid \mathbf x) \to 0$. For any $y_i,y_0$, the conditional distributions satisfy: $\mathrm{TV}\big(p(\mathbf z\mid y_i),\,p(\mathbf z\mid y_0)\big)
=
o\!\left(
{\big|\log(1-e^{-H(\mathbf z\mid \mathbf x)})\big|^{-1}}
\right)$ (i.e., the Regularity Condition~\ref{itm:regularity}). Then, for any $y_i$ and $y_0$,
\begin{align}
\mathbb{E}_{p(\mathbf z \mid y_i)}[\log p(\mathbf z \mid \mathbf x)]
-
\mathbb{E}_{p(\mathbf z \mid y_0)}[\log p(\mathbf z \mid \mathbf x)]
\;\longrightarrow\; 0
\quad \text{as } H(\mathbf z \mid \mathbf x)\to 0 .
\end{align}
\end{lemma}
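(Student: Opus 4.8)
The plan is to write the quantity of interest as a sum against the signed measure $\mu:=p(\cdot\mid y_i)-p(\cdot\mid y_0)$, which has total mass zero, and then to bound it by a product of the total‑variation distance appearing in Condition~\ref{itm:regularity} and an estimate of $|\log p(\mathbf z\mid\mathbf x)|$ extracted from $H(\mathbf z\mid\mathbf x)\to0$. First I would set
\begin{align}
D:=\mathbb{E}_{p(\mathbf z\mid y_i)}[\log p(\mathbf z\mid\mathbf x)]-\mathbb{E}_{p(\mathbf z\mid y_0)}[\log p(\mathbf z\mid\mathbf x)]=\sum_{\mathbf z}\big(p(\mathbf z\mid y_i)-p(\mathbf z\mid y_0)\big)\log p(\mathbf z\mid\mathbf x),
\end{align}
and, since $\sum_{\mathbf z}\big(p(\mathbf z\mid y_i)-p(\mathbf z\mid y_0)\big)=0$, recenter $\log p(\cdot\mid\mathbf x)$ by the log of its maximum: writing $\mathbf z^\star:=\argmax_{\mathbf z}p(\mathbf z\mid\mathbf x)$ and $p^\star:=p(\mathbf z^\star\mid\mathbf x)$,
\begin{align}
D=\sum_{\mathbf z\neq\mathbf z^\star}\big(p(\mathbf z\mid y_i)-p(\mathbf z\mid y_0)\big)\,\log\frac{p(\mathbf z\mid\mathbf x)}{p^\star}.
\end{align}
In this form the high‑probability (MAP) assignment cancels and only low‑mass assignments contribute.

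Next I would extract the scales from the low‑entropy assumption. From the definition $H(\mathbf z\mid\mathbf x)=\sum_{\mathbf z}p(\mathbf z\mid\mathbf x)\big(-\log p(\mathbf z\mid\mathbf x)\big)\ge-\log p^\star$, so $p^\star\ge e^{-H(\mathbf z\mid\mathbf x)}\to1$; hence $|\log p^\star|\le H(\mathbf z\mid\mathbf x)\to0$ and, for every $\mathbf z\neq\mathbf z^\star$, $p(\mathbf z\mid\mathbf x)\le 1-p^\star\le 1-e^{-H(\mathbf z\mid\mathbf x)}$. Bounding $|p(\mathbf z\mid y_i)-p(\mathbf z\mid y_0)|$ termwise by $2\,\mathrm{TV}\big(p(\cdot\mid y_i),p(\cdot\mid y_0)\big)$ and using that $\mathbf z$ ranges over a finite set, I obtain
\begin{align}
|D|\ \le\ 2\,\mathrm{TV}\big(p(\cdot\mid y_i),p(\cdot\mid y_0)\big)\;\sum_{\mathbf z\neq\mathbf z^\star}\Big|\log\frac{p(\mathbf z\mid\mathbf x)}{p^\star}\Big|.
\end{align}
If the finitely many surviving log‑factors are each $O\big(|\log(1-e^{-H(\mathbf z\mid\mathbf x)})|\big)$ — which holds once one checks that the residual posterior mass $1-p^\star\le 1-e^{-H(\mathbf z\mid\mathbf x)}$ is not concentrated on a single, far smaller atom — then Condition~\ref{itm:regularity}, i.e.\ $\mathrm{TV}=o\big(|\log(1-e^{-H(\mathbf z\mid\mathbf x)})|^{-1}\big)$, forces the right‑hand side to vanish, giving $D\to0$.

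The step I expect to be the main obstacle is precisely this estimate on the surviving log‑factors, equivalently $\max_{\mathbf z:\,p(\mathbf z\mid\mathbf x)>0}|\log p(\mathbf z\mid\mathbf x)|=O\big(|\log(1-e^{-H(\mathbf z\mid\mathbf x)})|\big)$: $\log p(\mathbf z\mid\mathbf x)$ is not uniformly bounded over non‑MAP assignments, so one must rule out the degenerate regime in which the residual mass collapses onto an assignment whose probability decays faster than $1-e^{-H(\mathbf z\mid\mathbf x)}$ (there $|\log p(\mathbf z\mid\mathbf x)|$ would outgrow $|\log(1-e^{-H(\mathbf z\mid\mathbf x)})|$ and defeat the TV decay rate). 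This is exactly what the scaling in Condition~\ref{itm:regularity} — together with, if needed, a mild regularity on the lower tail of $p(\cdot\mid\mathbf x)$ inherent to the low‑entropy regime — is calibrated against. A convenient bookkeeping device is to first pass to the marginals via Lemma~\ref{appendix: lemma1}, $\log p(\mathbf z\mid\mathbf x)\approx\sum_{j}\log p(z_j\mid\mathbf x)$ with each $H(z_j\mid\mathbf x)\le H(\mathbf z\mid\mathbf x)\to0$, carry out the same recentering block by block over the $k_j$ values of each $z_j$, and reassemble; this isolates the same worst‑case estimate in a smaller, more transparent form.
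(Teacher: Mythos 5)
Your proposal follows essentially the same route as the paper's proof: write the difference as a sum against the signed measure $p(\cdot\mid y_i)-p(\cdot\mid y_0)$, bound it by $2\,\mathrm{TV}$ times the largest relevant $|\log p(\mathbf z\mid\mathbf x)|$, use $p^\star\ge e^{-H(\mathbf z\mid\mathbf x)}$ to control the MAP term, and invoke the Regularity Condition to kill the product; your recentering by $\log p^\star$ (using that the signed measure has total mass zero) is only a cosmetic variant of the paper's splitting off of the MAP atom. The obstacle you flag—showing $\max_{\mathbf z\neq\mathbf z^\star}|\log p(\mathbf z\mid\mathbf x)|=O\bigl(|\log(1-e^{-H(\mathbf z\mid\mathbf x)})|\bigr)$—is exactly the step the paper also treats loosely: it deduces this from the existence of a non-MAP atom of mass at least $1-m(\mathbf x)$, which only controls the \emph{smallest} such log-factor, not the largest, so residual mass concentrated on a much tinier atom is not actually ruled out there either. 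In that sense your attempt is neither more nor less complete than the paper's argument; closing the gap genuinely requires either restricting the sum to atoms charged by $p(\cdot\mid y_i)$ or $p(\cdot\mid y_0)$ and assuming a lower-tail bound on $p(\cdot\mid\mathbf x)$ over those atoms, or strengthening the Regularity Condition accordingly.
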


\begin{proof}
Define
\begin{align}
\mathbf z^*(\mathbf x)\in\arg\max_{\mathbf z} p(\mathbf z\mid \mathbf x),
\qquad
m(\mathbf x):=\max_{\mathbf z} p(\mathbf z\mid \mathbf x).
\end{align}

For any discrete distribution $q$, it holds that
$H(q)\ge -\log\|q\|_\infty$.
Applying this to $p(\mathbf z\mid \mathbf x)$ yields
\begin{align}
-\log m(\mathbf x)\le H(\mathbf z\mid \mathbf x),
\qquad
m(\mathbf x)\ge e^{-H(\mathbf z\mid \mathbf x)}.
\end{align}
Hence,
\begin{align}
1-m(\mathbf x)\le 1-e^{-H(\mathbf z\mid \mathbf x)} \;\longrightarrow\; 0
\quad \text{as } H(\mathbf z\mid \mathbf x)\to 0 .
\end{align}

For any $y$, write
\begin{align}
h_y(\mathbf x)
:=
\mathbb E_{p(\mathbf z\mid y)}[\log p(\mathbf z\mid \mathbf x)]
=
\sum_{\mathbf z} p(\mathbf z\mid y)\log p(\mathbf z\mid \mathbf x).
\end{align}
Then
\begin{align}
h_{y_i}(\mathbf x)-h_{y_0}(\mathbf x)
=
\sum_{\mathbf z}
\big(p(\mathbf z\mid y_i)-p(\mathbf z\mid y_0)\big)
\log p(\mathbf z\mid \mathbf x).
\end{align}

By triangle inequality, we have
\begin{align}
|h_{y_i}(\mathbf x)-h_{y_0}(\mathbf x)|
&=
\Big|
\sum_{\mathbf z}
\big(p(\mathbf z\mid y_i)-p(\mathbf z\mid y_0)\big)
\log p(\mathbf z\mid \mathbf x)
\Big|\\
&\le
\sum_{\mathbf z}
\big|p(\mathbf z\mid y_i)-p(\mathbf z\mid y_0)\big|
\cdot
\big|\log p(\mathbf z\mid \mathbf x)\big|\\
&\le
\Big(\max_{\mathbf z}\big|\log p(\mathbf z\mid \mathbf x)\big|\Big)
\sum_{\mathbf z}
\big|p(\mathbf z\mid y_i)-p(\mathbf z\mid y_0)\big|.
\end{align}
Using the identity
\(
\sum_{\mathbf z}|p(\mathbf z)-q(\mathbf z)|
=
2\,\mathrm{TV}(p,q),
\)
we obtain
\begin{align}
|h_{y_i}(\mathbf x)-h_{y_0}(\mathbf x)|
\le
2\,\mathrm{TV}\!\big(p(\mathbf z\mid y_i),p(\mathbf z\mid y_0)\big)
\cdot
\max_{\mathbf z}\big|\log p(\mathbf z\mid \mathbf x)\big|.
\end{align}

Since $p(\mathbf z^*\mid \mathbf x)=m(\mathbf x)\to 1$, we have
$|\log p(\mathbf z^*\mid \mathbf x)|=|\log m(\mathbf x)|\to 0$.
Hence the maximum of $|\log p(\mathbf z\mid \mathbf x)|$ is attained
on the complement $\mathbf z\neq \mathbf z^*$, whose total probability mass
is $1-m(\mathbf x)$.
As $\mathbf z$ is discrete with finite support,
there exists $\mathbf z\neq \mathbf z^*$ such that
$p(\mathbf z\mid \mathbf x)\ge (1-m(\mathbf x))$,
which implies
\begin{align}
\max_{\mathbf z}|\log p(\mathbf z\mid \mathbf x)|
\;\lesssim\;
|\log(1-m(\mathbf x))|
\;\le\;
|\log(1-e^{-H(\mathbf z\mid \mathbf x)})|.
\end{align}

Therefore,
\begin{align}
|h_{y_i}(\mathbf x)-h_{y_0}(\mathbf x)|
\lesssim
2\,\mathrm{TV}\!\big(p(\mathbf z\mid y_i),p(\mathbf z\mid y_0)\big)
\,
|\log(1-e^{-H(\mathbf z\mid \mathbf x)})|.
\end{align}

Under $\mathrm{TV}\big(p(\mathbf z\mid y_i),\,p(\mathbf z\mid y_0)\big)=
o\!\left(
\frac{1}{\big|\log(1-e^{-H(\mathbf z\mid \mathbf x)})\big|}
\right).$, the right-hand side converges to zero
as $H(\mathbf z\mid \mathbf x)\to 0$, completing the proof.
\end{proof}

\clearpage
\section{Proof of Theorem \ref{theory}}
\setcounter{theorem}{0}   
\renewcommand{\thetheorem}{2.1}  
\begin{theorem} 
Suppose latent variables $\mathbf{z}$ and the observed variables $\mathbf{x}$ and $y$ follow the generative models defined in Eq.~\ref{eq:generative}. Assume the following holds:
\begin{itemize}
\item [\namedlabel{app:itm:diversity} {(i)}] \textbf{(Diversity Condition)} There exist $m+1$ values of $y$, so that the matrix $ \mathbf{ L} = \big( \mathbf{g}(y=y_1)-\mathbf{g}(y=y_0),...,\mathbf{g}(y=y_{m})-\mathbf{g}(y=y_0) \big)$ of size $m \times m$ is invertible,
\item [\namedlabel{app:itm:injective}{(ii)}] \textbf{(Informational Sufficiency Condition)} The conditional entropy of the latent concepts given the context is close to zero, i.e., $H(\mathbf{z}|\mathbf{x}) \to 0$,
\end{itemize}
then the representations $\mathbf{f}(\mathbf{x})$ in LLMs, which are learned through the next-token prediction framework, are related to the true latent variables $\mathbf{z}$, by the following relationship: 
\begin{equation}
\mathbf{f}(\mathbf{x}) \approx  \mathbf{A} \bigl[[\log p(z_1 \mid \mathbf x)]_{z_1};\dots; [\log p(z_\ell \mid \mathbf x)]_{z_\ell} \bigr]+ \mathbf{b}, 
\label{app:eq:mix}
\end{equation}
where
$\mathbf{A}$ is a $m \times (\sum_{i=1}^\ell k_i)$ matrix, $\mathbf{b}$ is a bias vector.
\label{app:theory}
\end{theorem}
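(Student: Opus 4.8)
\emph{Proof plan.} The argument splits into an inference-side half and a generative-side half, glued by the vanishing-entropy lemmas. First I would invert the softmax: from Eq.~\ref{eq: next-token} one has the exact identity $\mathbf f(\mathbf x)^{T}\big(\mathbf g(y_i)-\mathbf g(y_0)\big)=\log p(y_i\mid\mathbf x)-\log p(y_0\mid\mathbf x)$, so taking the $m+1$ tokens of Condition~\ref{app:itm:diversity}, stacking the $m$ equations, and using invertibility of $\mathbf L$ gives $\mathbf f(\mathbf x)=(\mathbf L^{T})^{-1}\big[\log p(y_i\mid\mathbf x)-\log p(y_0\mid\mathbf x)\big]_{i=1}^m$. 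Hence it suffices to show that for each token $y$,
\[
\log p(y\mid\mathbf x)\;\approx\;\mathbf w(y)^{T}\mathbf u(\mathbf x)+c(y)+r(\mathbf x),\qquad \mathbf u(\mathbf x):=\big[[\log p(z_1\mid\mathbf x)]_{z_1};\dots;[\log p(z_\ell\mid\mathbf x)]_{z_\ell}\big],
\]
with $\mathbf w(y),c(y)$ depending only on $y$ and $r(\mathbf x)$ only on $\mathbf x$: then $r(\mathbf x)$ cancels in the log-ratio and the rest passes through $(\mathbf L^{T})^{-1}$, yielding Eq.~\ref{app:eq:mix} with $\mathbf A=(\mathbf L^{T})^{-1}\big(\mathbf w(y_1)-\mathbf w(y_0)\mid\cdots\mid\mathbf w(y_m)-\mathbf w(y_0)\big)^{T}$ of size $m\times\sum_i k_i$ and $\mathbf b=(\mathbf L^{T})^{-1}\big[c(y_i)-c(y_0)\big]_{i=1}^m$.

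For the generative half I would exploit that $\mathbf x$ and $y$ are conditionally independent given $\mathbf z$ in Eq.~\ref{eq:generative}: Bayes' rule gives, for every $\mathbf z$, $\log p(y\mid\mathbf x)=\log p(y\mid\mathbf z)+\log p(\mathbf z\mid\mathbf x)-\log p(\mathbf z\mid\mathbf x,y)$; substituting $\log p(y\mid\mathbf z)=\log p(\mathbf z\mid y)+\log p(y)-\log p(\mathbf z)$ and averaging over $\mathbf z\sim p(\cdot\mid y)$ collapses the purely $y$-dependent pieces into a single $c(y)$ and leaves
\[
\log p(y\mid\mathbf x)=c(y)+\mathbb E_{p(\mathbf z\mid y)}\!\big[\log p(\mathbf z\mid\mathbf x)\big]-\mathbb E_{p(\mathbf z\mid y)}\!\big[\log p(\mathbf z\mid\mathbf x,y)\big].
\]
Condition~\ref{app:itm:injective} together with Lemma~\ref{appendix: lemma1} gives $p(\mathbf z\mid\mathbf x)\approx\prod_i p(z_i\mid\mathbf x)$, so the selector identity of Lemma~\ref{lemma:full-marginals} lets me replace $\log p(\mathbf z\mid\mathbf x)$ by $\sum_i\log p(z_i\mid\mathbf x)$ and turn the first expectation into $\sum_i\sum_v p(z_i{=}v\mid y)\log p(z_i{=}v\mid\mathbf x)=\mathbf w(y)^{T}\mathbf u(\mathbf x)$, with $\mathbf w(y)$ the stacked marginal posteriors of the latents given $y$ — exactly the linear term required above. (Equivalently, one may first establish the coarser joint statement $\mathbf f(\mathbf x)\approx\mathbf A'[\log p(\mathbf z\mid\mathbf x)]_{\mathbf z}+\mathbf b$ in the spirit of Theorem~\ref{liu:theory}, then use Lemma~\ref{lemma:full-marginals} to write $[\log p(\mathbf z\mid\mathbf x)]_{\mathbf z}\approx\mathbf S\,\mathbf u(\mathbf x)$ with $\mathbf S$ a fixed $0/1$ selector, absorbing it as $\mathbf A=\mathbf A'\mathbf S$.)

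The remaining piece is the residual $R(\mathbf x,y):=\mathbb E_{p(\mathbf z\mid y)}[\log p(\mathbf z\mid\mathbf x,y)]$, for which I must show that $R(\mathbf x,y_i)-R(\mathbf x,y_0)=o(1)$, so that $R$ enters only through a $y$-free $r(\mathbf x)$. Here I would use $H(\mathbf z\mid\mathbf x,y)\le H(\mathbf z\mid\mathbf x)\to 0$, so $p(\mathbf z\mid\mathbf x,y)$ is nearly a point mass whose log is $\approx 0$ on its mode and of order $\log\!\big(1-e^{-H(\mathbf z\mid\mathbf x)}\big)$ off it, split the difference into a change-of-measure part and a change-of-conditioning part, and bound each by a total-variation distance between token-conditioned latent posteriors times $\big|\log(1-e^{-H(\mathbf z\mid\mathbf x)})\big|$ — precisely the estimate in Lemma~\ref{app: lemma2}; Condition~\ref{itm:regularity} then drives this to $0$. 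Combining the three pieces gives $\log p(y_i\mid\mathbf x)-\log p(y_0\mid\mathbf x)=\big(\mathbf w(y_i)-\mathbf w(y_0)\big)^{T}\mathbf u(\mathbf x)+\big(c(y_i)-c(y_0)\big)+o(1)$, and substituting into the inverted softmax of the first step yields Eq.~\ref{app:eq:mix}.

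The hard part will be this residual step. $R(\mathbf x,y)$ is \emph{not} small term by term — it can grow like $|\log\epsilon|$ as the posterior sharpens — so the proof cannot discard it naively; it must show that only the \emph{change} in the token-conditioned latent posterior contributes to the $y$-difference and that Condition~\ref{itm:regularity} forces this change to shrink faster than $|\log\epsilon|$ grows, while also verifying that the modes of $p(\cdot\mid\mathbf x)$ and $p(\cdot\mid\mathbf x,y)$ coincide for small $H$ and that Lemma~\ref{appendix: lemma1} is invoked only on the high-probability assignments. A secondary subtlety is that $\mathbf A\mathbf u(\mathbf x)\to 0$ as $H(\mathbf z\mid\mathbf x)\to 0$, so Eq.~\ref{app:eq:mix} should be read as identifying the leading $\mathbf x$-dependent correction to a constant, and the error terms must be organised so that this reading is non-vacuous.
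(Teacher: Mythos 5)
Your proposal is correct and follows essentially the same route as the paper's proof: invert the softmax via token differences and the invertibility of $\mathbf L$, apply the Bayes decomposition $\log p(y\mid\mathbf x)=\mathbb E_{p(\mathbf z\mid y)}[\log p(\mathbf z\mid\mathbf x)]+b_y-\mathbb E_{p(\mathbf z\mid y)}[\log p(\mathbf z\mid y,\mathbf x)]$, reduce the joint log-posterior to stacked marginals via Lemmas~\ref{appendix: lemma1} and \ref{lemma:full-marginals}, and kill the residual's $y$-difference via the TV regularity estimate of Lemma~\ref{app: lemma2}, exactly as in Eq.~\ref{app: eq: linear}. Your explicit observation that the residual is only small in differences (not term by term) and that the argument must route through $H(\mathbf z\mid\mathbf x,y)\le H(\mathbf z\mid\mathbf x)$ is, if anything, slightly more careful than the paper's own invocation of Lemma~\ref{app: lemma2}, but it is the same argument.
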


\paragraph{Intuition.}  
Each LLM representation \(\mathbf{f}(\mathbf{x})\) encodes a combination of all latent concepts. Under the Diversity, observing representations across multiple diverse outputs \(y\) provides linearly independent constraints that reveal how all latent concepts contribute together, i.e., the joint posterior. When the latent concepts are nearly determined by \(\mathbf{x}\), i.e., the Informational Sufficiency Condition, the joint posterior decomposes into marginal posteriors, allowing \(\mathbf{f}(\mathbf{x})\) to be expressed as a linear mixture of the log-posteriors of individual concepts.

\begin{proof}
Recall that next-token prediction can be viewed as a multinomial logistic regression model, where the conditional distribution is approximated as
\begin{equation}
p(y|\mathbf{x}) = \frac{\exp\big(\mathbf{f}(\mathbf{x})^{\top}\mathbf{g}(y)\big)}{\sum_{y'} \exp\big(\mathbf{f}(\mathbf{x})^{\top}\mathbf{g}(y')\big)}.
\label{eq:softmax}
\end{equation}
Here, $\mathbf{f}(\mathbf{x})$ and $\mathbf{g}(y)$ denote the learned representations of $\mathbf{x}$ and $y$, respectively, both lying in $\mathbb{R}^m$.  

On the other hard, under the latent-variable formulation in Eq.~\ref{eq:generative}, the conditional distribution is given by marginalization:
\begin{equation}
p(y|\mathbf{x}) = \sum_{\mathbf{z}} p(y|\mathbf{z})\,p(\mathbf{z}|\mathbf{x}).
\label{eq:latent-marginal}
\end{equation}
Equating Eq.~\ref{eq:softmax} and Eq.~\ref{eq:latent-marginal}, we obtain
\begin{equation}
\frac{\exp\big(\mathbf{f}(\mathbf{x})^{\top}\mathbf{g}(y)\big)}{\sum_{y'} \exp\big(\mathbf{f}(\mathbf{x})^{\top}\mathbf{g}(y')\big)} \;=\; \sum_{\mathbf{z}} p(y|\mathbf{z})\,p(\mathbf{z}|\mathbf{x}).
\label{eq:softmax-latent}
\end{equation}

Taking logarithms on both sides, one arrives at
\begin{equation}
\mathbf{f}(\mathbf{x})^{\top}\mathbf{g}(y) - \log Z(\mathbf{x}) 
= \log \sum_{\mathbf{z}} p(y|\mathbf{z})\,p(\mathbf{z}|\mathbf{x}),
\label{eq:log-link}
\end{equation}
where $Z(\mathbf{x})=\sum_{y'} \exp(\mathbf{f}(\mathbf{x})^{\top}\mathbf{g}(y'))$ is the partition function.  

We now focus on the right-hand side. Using Bayes’ rule and the conditional independence assumption $y \perp \mathbf{x}\mid \mathbf{z}$, we can decompose:
\begin{align}
\log p(y|\mathbf{x})
&= \mathbb{E}_{p(\mathbf{z}|y)} \Big[ \log \frac{p(y,\mathbf{z}|\mathbf{x})}{p(\mathbf{z}|y,\mathbf{x})} \Big] \\
&= \mathbb{E}_{p(\mathbf{z}|y)}\!\left[\log p(\mathbf{z}|\mathbf{x}) \right] 
   + \mathbb{E}_{p(\mathbf{z}|y)}\!\left[\log p(y|\mathbf{z})\right] 
   - \mathbb{E}_{p(\mathbf{z}|y)}\!\left[\log p(\mathbf{z}|y,\mathbf{x})\right].
\label{eq:bayes-decomp}
\end{align}

Combining Eq.~\ref{eq:log-link} and Eq.~\ref{eq:bayes-decomp}, we arrive at
\begin{equation}
\mathbf{f}(\mathbf{x})^{\top}\mathbf{g}(y) - \log Z(\mathbf{x}) 
= \mathbb{E}_{p(\mathbf{z}|y)}\!\left[\log p(\mathbf{z}|\mathbf{x})\right]
- \mathbb{E}_{p(\mathbf{z}|y)}\!\left[\log p(\mathbf{z}|y,\mathbf{x})\right]
+ b_y,
\label{eq:final-link}
\end{equation}
where we set $b_y := \mathbb{E}_{p(\mathbf{z}|y)}[\log p(y|\mathbf{z})]$ for notational convenience.  

For concreteness, let $y_0,y_1,\ldots,y_m$ denote the outcomes satisfying the diversity condition in condition~\ref{app:itm:diversity}. In particular, for $y=y_0$ we have
\begin{align}
\mathbf{f}(\mathbf{x})^{\top}\mathbf{g}(y_0) - \log Z(\mathbf{x}) 
&= \sum_{\mathbf{z}} p(\mathbf{z}|y_0) \log p(\mathbf{z}|\mathbf{x}) 
- h_{y_0} + b_{y_0},
\label{eq:link-y0}
\end{align}
with $h_{y_0} := \sum_{\mathbf{z}} p(\mathbf{z}|y_0)\log p(\mathbf{z}|y_0,\mathbf{x})$.  
Similarly, for $y=y_1$,
\begin{align}
\mathbf{f}(\mathbf{x})^{\top}\mathbf{g}(y_1) - \log Z(\mathbf{x}) 
&= \sum_{\mathbf{z}} p(\mathbf{z}|y_1) \log p(\mathbf{z}|\mathbf{x}) 
- h_{y_1} + b_{y_1}.
\label{eq:link-y1}
\end{align}

Subtracting Eq.~\eqref{eq:link-y0} from Eq.~\eqref{eq:link-y1}, we obtain
\begin{align}
\big(\mathbf{g}(y_1)-\mathbf{g}(y_0)\big)^{\top}\mathbf{f}(\mathbf{x})
&= \Big(\sum_{\mathbf{z}} \big(p(\mathbf{z}|y_1)-p(\mathbf{z}|y_0)\big)\,\log p(\mathbf{z}|\mathbf{x})\Big) 
- (h_{y_1}-h_{y_0}) + (b_{y_1}-b_{y_0}).
\label{eq:diff}
\end{align}

Since $y$ can take $m+1$ distinct values, Eq.~\ref{eq:diff} yields $m$ linearly independent equations. Collecting them together, we obtain
\begin{align}
&\underbrace{\big(\mathbf{f}_y(y_1)-\mathbf{f}_y(y_0),\ldots,\mathbf{f}_y(y_\ell)-\mathbf{f}_y(y_0)\big)^{\top}}_{\mathbf{ L}^{\top}}
\mathbf{f}_\mathbf{x}(\mathbf{x})\\
&=
\underbrace{\big([p(\mathbf{z}|y_1)-p(\mathbf{z}|y_0)]_{\mathbf{z}},\ldots,[p(\mathbf{z}|y_\ell)-p(\mathbf{z}|y_0)]_{\mathbf{z}}\big)^{\top}}_{\mathbf{\hat L}}
[\log p(\mathbf{z}|\mathbf{x})]_{\mathbf{z}} \notag \\
&\quad - \underbrace{[h_{y_1}-h_{y_0},\ldots,h_{y_\ell}-h_{y_0}]}_{\mathbf{h}_y}
+ \underbrace{[b_{y_1}-b_{y_0},\ldots,b_{y_\ell}-b_{y_0}]}_{\mathbf{b}_y}.
\label{app: eq: bayes-infer-yall}
\end{align}

By the diversity condition, the matrix $\mathbf{ L}\in\mathbb{R}^{m\times m}$ is invertible. Hence we can solve for $\mathbf{f}_\mathbf{x}(\mathbf{x})$:
\begin{align}
\mathbf{f}_\mathbf{x}(\mathbf{x})
= {(\mathbf{ L}^{\top})^{-1}\mathbf{\hat L}}
[\log p(\mathbf{z}|\mathbf{x})]_{\mathbf{z}}
- (\mathbf{ L}^{\top})^{-1}\mathbf{h}_y
+ \underbrace{(\mathbf{ L}^{\top})^{-1}\mathbf{b}_y}_{\mathbf{b}}.
\end{align}

Then, as $H(\mathbf{z} \mid \mathbf{x}) \to 0$, by lemmas \ref{lemma:full-marginals} and \ref{app: lemma2}, we have:  
\begin{align}
\mathbf{f}_\mathbf{x}(\mathbf{x})
= (\mathbf{ L}^{\top})^{-1}\mathbf{\hat L} \underbrace{\mathbf S
\bigl[[\log p(z_1 \mid \mathbf x)]_{z_1};\dots; [\log p(z_\ell \mid \mathbf x)]_{z_\ell} \bigr]}_{\text{by Lemma C.2}}
- \underbrace{(\mathbf{ L}^{\top})^{-1}\mathbf{h}_y}_{\text{by Lemma C.3, $\to 0$}}
+ {\mathbf{b}}.
\label{app: eq: linear}
\end{align}
\end{proof}
Finally, defining $\mathbf A = (\mathbf{ L}^{\top})^{-1}\mathbf{\hat L} \mathbf S $ completes the proof.

\newpage
\section{Justification for Log-Posterior Estimation via Linear Probing}
\label{app:sec:linear}
\setcounter{theorem}{0}   
\renewcommand{\thetheorem}{3.1} 
\begin{corollary}
Suppose Theorem~\ref{app:theory} holds, i.e.,
\begin{align}
\mathbf{f}(\mathbf{x}) \approx  
\mathbf{A} \bigl[\, [\log p(z_1 \mid \mathbf x)]_{z_1}; \dots; [\log p(z_\ell \mid \mathbf x)]_{z_\ell} \,\bigr] + \mathbf{b}.
\end{align}
Let $\mathbf{x}_0$ and $\mathbf{x}_1$ be two counterfactual samples that differ only in the $i$-th latent concept $z_i$, each with its own ground-truth label. Then the corresponding representations $(\mathbf{f}(\mathbf{x}_0), \mathbf{f}(\mathbf{x}_1))$ are linearly separable with respect to these labels. In particular, there exists a weight matrix $\mathbf{W}$ such that $\mathbf{W}\,\tilde{\mathbf{A}}^{(i)} \approx \mathbf{I}$,
and the associated logits recover the marginal posterior $[p(z_i \mid \mathbf{x})]_{z_i}$ over all possible values of $z_i$. In the context where $z_i$ is binary, the logits reduce to a two-dimensional vector, and the softmax recovers the marginal posterior $
p(z_i = 0 \mid \mathbf{x}), \text{ or equivalently, } p(z_i = 1 \mid \mathbf{x}) = 1 - p(z_i = 0 \mid \mathbf{x}).
$
\label{app:linearprobing}
\end{corollary}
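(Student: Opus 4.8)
The plan is to make the role of the $i$-th block of the mixing matrix explicit and then read off the linear-probing claim. First I would partition the stacked log-posterior vector into its $\ell$ component blocks and write $\mathbf{f}(\mathbf{x}) \approx \sum_{j=1}^\ell \mathbf{A}^{(j)} [\log p(z_j \mid \mathbf{x})]_{z_j} + \mathbf{b}$, where $\mathbf{A}^{(j)} \in \mathbb{R}^{m \times k_j}$ is the sub-block of $\mathbf{A}$ acting on concept $z_j$. For a counterfactual pair $(\mathbf{x}_0, \mathbf{x}_1)$ that differs only in $z_i$, all blocks $j \neq i$ contribute (approximately) identically to $\mathbf{f}(\mathbf{x}_0)$ and $\mathbf{f}(\mathbf{x}_1)$, so the representation \emph{difference} isolates the $i$-th block: $\mathbf{f}(\mathbf{x}_1) - \mathbf{f}(\mathbf{x}_0) \approx \mathbf{A}^{(i)} \bigl( [\log p(z_i \mid \mathbf{x}_1)]_{z_i} - [\log p(z_i \mid \mathbf{x}_0)]_{z_i} \bigr)$. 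This is the key structural observation; everything else is linear algebra.

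Next I would argue that a suitable left inverse of $\mathbf{A}^{(i)}$ exists. Writing $\tilde{\mathbf{A}}^{(i)}$ for the centered block (e.g. $\mathbf{A}^{(i)}$ with one reference column subtracted, matching the usual softmax gauge freedom, so that it has full column rank $k_i - 1$ after removing the redundant direction), the Diversity Condition~\ref{itm:diversity} together with invertibility of $\mathbf{L}$ in Theorem~\ref{theory} guarantees $\mathbf{A} = (\mathbf{L}^{\top})^{-1}\hat{\mathbf{L}}\mathbf{S}$ has the requisite rank structure on each block, so one can take $\mathbf{W} = (\tilde{\mathbf{A}}^{(i)})^{+}$ (Moore–Penrose pseudoinverse) and obtain $\mathbf{W}\tilde{\mathbf{A}}^{(i)} \approx \mathbf{I}$. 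Applying $\mathbf{W}$ to $\mathbf{f}(\mathbf{x})$ (with the constant shift absorbed into a bias term) then yields logits that are affine in $[\log p(z_i \mid \mathbf{x})]_{z_i}$; passing through the softmax — which inverts the log-up-to-additive-constant map — recovers the marginal posterior $[p(z_i \mid \mathbf{x})]_{z_i}$. Specializing to binary $z_i$, the centered block collapses to a single column, $\mathbf{W}$ becomes a row vector, the logit is scalar, and the two-way softmax returns $p(z_i = 0 \mid \mathbf{x})$, hence also $p(z_i = 1 \mid \mathbf{x}) = 1 - p(z_i = 0 \mid \mathbf{x})$.

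Finally, linear separability with respect to the counterfactual labels follows because the labels of $\mathbf{x}_0$ and $\mathbf{x}_1$ are (by construction of the counterfactual pair) exactly the values of $z_i$, and the recovered posterior assigns mass consistent with those values; the decision boundary of the linear classifier $\mathbf{W}$ then separates the two representations. I would also remark that the $\approx$ signs carry over the same $H(\mathbf{z}\mid\mathbf{x})\to 0$ error terms as in Theorem~\ref{theory}, plus the approximation incurred by treating the non-$i$ blocks as exactly equal across the pair (which is controlled whenever the counterfactual intervention on $z_i$ does not perturb the marginals of $z_{-i}$, i.e. when the pair is a genuine single-concept counterfactual).

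The main obstacle I anticipate is the gauge/rank bookkeeping: the softmax in Eq.~\ref{eq: next-token} is invariant under adding a constant to all logits, so $\mathbf{A}^{(i)}$ is only determined up to a column-space shift and each block is rank $k_i - 1$, not $k_i$. Stating precisely what "$\mathbf{W}\tilde{\mathbf{A}}^{(i)} \approx \mathbf{I}$" means (on which quotient space, with which normalization of $\log p$) and checking that the softmax cleanly undoes the residual additive constant is where the proof needs care; the rest is a direct consequence of Theorem~\ref{theory} and elementary properties of the pseudoinverse and softmax.
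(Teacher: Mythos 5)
Your proposal follows essentially the same route as the paper's own proof: pass $\mathbf{f}(\mathbf{x})\approx\mathbf{A}\,\mathbf{g}(\mathbf{x})+\mathbf{b}$ through a linear classifier, demand that the logits equal $[\log p(z_i\mid\mathbf{x})]_{z_i}$ up to an additive constant, read off the condition $\mathbf{W}\tilde{\mathbf{A}}^{(i)}\approx\mathbf{I}$ on the block of columns of $\mathbf{A}$ associated with $z_i$, and let the softmax return the marginal posterior (with the binary case as a specialization). In fact you supply more detail than the paper does — the counterfactual-difference decomposition isolating $\mathbf{A}^{(i)}$, the explicit pseudoinverse construction of $\mathbf{W}$, and the gauge/rank caveat — whereas the paper simply posits $\mathbf{W}\tilde{\mathbf{A}}^{(i)}\approx\mathbf{I}$ as the requirement for correct classification under cross-entropy; note that neither your choice $\mathbf{W}=(\tilde{\mathbf{A}}^{(i)})^{+}$ nor the paper's argument addresses the $\mathbf{x}$-dependent contributions of the other blocks $\mathbf{W}\mathbf{A}^{(j)}$, $j\neq i$, but this looseness is shared with, not worse than, the paper's own proof.
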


\paragraph{Intuition.}  
The key idea is that each latent concept contributes to the representation along a distinct linear direction. Changing only one concept shifts the representation along its direction, so a simple linear classifier can isolate this change and recover the marginal posterior. For binary concepts, this reduces to a one-dimensional separation, while for multi-class concepts, each class corresponds to its own direction.

\begin{proof}
Consider the approximation
\[
\mathbf{f}(\mathbf{x}) \approx  
\mathbf{A} \mathbf{g}(\mathbf{x}) + \mathbf{b}, \quad
\mathbf g(\mathbf{x}) = \bigl[\, [\log p(z_1 \mid \mathbf x)]_{z_1}; \dots; [\log p(z_\ell \mid \mathbf x)]_{z_\ell} \,\bigr].
\]

For the counterfactual samples $\mathbf{x}_0$ and $\mathbf{x}_1$ differing only in $z_i$, we pass the representations into a linear classifier with weights $\mathbf{W}$. 
The classifier produces logits
\begin{align}
    \textbf{logits} \;\approx\; \mathbf{W}\bigl(\mathbf{A} \mathbf g(\mathbf{x}) + \mathbf b\bigr),
    \label{app:eq:logit1}
\end{align}
where $\textbf{logits}$ is a vector over all possible values of $z_i$. 
In the binary case, this is a two-dimensional vector
\[
[\log p(z_i=0 \mid \mathbf x), \log p(z_i=1 \mid \mathbf x)]^\top,
\]
and in the multi-class case, it contains one entry per category.

For correct classification under cross-entropy loss, the logits should recover the log-posterior for all categories (up to an additive constant):
\begin{align}
    \textbf{logits} = \bigl[ \log p(z_i = k \mid \mathbf x) \bigr]_{k} + \text{const}, 
    \label{app:eq:logit2}
\end{align}
where $k$ indexes all possible values of $z_i$, and the constant does not affect the softmax output.

Comparing \eqref{app:eq:logit1} and \eqref{app:eq:logit2}, we require
\begin{align}
    \mathbf{W}\,\tilde{\mathbf{A}}^{(i)} \;\approx\; \mathbf I,
    \label{app:eq:condition}
\end{align}
where $\tilde{\mathbf{A}}^{(i)}$ is the block of columns of $\mathbf A$ associated with all possible values of $z_i$. 
This condition ensures that the classifier isolates the contribution from $z_i$ and produces the correct logits.

\medskip
\noindent
\textbf{Binary case:} When $z_i$ is binary, $\tilde{\mathbf{A}}^{(i)}$ has two columns, and the logits reduce to a two-dimensional vector $[\log p(z_i=0 \mid \mathbf x), \log p(z_i=1 \mid \mathbf x)]^\top$. 
After softmax, we directly obtain the marginal posterior
\[
p(z_i = 0 \mid \mathbf{x}), \qquad p(z_i = 1 \mid \mathbf{x}) = 1 - p(z_i = 0 \mid \mathbf{x}).
\]

Therefore, the counterfactual pair $(\mathbf{x}_0,\mathbf{x}_1)$ is linearly separable with respect to their ground-truth labels in both the general multi-class and binary cases.
\end{proof}

\newpage
\section{Experimental Details}
\label{app:exp}
\paragraph{Training Data}
For all experiments, we use pre-trained LLMs downloaded from \url{https://huggingface.co/}, including Pythia-70m, 1.4b, 2.8b \citep{biderman2023pythia}, Gemma3-1b \citep{team2025gemma}, and Qwen3-1.7b \citep{qwen3technicalreport}. LLM representations are extracted from these models using the first 200 million tokens of the Pile dataset, obtained from \url{https://huggingface.co/datasets/EleutherAI/the_pile_deduplicated} \citep{gao2020pile}. For each token, we record the corresponding representation from the model's last hidden layer, aligned with Theorem~\ref{theory}. These pre-extracted representations form the training data for the proposed sparse ConCA and SAEs. 

\begin{table}[ht]
\centering
\caption{Counterfactual concept pairs used for evaluation, adapted from \citet{park2023linear}.}
\label{tab:counterfactual_pairs}
\begin{tabular}{c p{4cm} p{4.5cm} r}
\toprule
\# & \textbf{Concept} & \textbf{Example} & \textbf{Word Pair Counts} \\ 
\midrule
\multicolumn{4}{l}{\textit{Verb inflections}} \\ 
1  & verb \(\longrightarrow\) 3pSg & (accept, accepts) & 50 \\
2  & verb \(\longrightarrow\) Ving & (add, adding) & 50 \\
3  & verb \(\longrightarrow\) Ved & (accept, accepted) & 50 \\
4  & Ving \(\longrightarrow\) 3pSg & (adding, adds) & 50 \\
5  & Ving \(\longrightarrow\) Ved & (adding, added) & 50 \\
6  & 3pSg \(\longrightarrow\) Ved & (adds, added) & 50 \\
7  & verb \(\longrightarrow\) V + able & (accept, acceptable) & 50 \\
8  & verb \(\longrightarrow\) V + er & (begin, beginner) & 50 \\
9  & verb \(\longrightarrow\) V + tion & (compile, compilation) & 50 \\
10 & verb \(\longrightarrow\) V + ment & (agree, agreement) & 50 \\
\midrule
\multicolumn{4}{l}{\textit{Adjective transformations}} \\
11 & adj \(\longrightarrow\) un + adj & (able, unable) & 50 \\
12 & adj \(\longrightarrow\) adj + ly & (according, accordingly) & 50 \\
21 & adj \(\longrightarrow\) comparative & (bad, worse) & 87 \\
22 & adj \(\longrightarrow\) superlative & (bad, worst) & 87 \\
23 & frequent \(\longrightarrow\) infrequent & (bad, terrible) & 86 \\
\midrule
\multicolumn{4}{l}{\textit{Size, thing, noun}} \\
13 & small \(\longrightarrow\) big & (brief, long) & 25 \\
14 & thing \(\longrightarrow\) color & (ant, black) & 50 \\
15 & thing \(\longrightarrow\) part & (bus, seats) & 50 \\
16 & country \(\longrightarrow\) capital & (Austria, Vienna) & 158 \\
17 & pronoun \(\longrightarrow\) possessive & (he, his) & 4 \\
18 & male \(\longrightarrow\) female & (actor, actress) & 52 \\
19 & lower \(\longrightarrow\) upper & (always, Always) & 73 \\
20 & noun \(\longrightarrow\) plural & (album, albums) & 100 \\
\midrule
\multicolumn{4}{l}{\textit{Language translations}} \\
24 & English \(\longrightarrow\) French & (April, avril) & 116 \\
25 & French \(\longrightarrow\) German & (ami, Freund) & 128 \\
26 & French \(\longrightarrow\) Spanish & (annee, año) & 180 \\
27 & German \(\longrightarrow\) Spanish & (Arbeit, trabajo) & 228 \\
\bottomrule
\end{tabular}
\end{table}

\paragraph{Testing Data for Results in Figures~\ref{fig:ablation_summary} and \ref{fig:joint}.}
For evaluation, we use counterfactual text pairs that differ in only a single concept while keeping all other aspects unchanged. We emphases again that constructing such pairs is challenging due to the complexity of natural language, as also highlighted in \citet{park2023linear,jiang2024origins}. We adopt 27 high-precision counterfactual concepts from \citet{park2023linear}, derived from the Big Analogy Test dataset \citep{gladkova2016analogy}, as our testing dataset. Table~\ref{tab:counterfactual_pairs} lists the 27 concepts, one illustrative pair per concept, and the number of pairs used for evaluation. Despite its modest size, this benchmark suffices to meaningfully distinguish method performance and validate the sensitivity of our evaluation framework.

\paragraph{Testing Data for Downsteam Tasks in Figure \ref{fig:few_shot_sae}.}

For the few-show learning setting in Figure \ref{fig:few_shot_sae}, we leverage a previously collected set of 113 binary classification datasets from \citet{kantamneni2025sparse}, covering diverse tasks including challenging cases such as front-page headline detection and logical entailment. Each dataset provides prompts and binary targets (0 or 1), with prompt lengths ranging from 5 to 1024 tokens. Refer to Table 3 in \citet{kantamneni2025sparse} for details. For the out-of-distribution task in Figure \ref{fig:few_shot_sae}, we also leverage 8 datasets from \citet{kantamneni2025sparse}. These include: These include: 2 preexisting GLUE-X datasets designed as “extreme” versions of tasks testing grammaticality and logical entailment, 3 datasets with altered language, i.e., Tanslated to Frech, Spanish, and German, and 3 datasets with syntactic modifications substitutions of names (Fictional Characters, Random Letter Inserted, and Reversed Name Order) with cartoon characters. Probes are trained in standard settings and evaluated on these out-of-distribution test examples. Both two dataset can be downloaded from \url{https://github.com/JoshEngels/SAE-Probes/tree/main}.

\paragraph{Training Pipeline.}
All ConCA and SAE variants use a feature dimension of $2^{15}$, based on empirical settings from sparse SAEs. They are trained for 20,000 optimization steps with a batch size of 10,000, using the Adam optimizer with an initial learning rate of $1\times10^{-4}$ and a linear warm-up over the first 200 steps. For the top-$k$ and batch-top-$k$ SAEs, $k$ is set to 32. P-annealing SAEs incorporate a sparsity warm-up of 400 steps with an initial sparsity penalty coefficient $0.1$. All experiments are run on a server equipped with 4 NVIDIA A100 GPUs.

\paragraph{Pearson correlation coefficient} We use the PCC as the evaluation metric, as described in experiments. 
Algorithm~\ref{alg:sae_evaluation} summarizes the procedure: for each of the 27 counterfactual concept pairs, we first obtain concept logits using a supervised linear classifier. 
The same inputs are then passed through the trained SAE to extract latent features, which are exponentiated and stacked into a feature matrix. 
We compute the Pearson correlation between each SAE feature and the corresponding concept logit, and solve the assignment problem using the Hungarian algorithm to account for permutation indeterminacy. 
The mean Pearson correlation across all concepts is reported as the final evaluation score.
\begin{algorithm}[t]
\caption{Evaluation of SAE/ConCA Concepts via Supervised Linear Classification}
\label{alg:sae_evaluation}
\begin{algorithmic}[1]
\REQUIRE Trained SAEs/ConCA, 27 counterfactual pairs $\{\mathbf{x}_i\}_{i=1}^{27}$
\ENSURE Mean Pearson correlation between SAE features and concept logits

\STATE \textbf{Step 1: Obtain concept logits}
\FOR{$i = 1$ \textbf{to} 27}
    \STATE Train LogisticRegression on the $i$-th counterfactual pair
    \STATE Compute logit $s_i = \text{logit}(c^k=1|\mathbf{f}(\mathbf{x}_i))$
\ENDFOR
\STATE Stack logits to form $\mathbf{s} = (s_1, s_2, \dots, s_{27})$

\STATE \textbf{Step 2: Extract SAEs and ConCA latent features}
\FOR{$i = 1$ \textbf{to} 27}
    \STATE Pass $\mathbf{f}(\mathbf{x}_i)$ through SAE to get latent $\mathbf{\hat z}_i$
    \STATE Compute element-wise exponentiation $\tilde{\mathbf{z}}_i = \exp(\mathbf{\hat z}_i)$
\ENDFOR
\STATE Stack features to form $\tilde{\mathbf{z}} \in \mathbb{R}^{27 \times D}$

\STATE \textbf{Step 3: Compute correlation matrix}
\FOR{$d = 1$ \textbf{to} $D$}
    \STATE Compute Pearson correlation $R_d = \text{C}(\mathbf{s}, \tilde{\mathbf{z}}_{:,d})$ ($D$ denotes SAEs/ConCA's feature dimension)
\ENDFOR

\STATE \textbf{Step 4: Solve assignment problem}
\STATE Apply Hungarian algorithm on $\mathbf{R}$ to obtain optimal assignment.
\STATE Compute assigned Pearson correlations

\STATE \textbf{Step 5: Aggregate metric}
\STATE Report mean Pearson correlation across the 27 concepts.
\end{algorithmic}
\end{algorithm}

\paragraph{AUC.} The Area Under the Curve (AUC) is used to evaluate the performance of a binary classifier for concept prediction. 
The ROC curve plots the True Positive Rate (TPR) against the False Positive Rate (FPR) at different threshold levels, 
illustrating the trade-off between correctly predicting positive instances and incorrectly predicting negative instances. 
The AUC measures the total area under this curve, ranging from 0 to 1. 
An AUC of 1.0 indicates perfect classification, 0.5 corresponds to random guessing, and values closer to 0 indicate poor performance. 
This metric provides an aggregate, threshold-independent measure of the classifier's ability to discriminate between the two classes.

\paragraph{Linear Probing in Downstream Tasks.} In our few-shot experiments, we first apply different SAEs and ConCA variants to the representations of pretrained LLMs to obtain feature embeddings. We then train a logistic regression classifier (using the LogisticRegression implementation from the scikit-learn package) on these features with limited labeled examples. To mitigate overfitting given the high dimensionality of the features, we employ an L2 penalty and select the regularization strength through cross-validation. In our out-of-distribution shift experiments, we again train a logistic regression classifier on the extracted features. To avoid overfitting to the in-distribution validation split, we fix the regularization strength to its default value (i.e., C=1.0 in LogisticRegression) instead of tuning it via cross-validation. This ensures a fairer and more stable evaluation under distribution shift.

\section{Visualization of Counterfactual Pair Experiments}
\label{app: visual}
To more clearly illustrate the advantages of the proposed ConCAs, we perform a visualization analysis based on features extracted by ConCA and SAE variants from counterfactual pairs. Specifically, we first select the top 32 ($k$ in top-$k$) features with the highest average absolute difference between a counterfactual pair, focusing on the most significant variations while avoiding dilution from less responsive features. We then compute the rank-based fraction of significant features over these 32 features across multiple thresholds to ensure robustness. \textit{The fraction measures the proportion of selected features exhibiting significant changes, providing a metric of feature sensitivity and stability in response to a single concept change. Intuitively, if the learned features are expected to capture a single concept as much as possible, their responses should be small—that is, the fraction will be low.} The results are visualized using scatter plots with mean and standard deviation to capture both distribution and central tendency. This analysis highlights how ConCAs capture selective and meaningful feature variations, complementing the quantitative metrics reported earlier. For GroupNorm, we apply the same procedure independently within each group. That is, the 32 selection and rank-based fraction calculation are performed per group, and the final metric is obtained by averaging across all groups. This ensures that group-level normalization does not interfere across groups, making the evaluation consistent for GroupNorm settings. Algorithm~\ref{alg:rank_fraction_topk} summarizes the procedure. 

Figures~\ref{fig:visual4-6}, \ref{fig:visual1-3}–\ref{fig:visual25-27} show the rank-based fraction of significant features for the proposed ConCA and SAE variants across 27 counterfactual pairs, on Pythia-2.8b. The LayerNorm configuration exhibits the smallest fractions, indicating more stable or less variant feature changes under counterfactual conditions. In contrast, the Batch-Topk and Topk SAE variants produce larger fractions, reflecting more variable feature responses. BatchNorm and Panneal configurations display similar intermediate behavior. Overall, these trends are broadly aglined with the MPC results shown in Figure~\ref{fig:joint}.

\begin{algorithm}[t]
\caption{Compute Rank-Based Fraction of Significant Features for Counterfactual Pairs}
\label{alg:rank_fraction_topk}
\begin{algorithmic}[1]
\REQUIRE Features of counterfactual pair $(\mathbf{z}_s, \mathbf{z}_t)$, 
$k$, \texttt{normalization}, number of groups $G$
\ENSURE Fraction of significant features

\STATE Compute element-wise absolute difference: $\mathbf{diff} = |\mathbf{z}_s - \mathbf{z}_t|$

\IF{\texttt{normalization} == "Group"}
    \STATE Split $\mathbf{diff}, \mathbf{z}_s, \mathbf{z}_t$ into $G$ groups
    \FOR{$g = 1$ \textbf{to} $G$}
        \STATE Select top-$k_g = \max(1, k / G)$ elements in group $g$ based on $\mathbf{diff}$
        \STATE Convert group features $\mathbf{z}_s^g, \mathbf{z}_t^g$ to percentile ranks
        \STATE Compute absolute rank differences for top-$k_g$ elements
        \STATE Evaluate significance across multiple thresholds $T = \{0.1,0.2,0.3,0.4,0.5\}$
        \STATE Compute average fraction of significant features for group $g$
    \ENDFOR
    \STATE Average fractions across groups to obtain overall fraction
\ELSE
    \STATE Select top-$k$ elements globally based on $\mathbf{diff}$
    \STATE Convert selected features $\mathbf{z}_s, \mathbf{z}_t$ to percentile ranks
    \STATE Compute absolute rank differences for top-$k$ elements
    \STATE Evaluate significance across multiple thresholds $T = \{0.1,0.2,0.3,0.4,0.5\}$
    \STATE Compute average fraction of significant features
\ENDIF

RETURN: Overall fraction of significant features
\end{algorithmic}
\end{algorithm}

\begin{figure}
\centering
\includegraphics[width=0.3\linewidth]{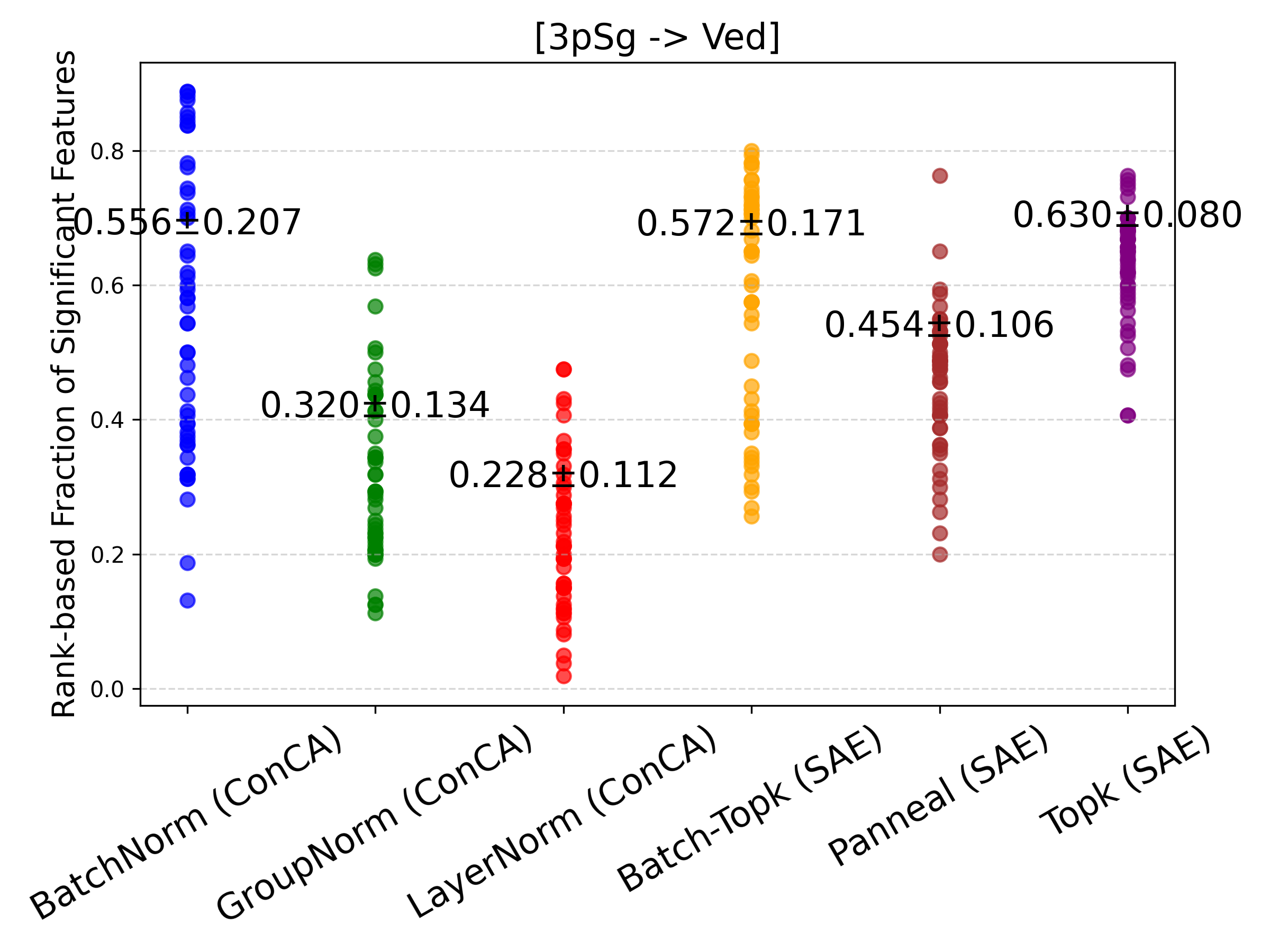}
\includegraphics[width=0.3\linewidth]{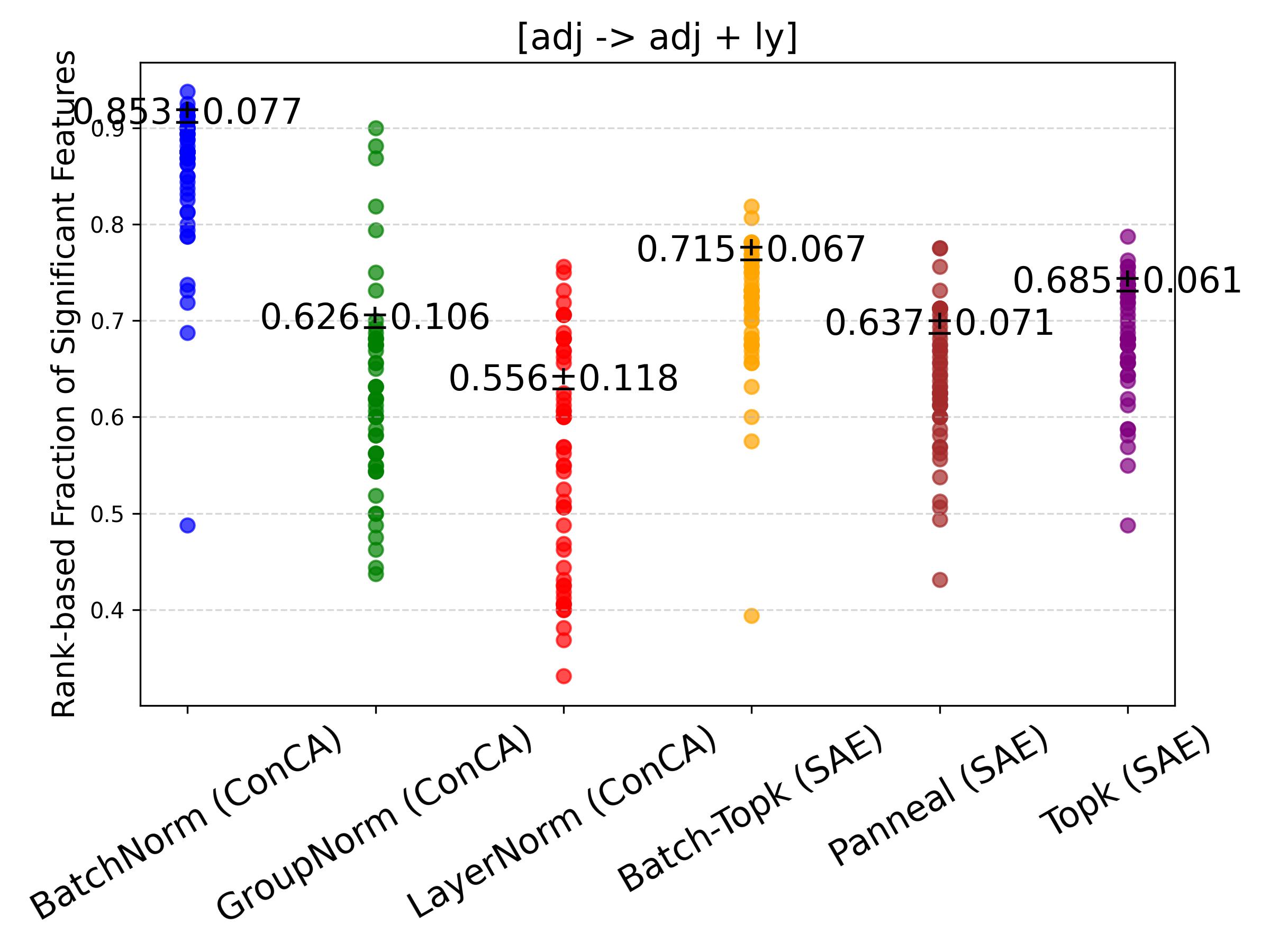}
\includegraphics[width=0.3\linewidth]{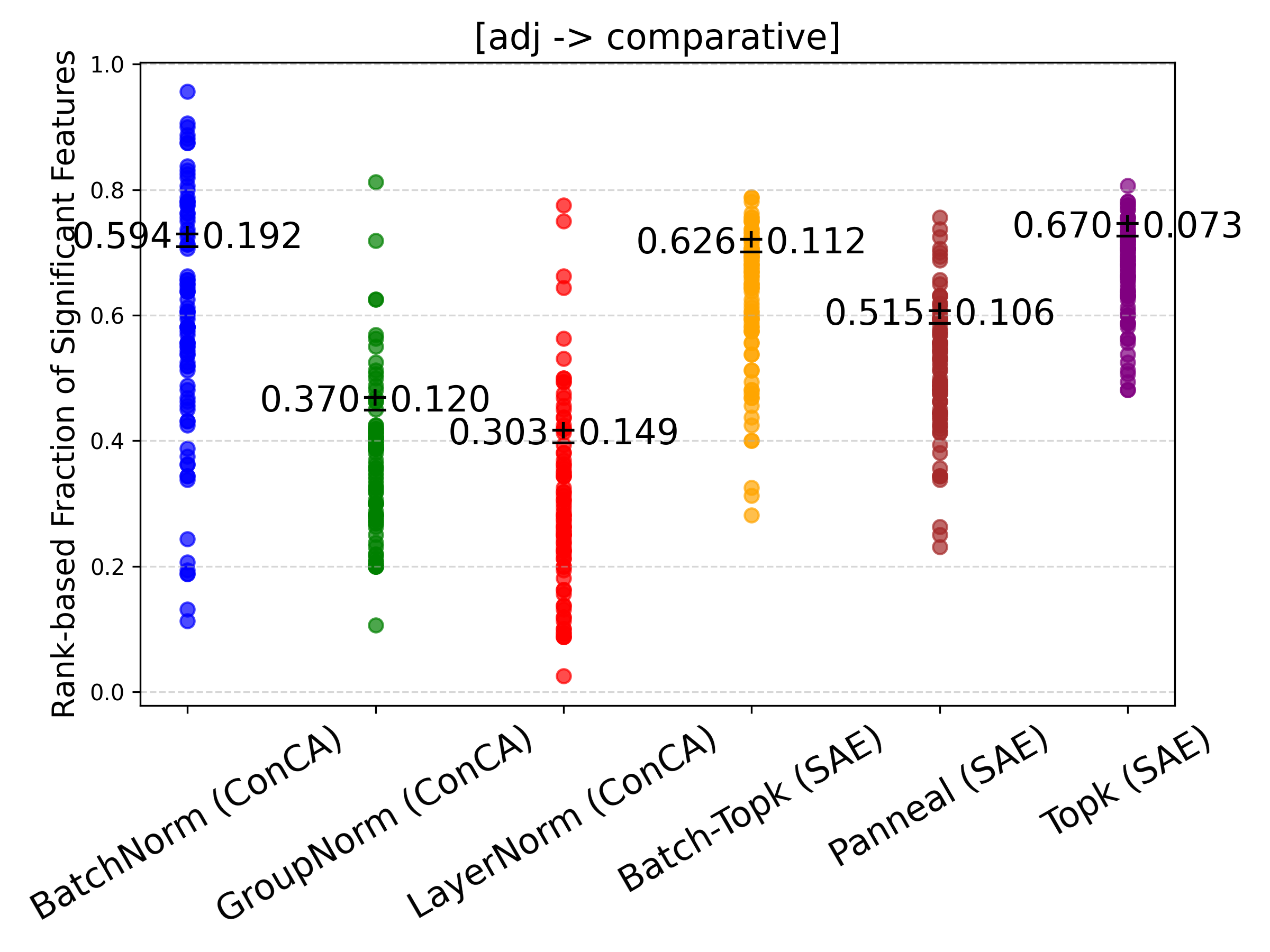}
\caption{Rank-based fraction of significant features for SAE and ConCA variants across three counterfactual pair concepts: \texttt{[3pSg - Ved]}, \texttt{[adj - adj + ly]}, and \texttt{[adj - comparative]}. For each concept, the top-32 features with highest differences are selected, and points show the fraction of features exhibiting significant changes (mean ± std across seeds). Higher fractions indicate more variant features.}
\label{fig:visual1-3}
\end{figure}

\begin{figure}
\centering
\includegraphics[width=0.3\linewidth]{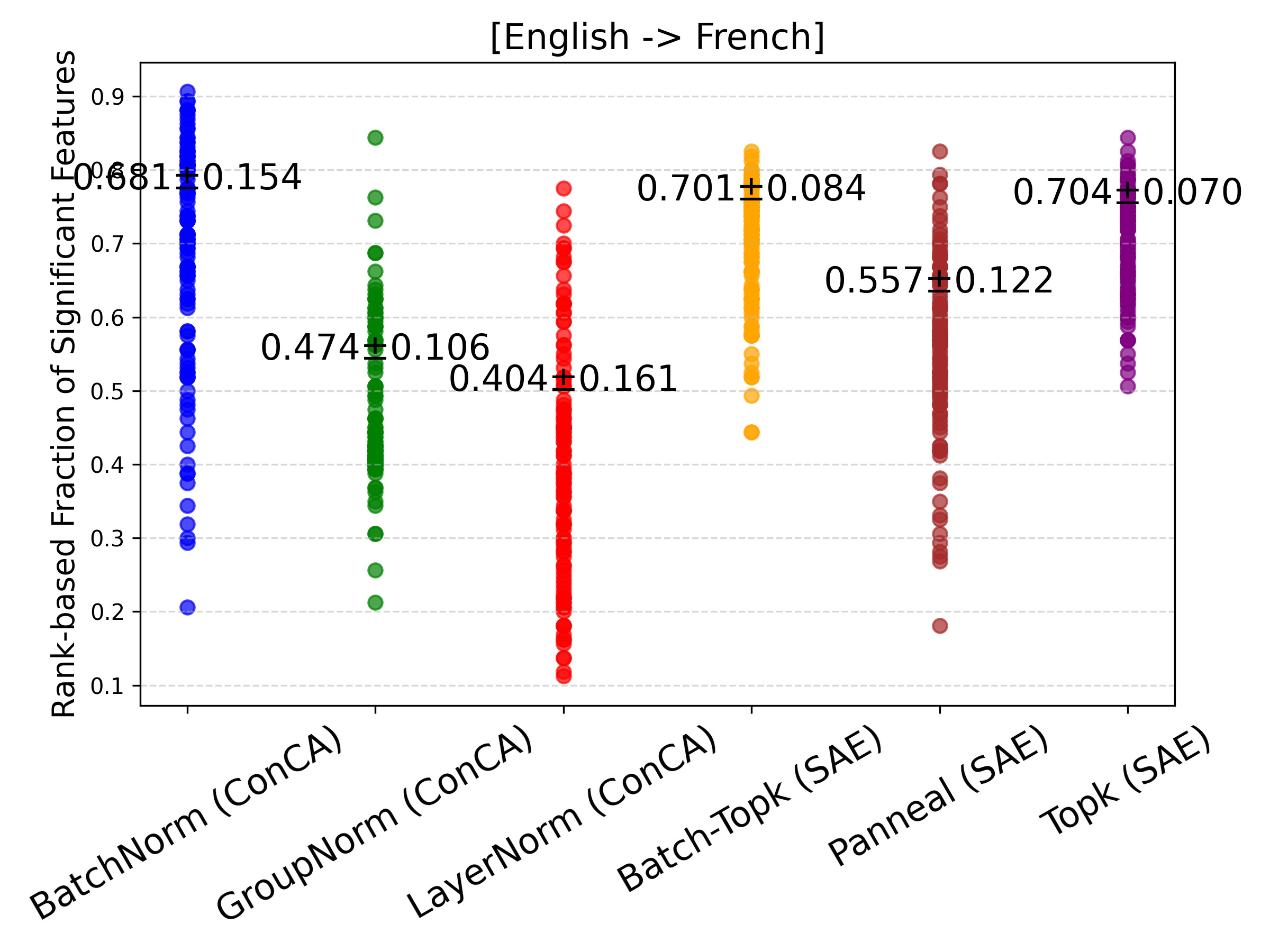}
\includegraphics[width=0.3\linewidth]{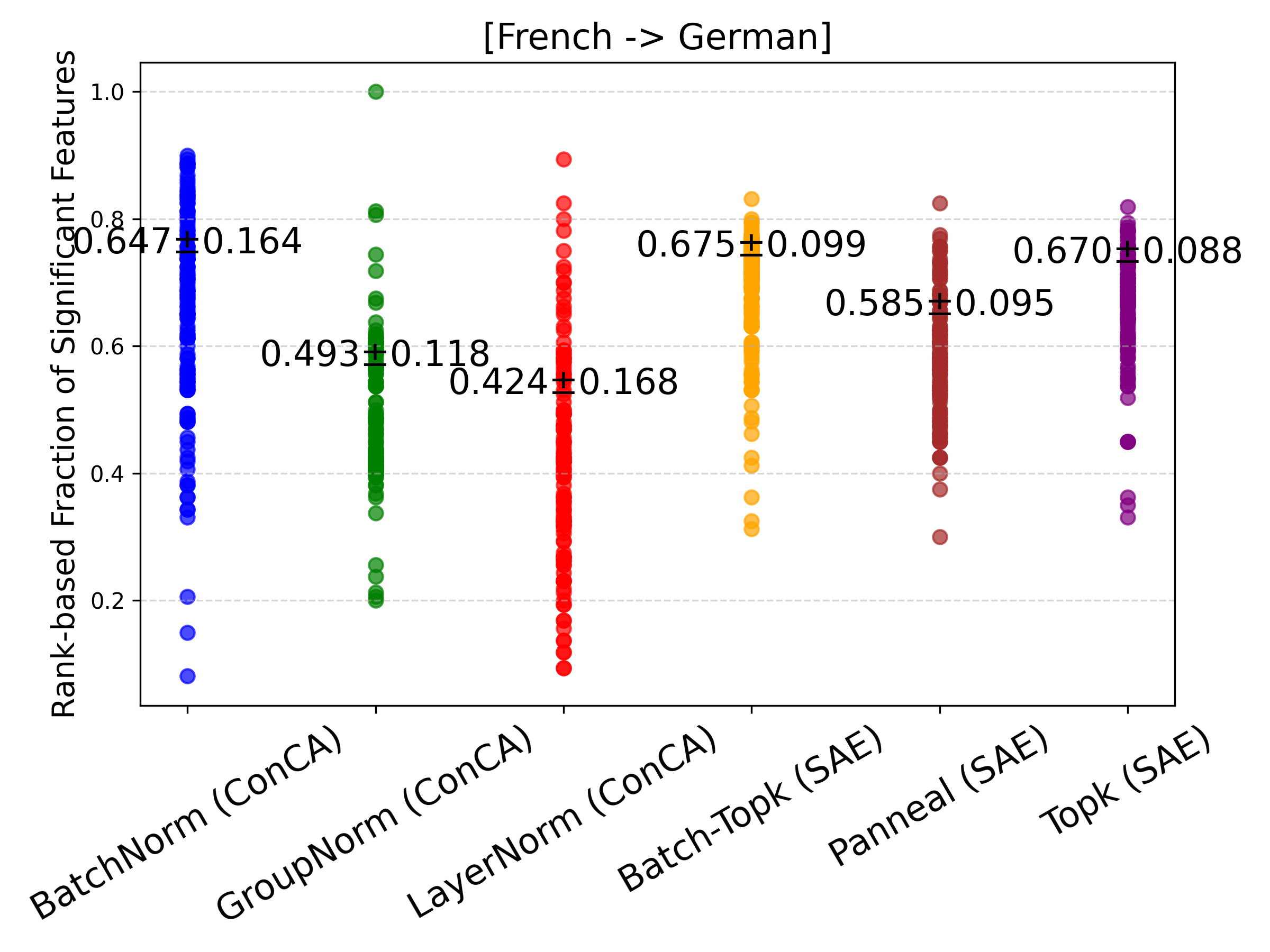}
\includegraphics[width=0.3\linewidth]{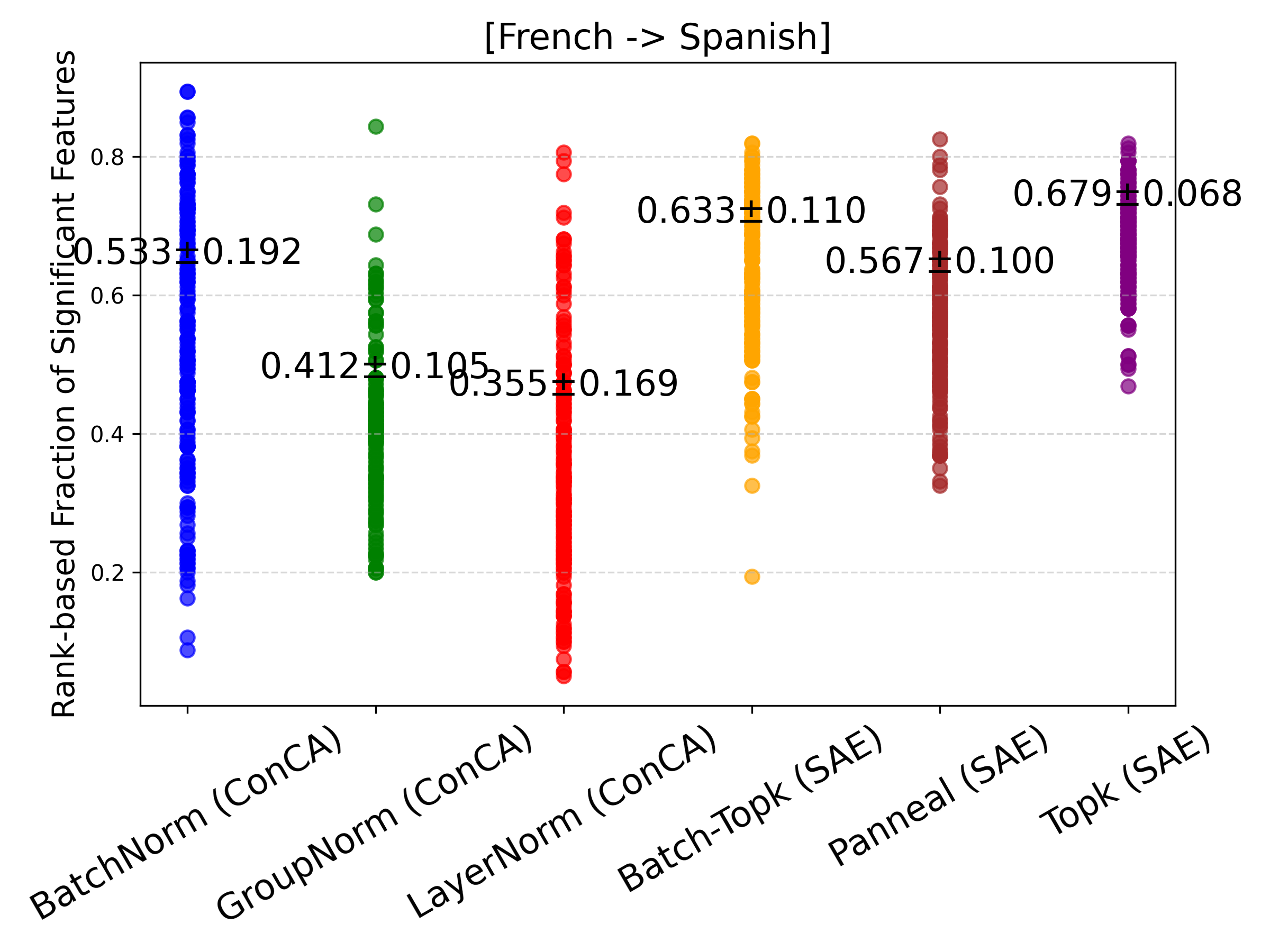}
\caption{Rank-based fraction of significant features for SAE and ConCA variants across three counterfactual pair concepts: \texttt{[English - French]}, \texttt{[French - German]}, and \texttt{[French - Spanish]}.}
\label{fig:visual7-9}
\end{figure}

\begin{figure}
\centering
\includegraphics[width=0.3\linewidth]{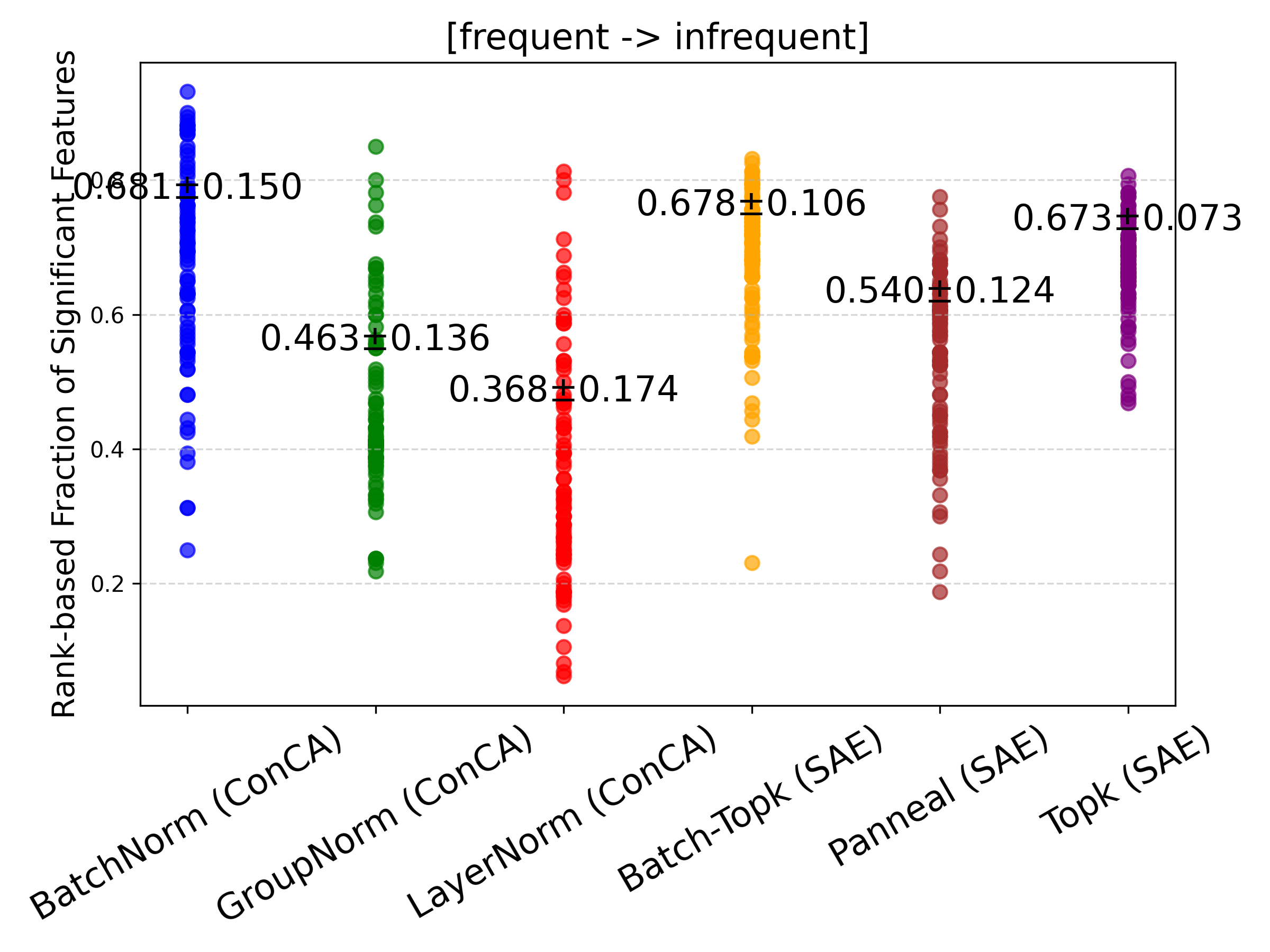}
\includegraphics[width=0.3\linewidth]{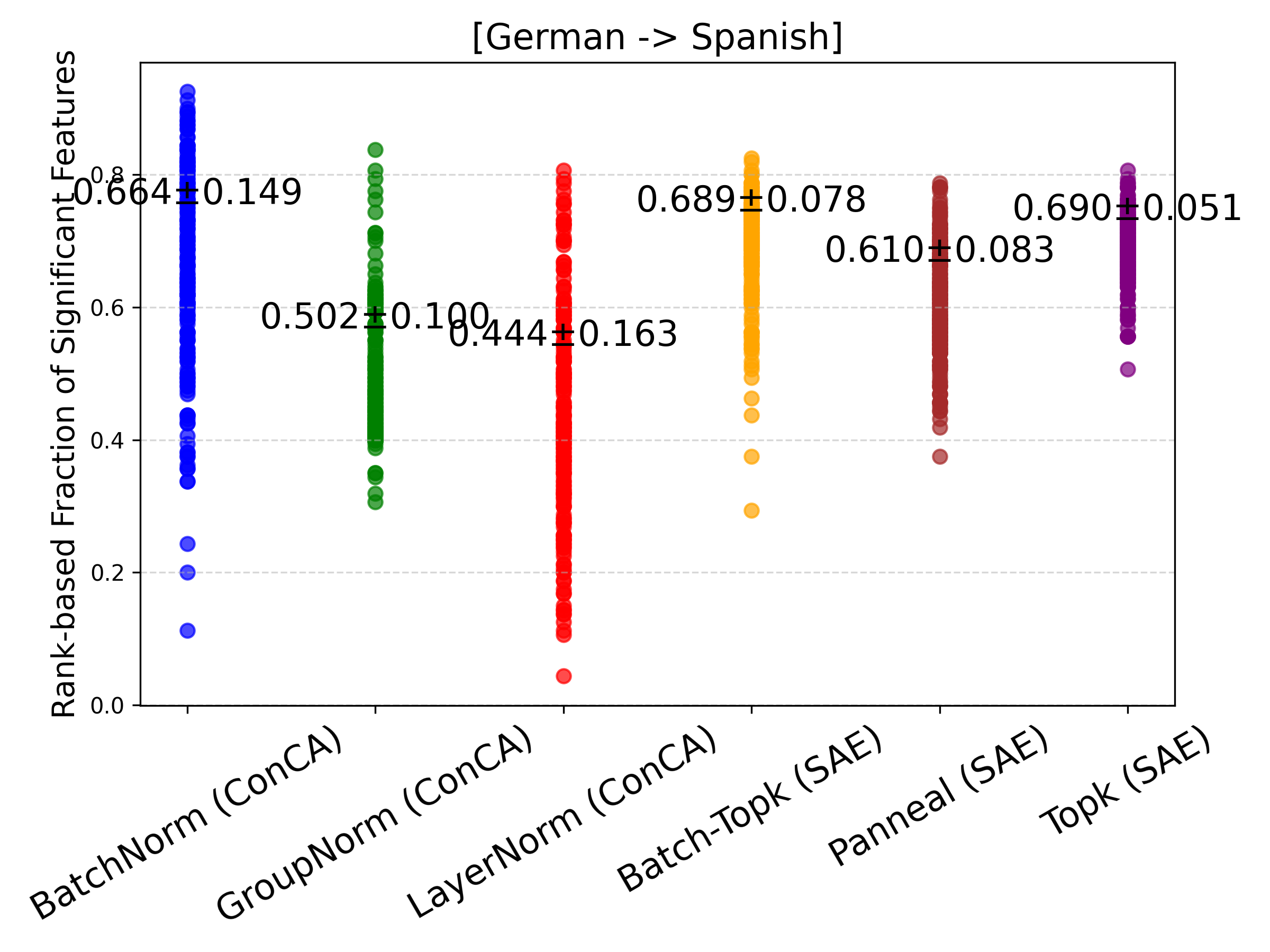}
\includegraphics[width=0.3\linewidth]{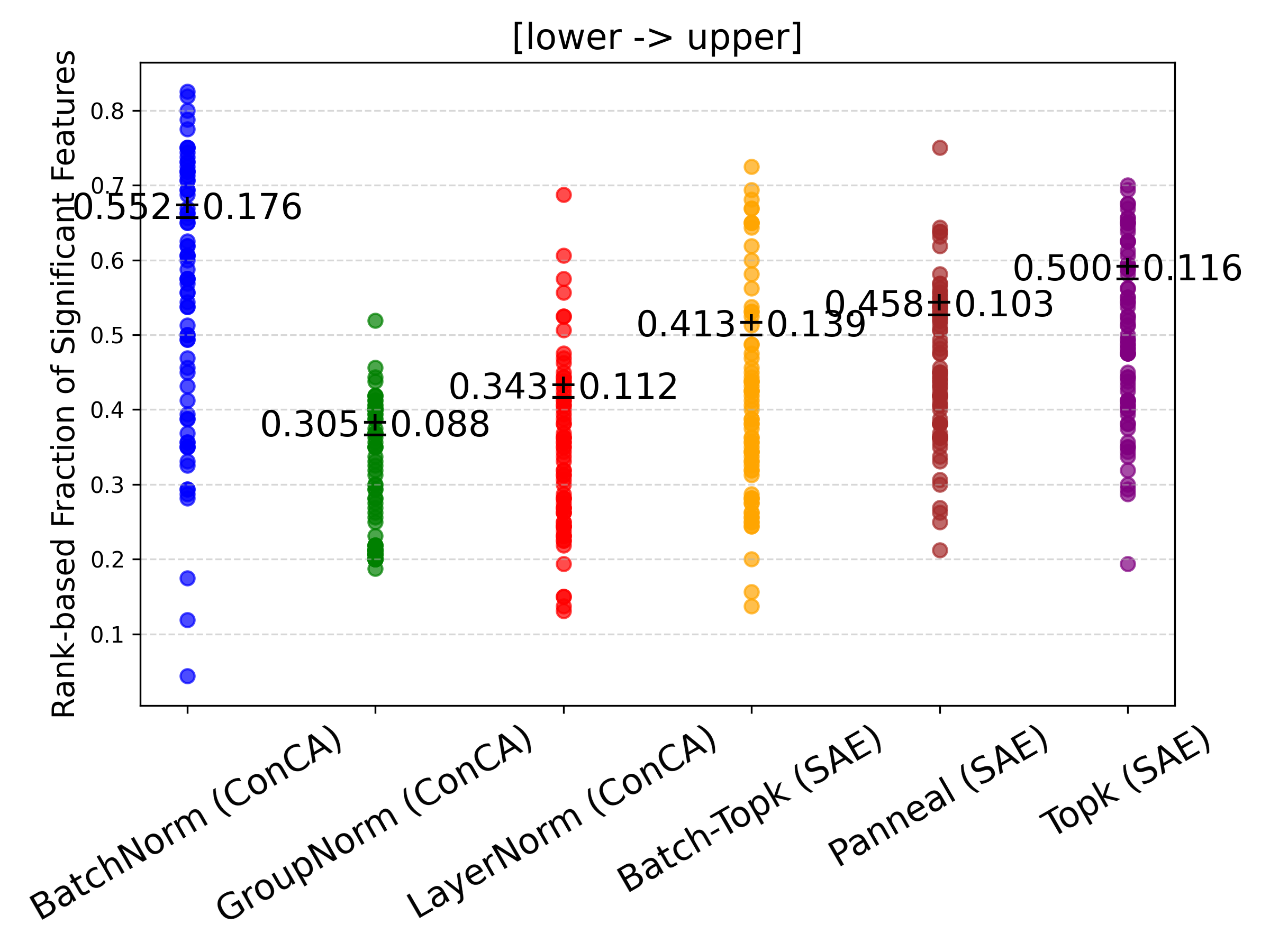}
\caption{Rank-based fraction of significant features for SAE and ConCA variants across three counterfactual pair concepts: \texttt{[frequent - infrequent]}, \texttt{[German - Spanish]}, and \texttt{[lower - upper]}.}
\label{fig:visual10-12}
\end{figure}

\begin{figure}
\centering
\includegraphics[width=0.3\linewidth]{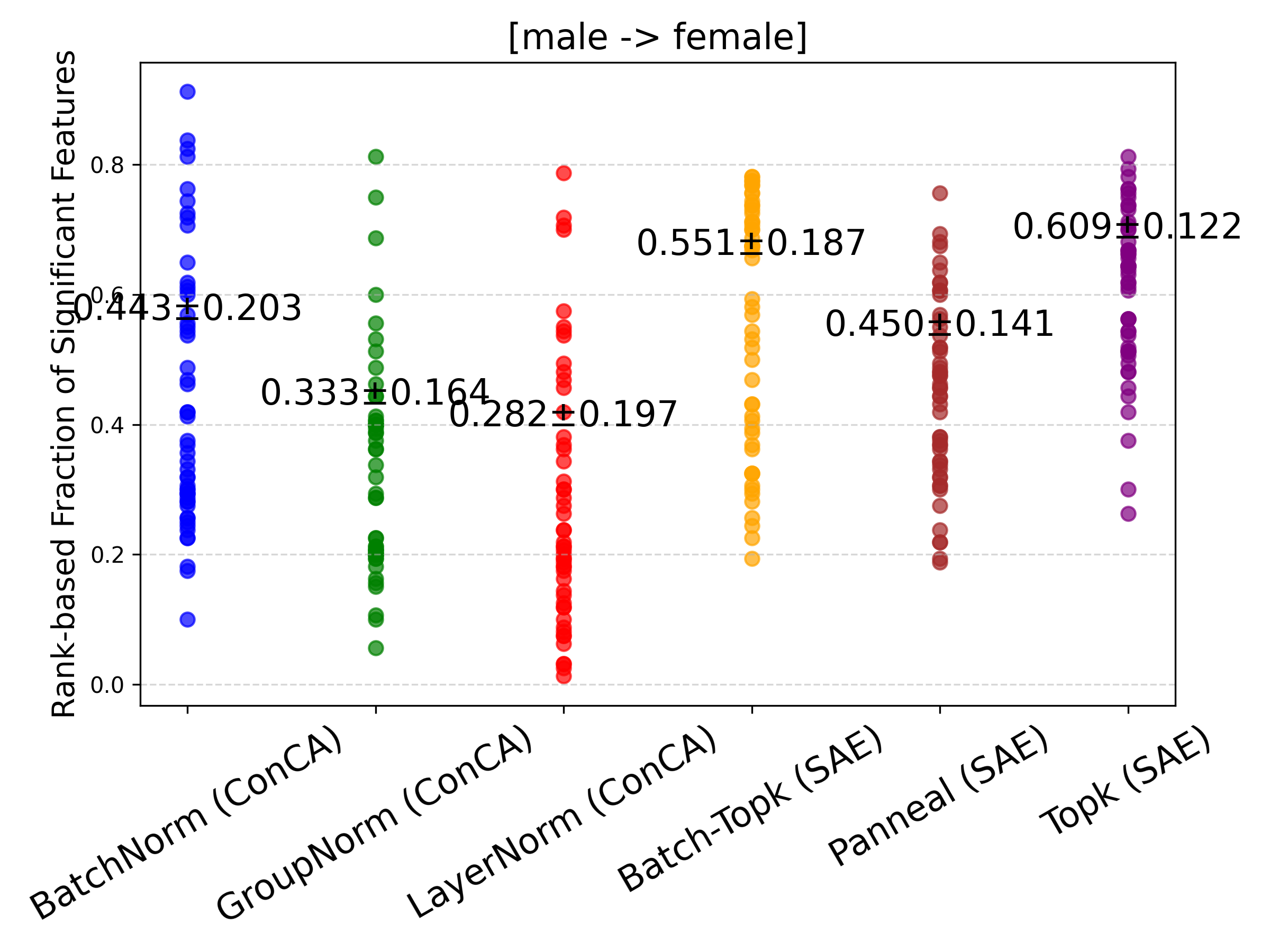}
\includegraphics[width=0.3\linewidth]{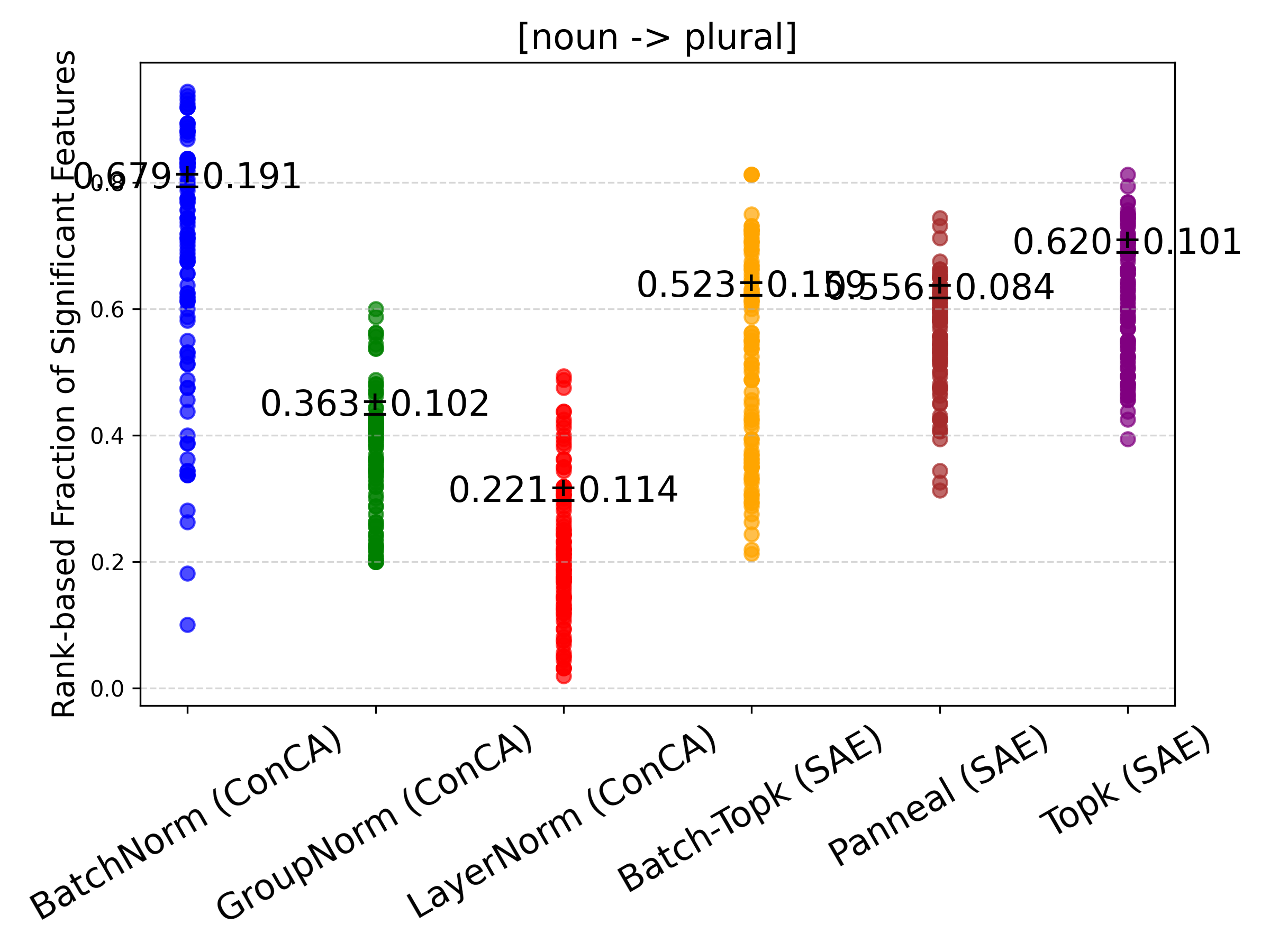}
\includegraphics[width=0.3\linewidth]{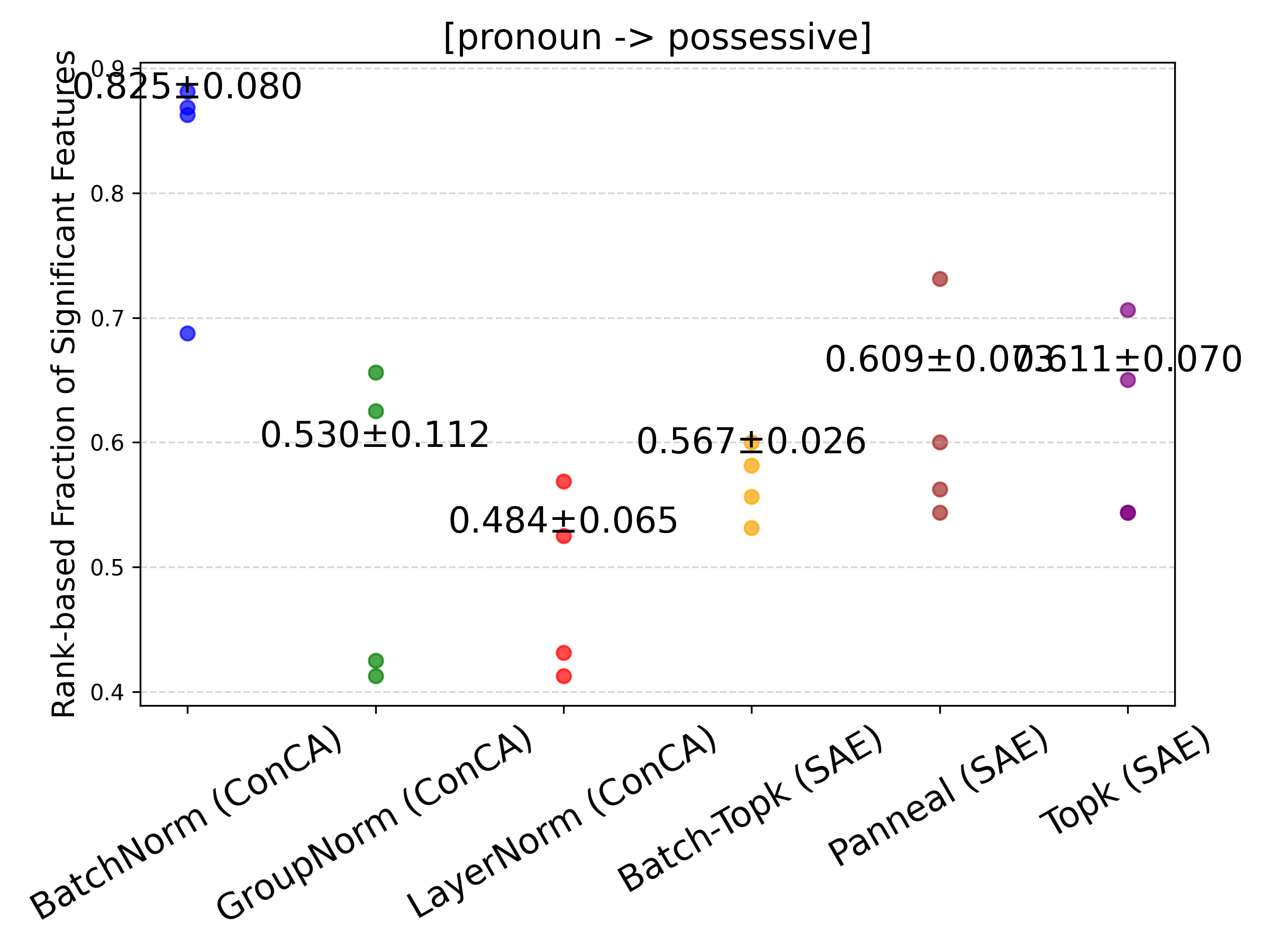}
\caption{Rank-based fraction of significant features for SAE and ConCA variants across three counterfactual pair concepts: \texttt{[male - female]}, \texttt{[noun - plural]}, and \texttt{[pronoun - possessive]}.}
\label{fig:visual13-15}
\end{figure}

\begin{figure}
\centering
\includegraphics[width=0.3\linewidth]{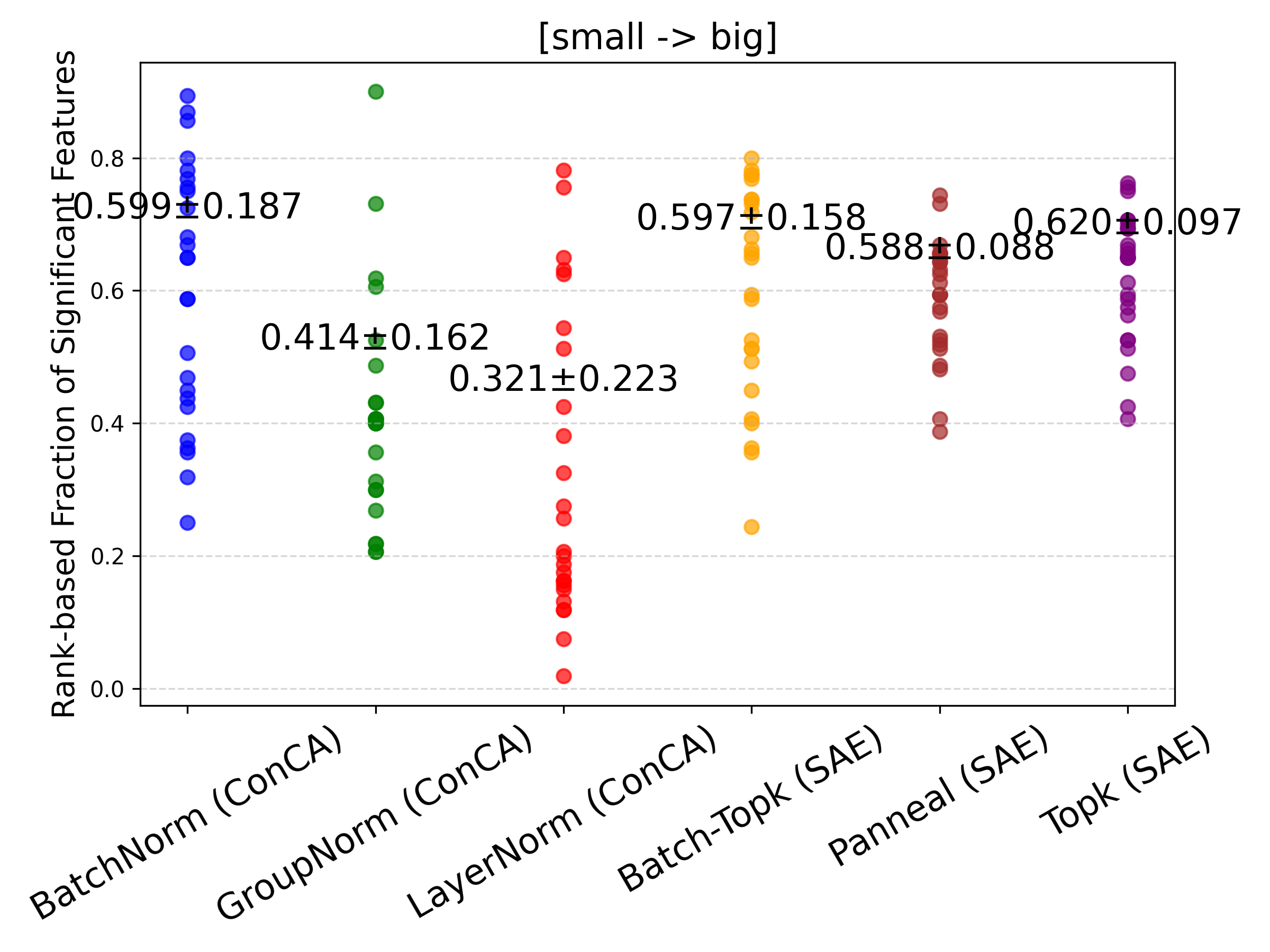}
\includegraphics[width=0.3\linewidth]{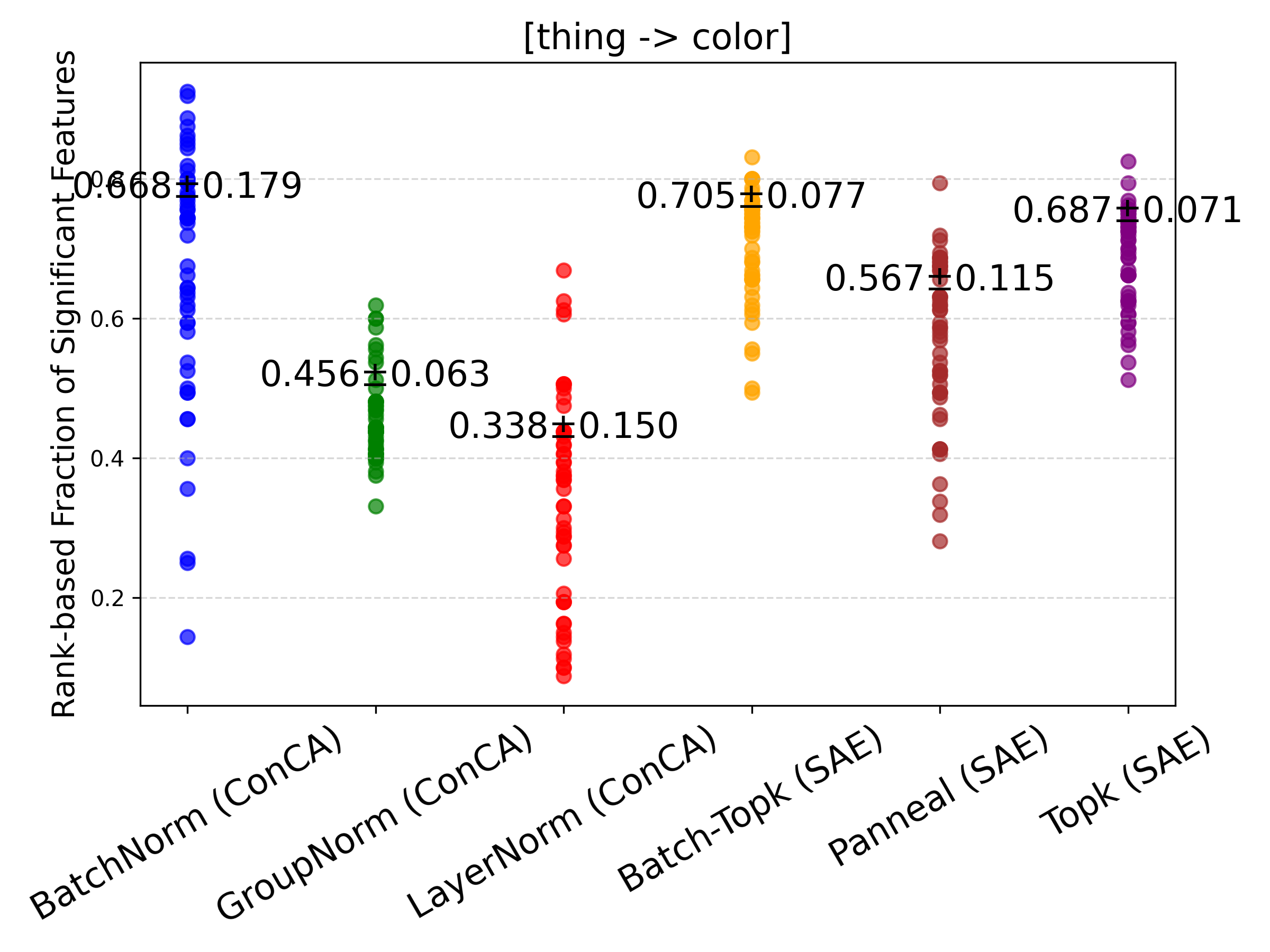}
\includegraphics[width=0.3\linewidth]{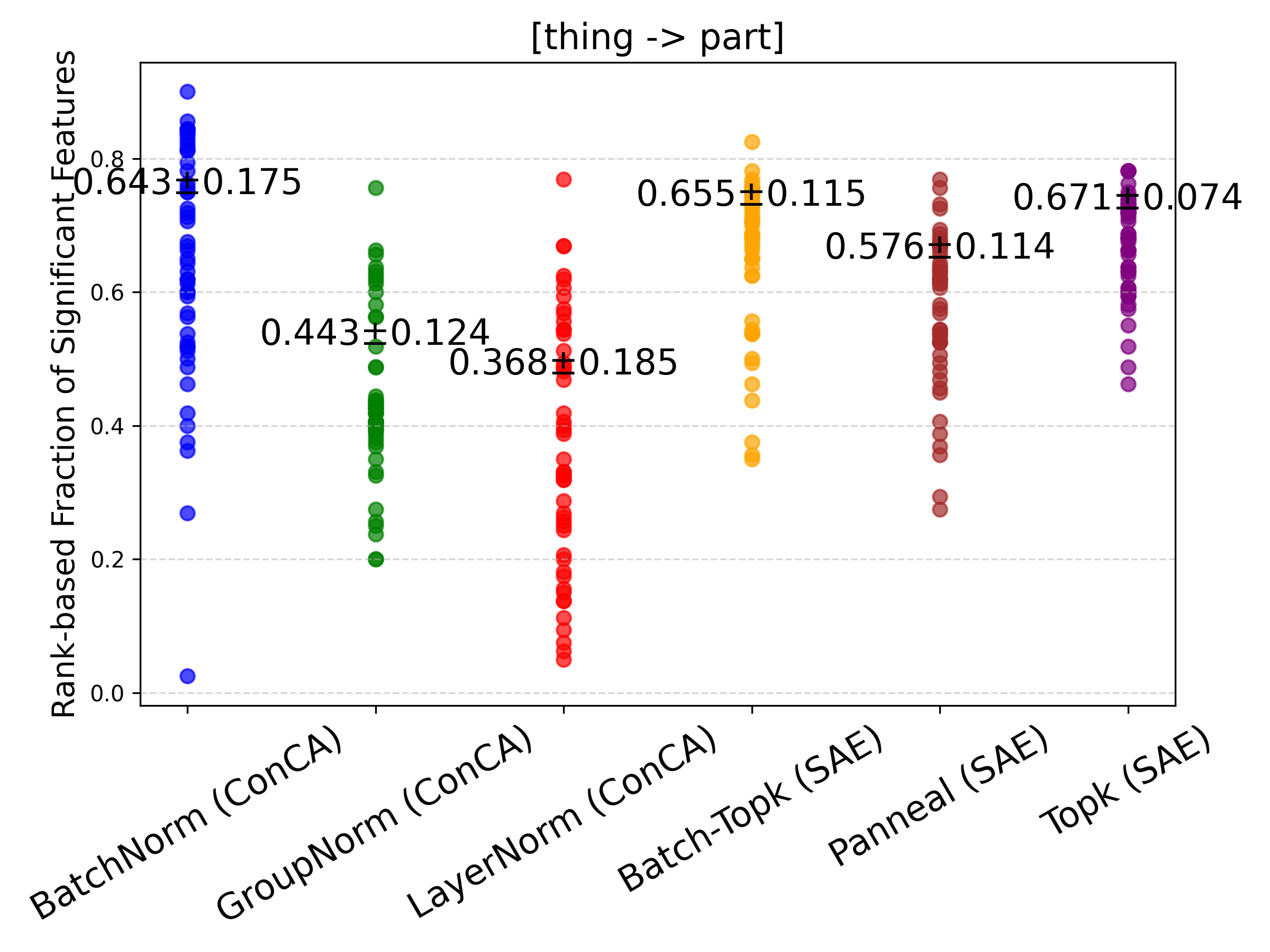}
\caption{Rank-based fraction of significant features for SAE and ConCA variants across three counterfactual pair concepts: \texttt{[small - big]}, \texttt{[thing - color]}, and \texttt{[thing - part]}.}
\label{fig:visual16-18}
\end{figure}

\begin{figure}
\centering
\includegraphics[width=0.3\linewidth]{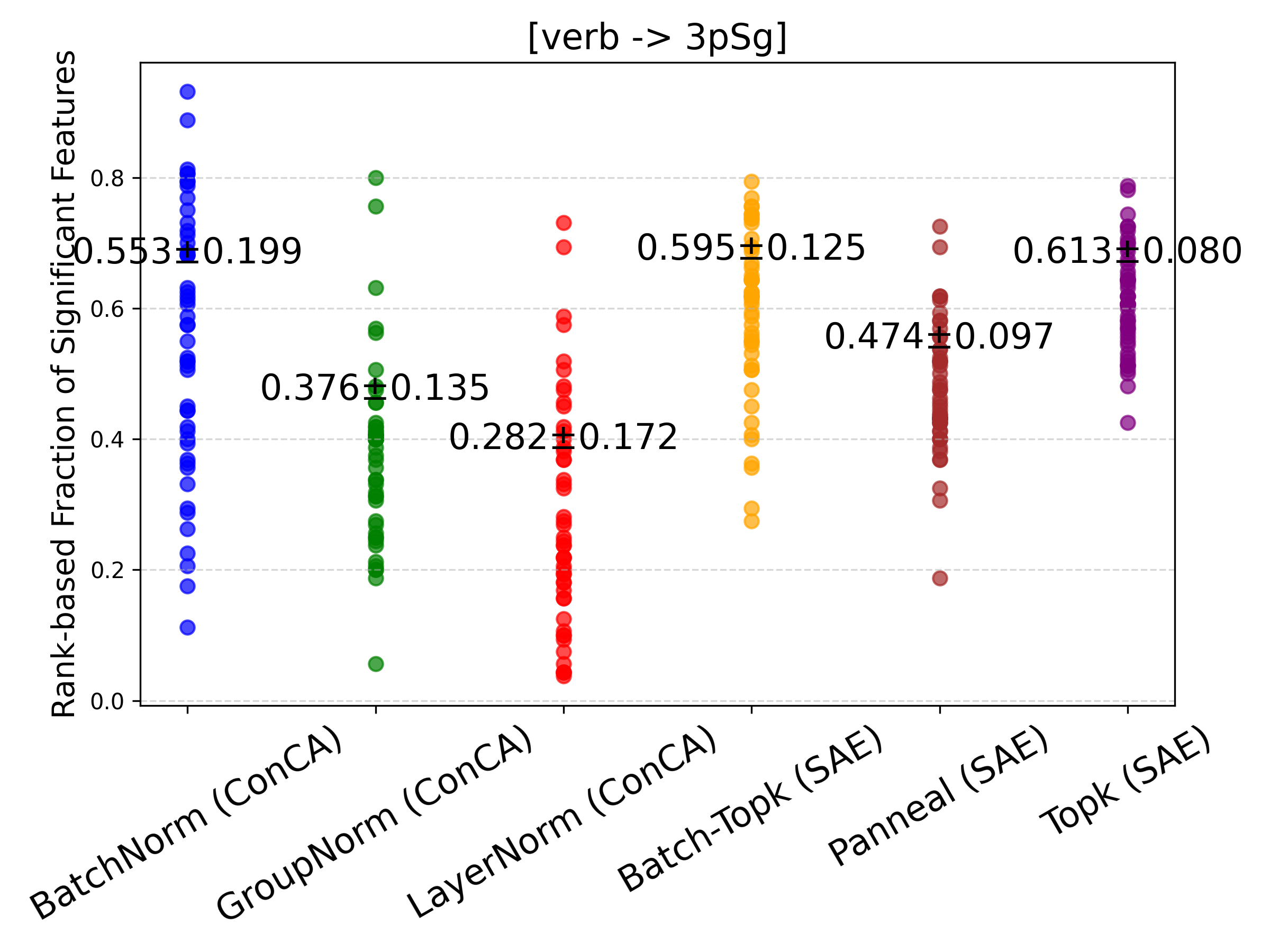}
\includegraphics[width=0.3\linewidth]{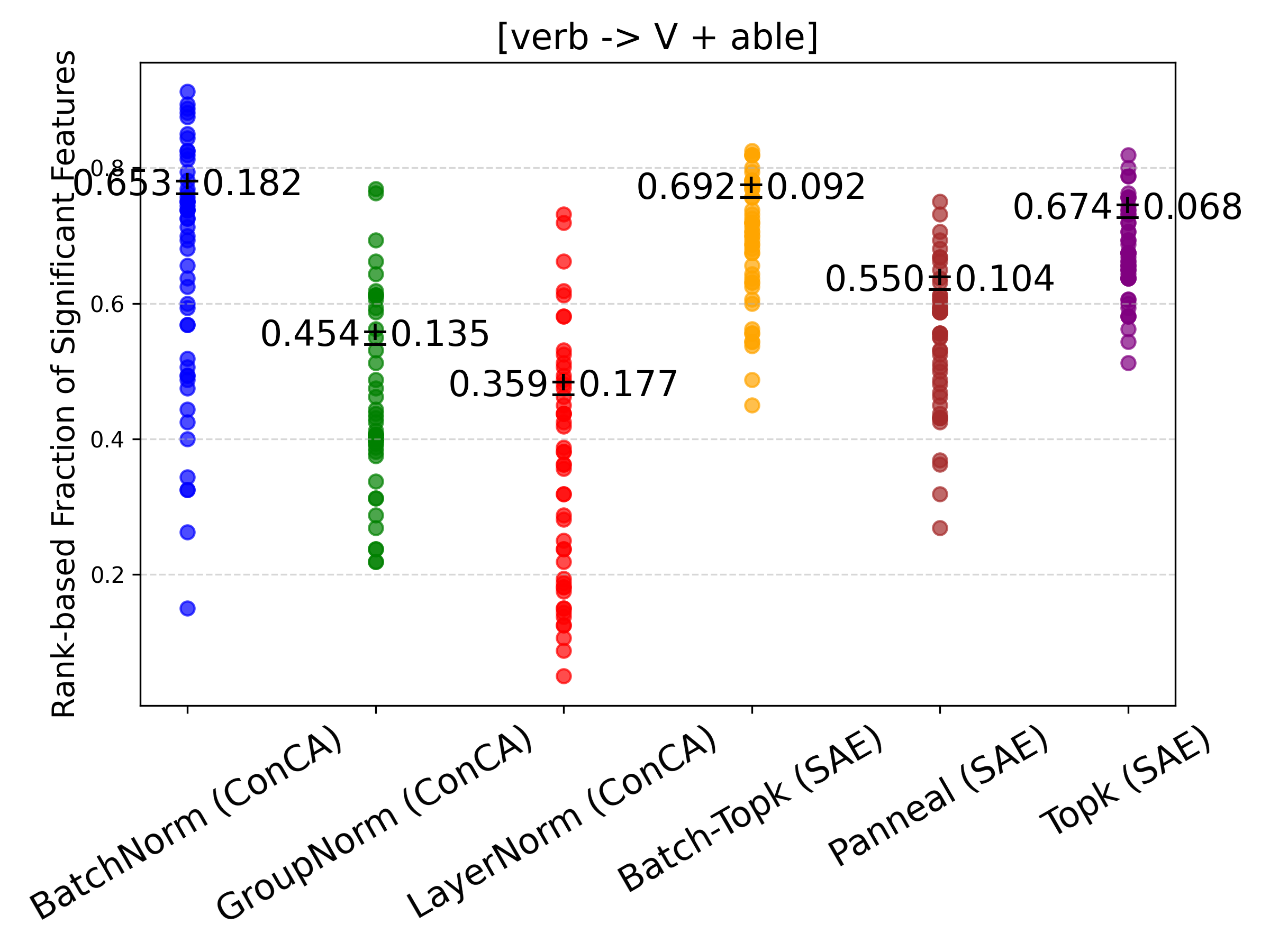}
\includegraphics[width=0.3\linewidth]{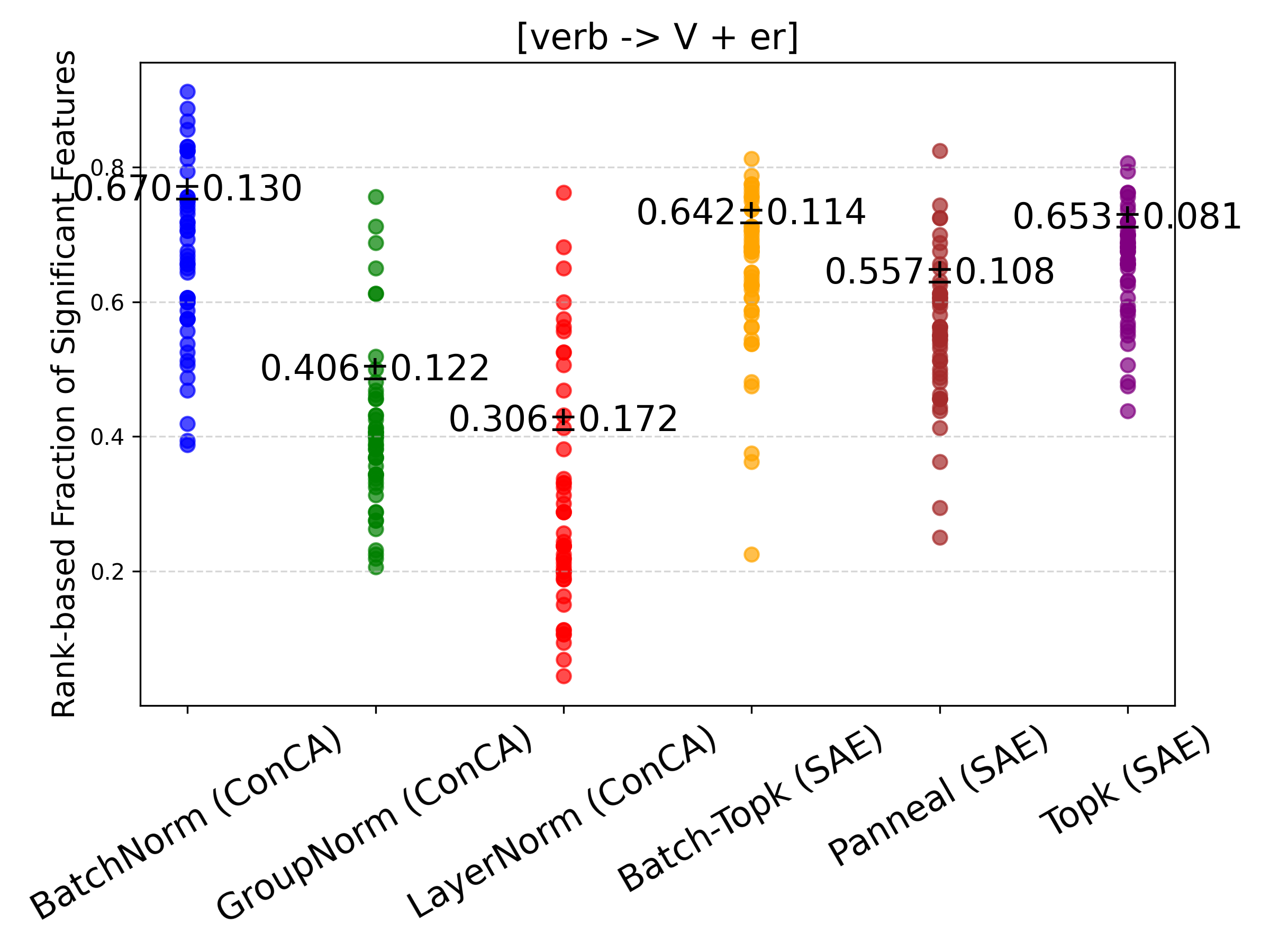}
\caption{Rank-based fraction of significant features for SAE and ConCA variants across three counterfactual pair concepts: \texttt{[verb - 3pSg]}, \texttt{[verb - V + able]}, and \texttt{[verb - V + er]}.}
\label{fig:visual19-21}
\end{figure}

\begin{figure}
\centering
\includegraphics[width=0.3\linewidth]{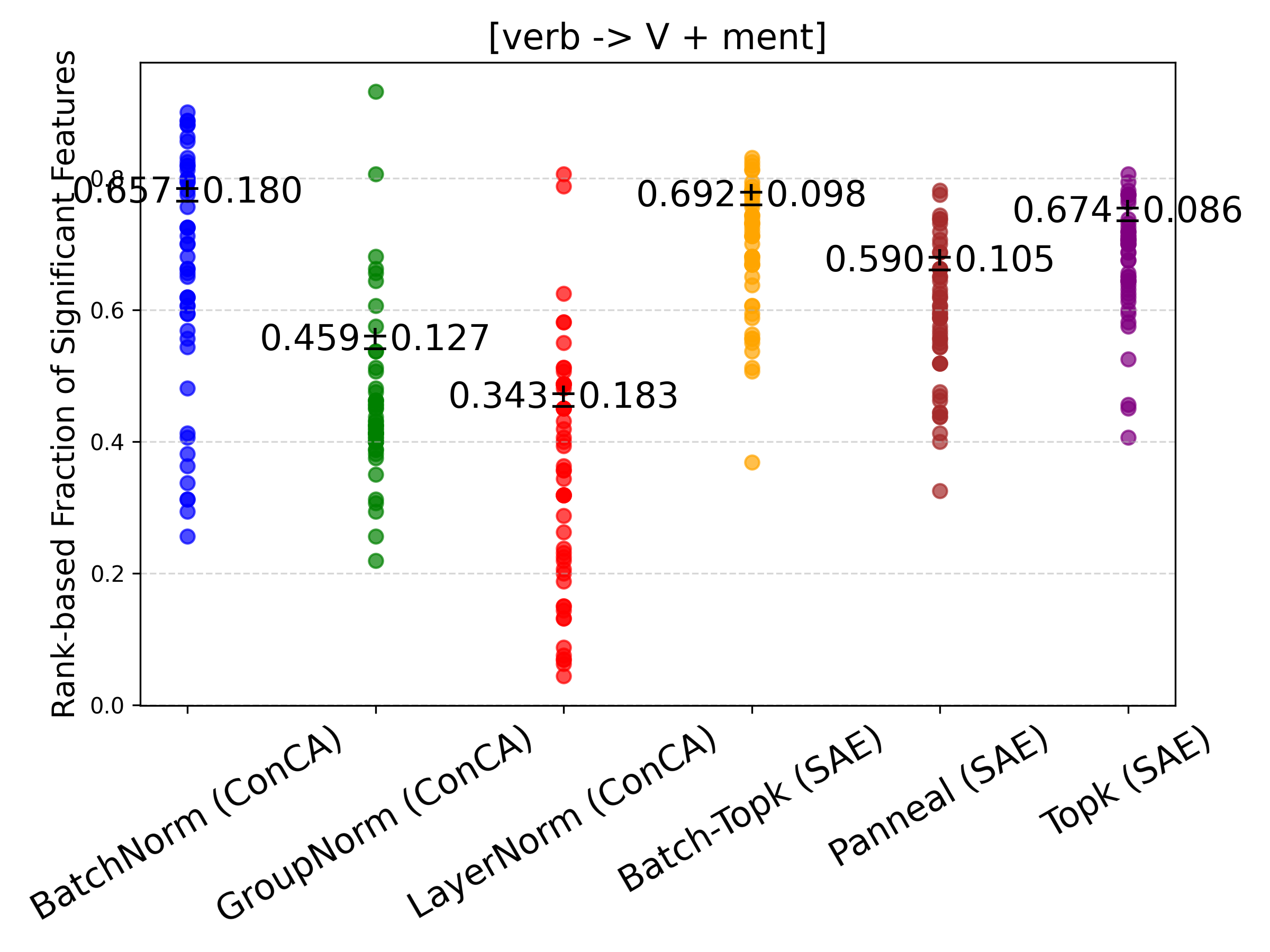}
\includegraphics[width=0.3\linewidth]{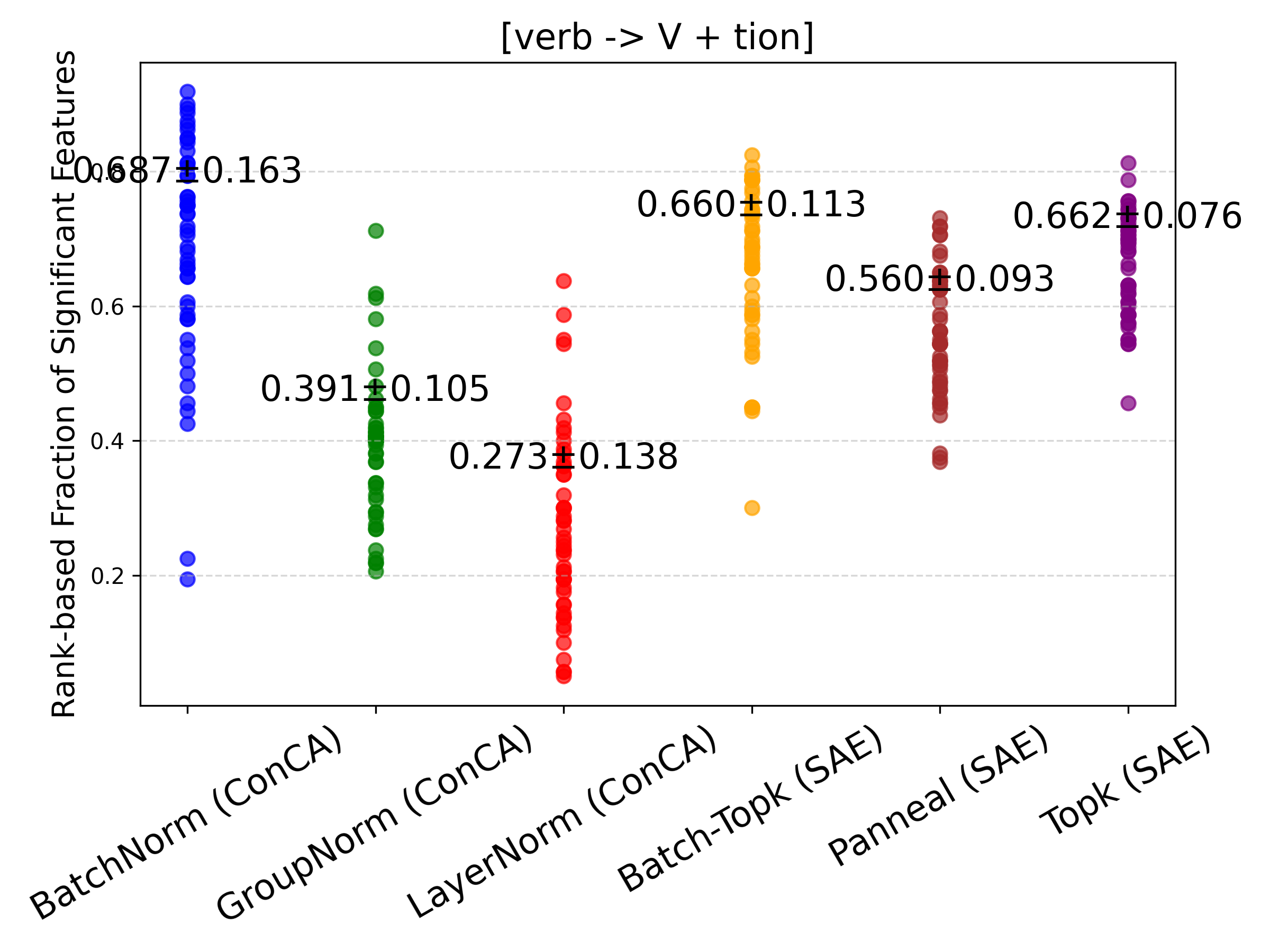}
\includegraphics[width=0.3\linewidth]{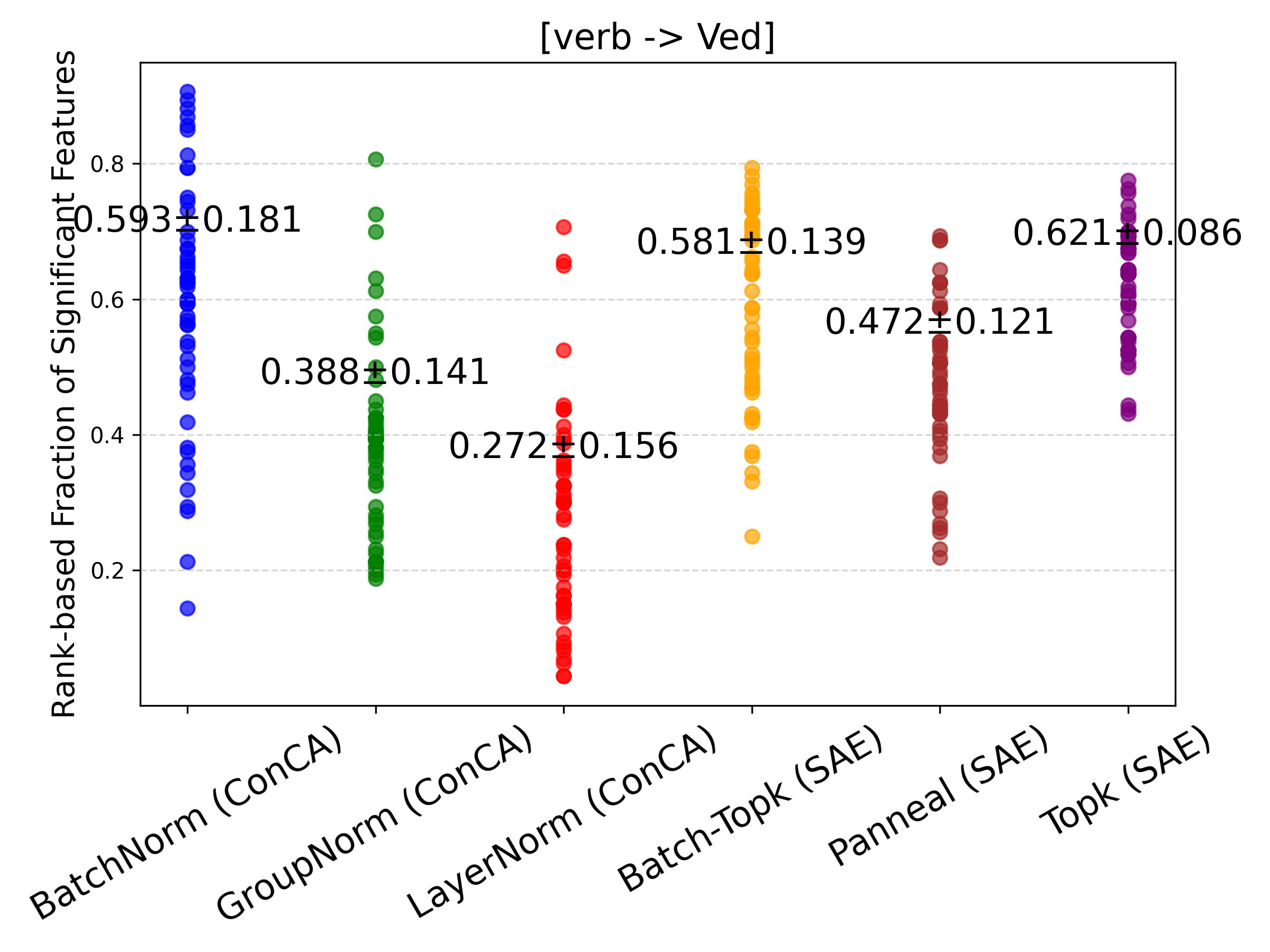}
\caption{Rank-based fraction of significant features for SAE and ConCA variants across three counterfactual pair concepts: \texttt{[verb - V + ment]}, \texttt{[verb - V + tion]}, and \texttt{[verb - Ved]}.}
\label{fig:visual22-24}
\end{figure}

\begin{figure}
\centering
\includegraphics[width=0.3\linewidth]{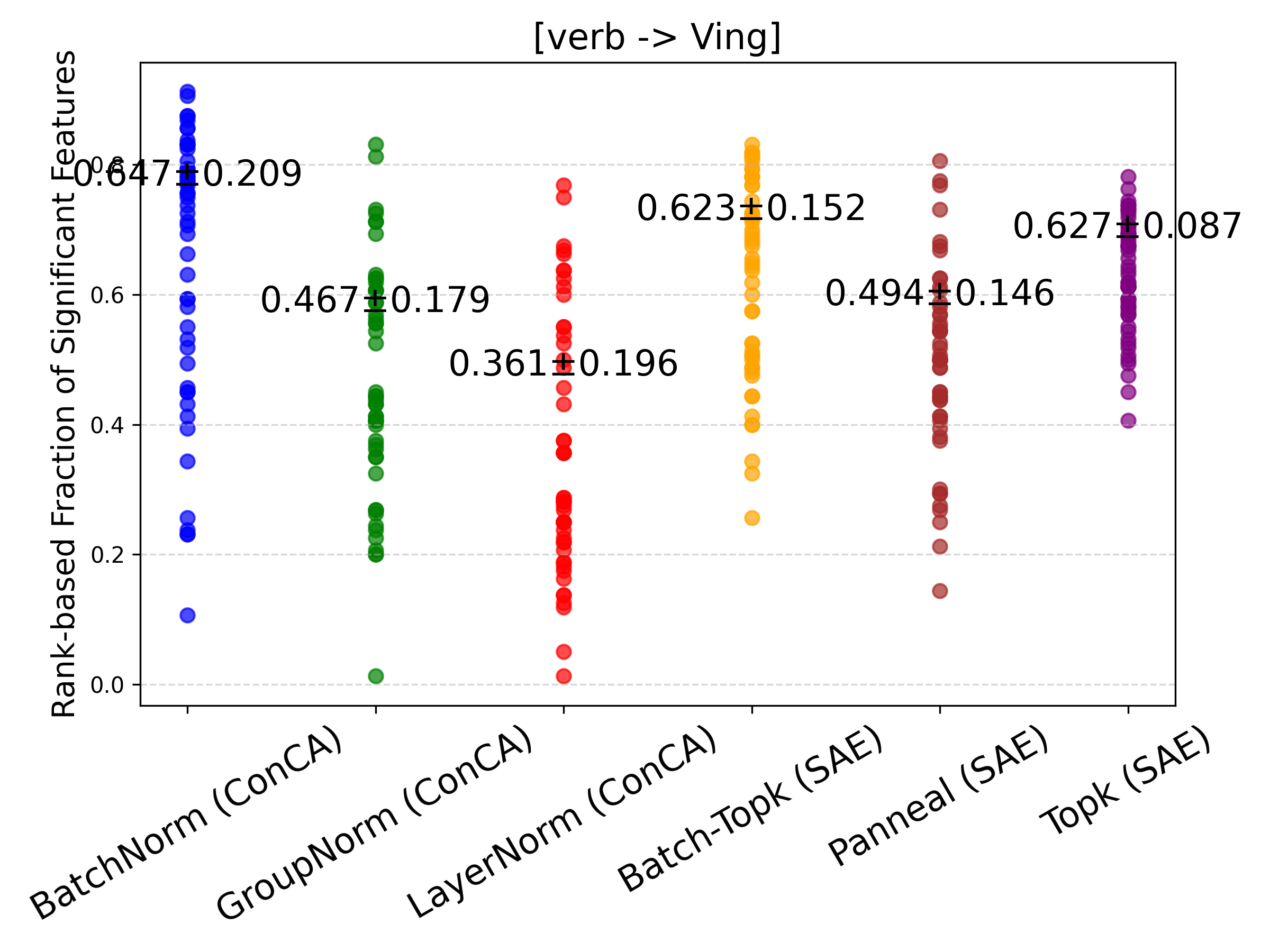}
\includegraphics[width=0.3\linewidth]{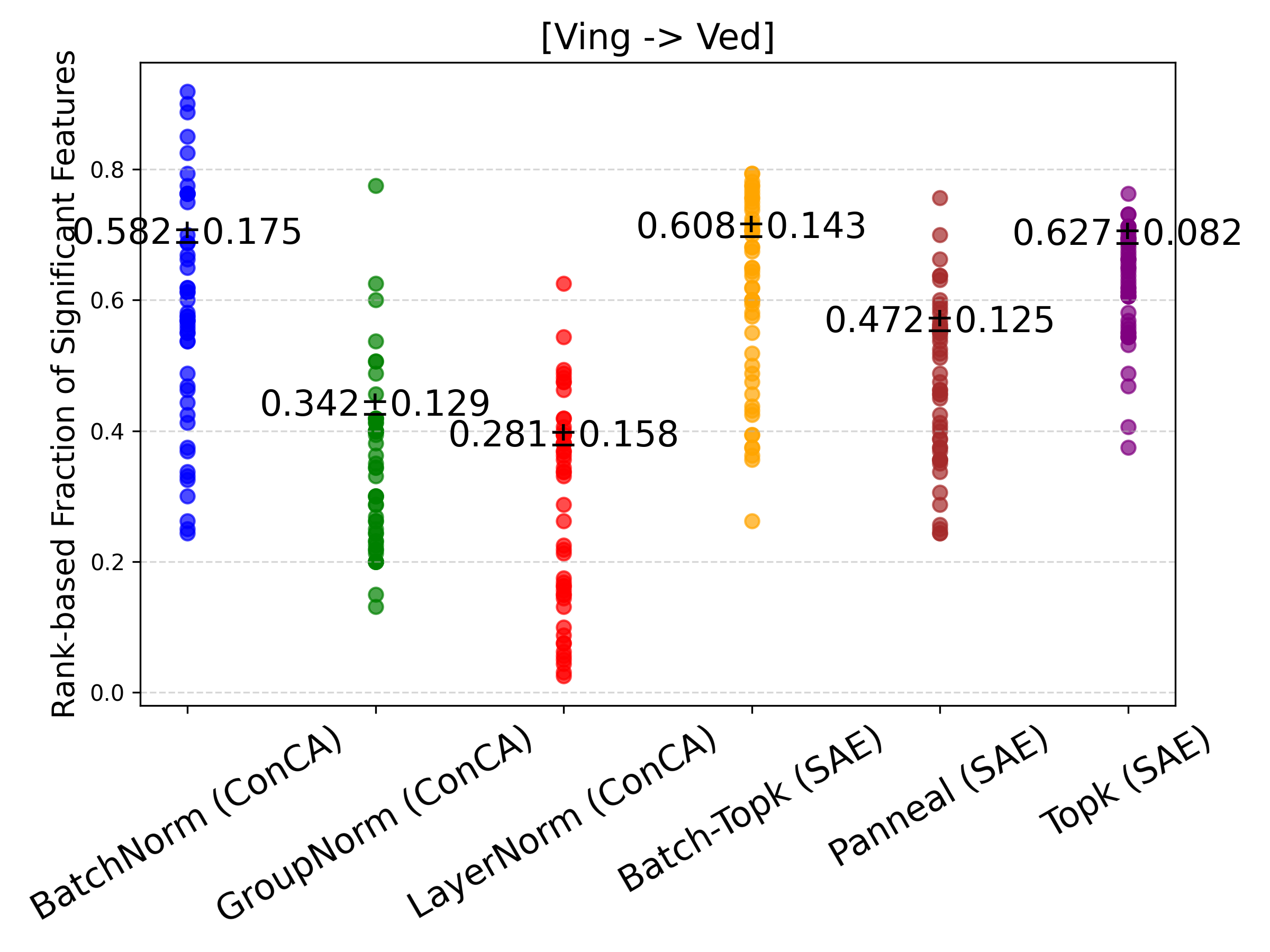}
\includegraphics[width=0.3\linewidth]{Figs/visual/topk_bar_scatter_rank_fraction_significant_Ving-Ved_ae_16000.png}
\caption{Rank-based fraction of significant features for SAE and ConCA variants across three counterfactual pair concepts: \texttt{[verb - Ving]}, \texttt{[Ving - Ved]}, and \texttt{[Ving - Ved]}.}
\label{fig:visual25-27}
\end{figure}

\section{Visualization of Classification Tasks}
\label{app: visual_class}
To better understand why ConCA features transfer effectively under the few-shot setting, we visualize the features extracted by the proposed ConCA and SAEs variants from test data using t-SNE. By projecting all features into 2D space, we can observe the structure and separability of learned representations, providing intuition for the superior downstream performance of ConCA compared to SAE variants. Figure \ref{fig:visual_ft} provides t-SNE visualization \citep{maaten2008visualizing} of features of testing data, extracted by SAE and ConCA variants, on a example of the few-shot task, which shows that ConCA configurations (e.g., LayerNorm, BatchNorm, GroupNorm) produce more compact and well-separated clusters, indicating more stable and discriminative representations compared to SAE variants.

\begin{figure}
\centering
\includegraphics[width=0.3\linewidth]{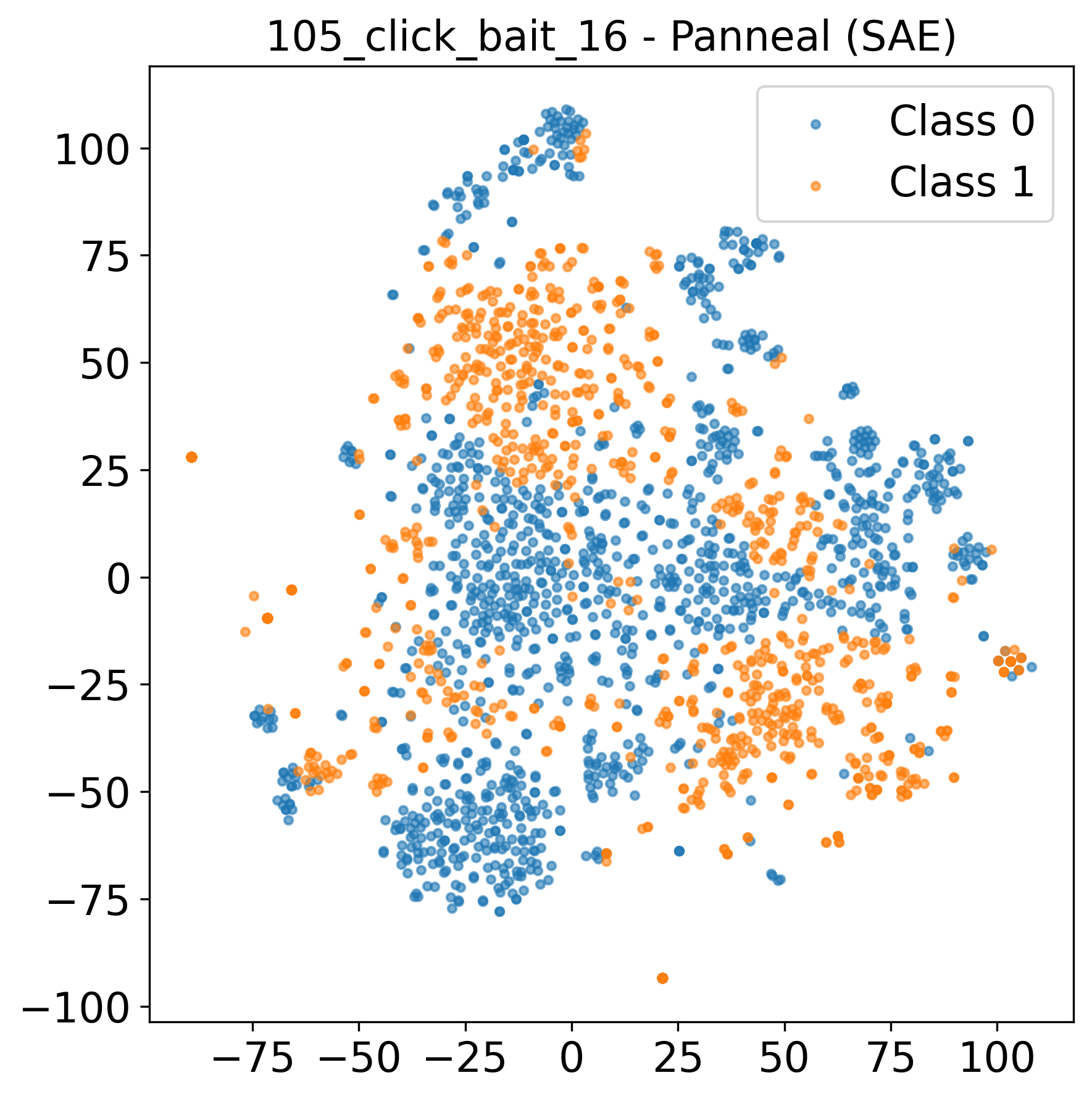}
\includegraphics[width=0.3\linewidth]{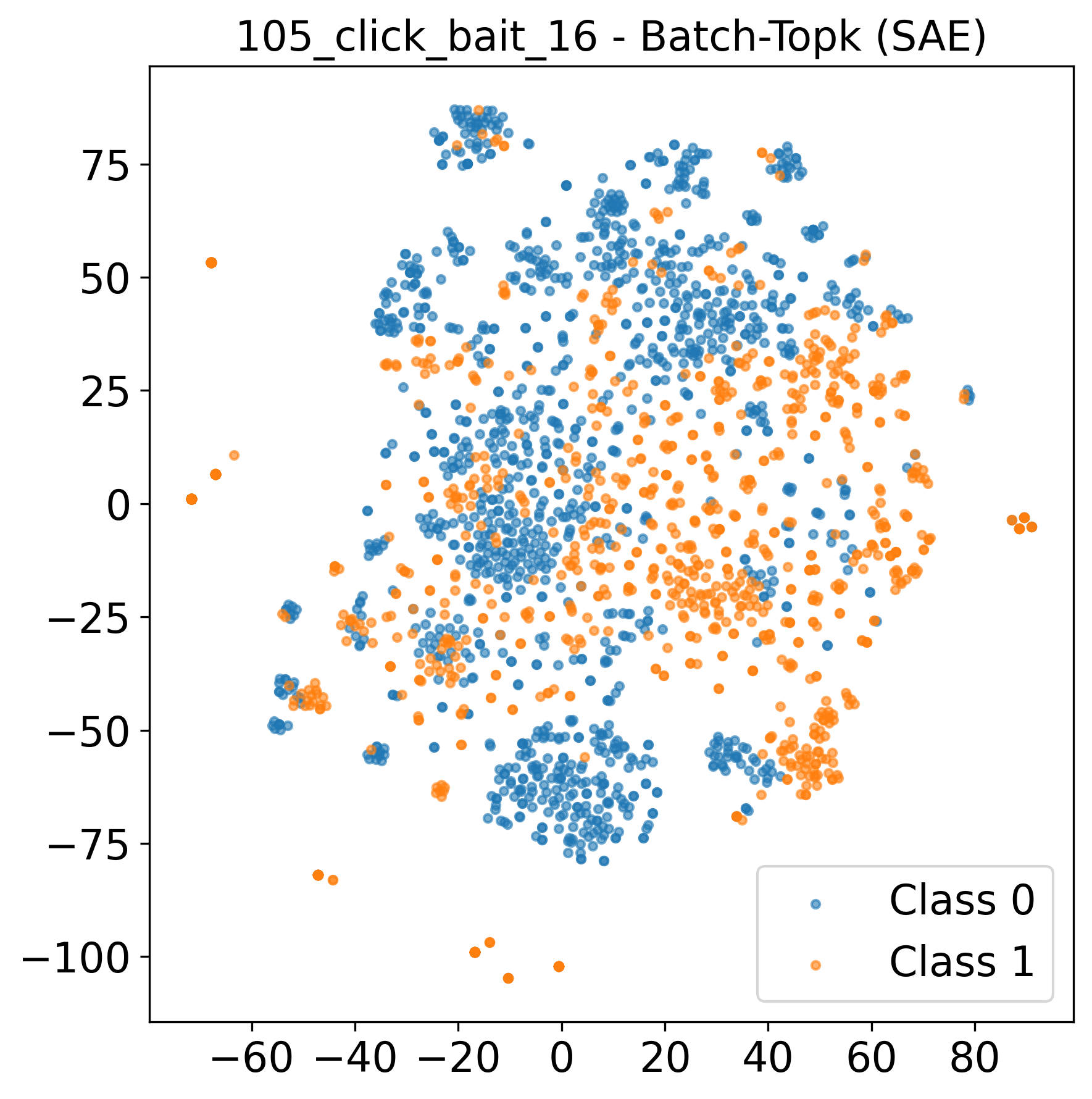}
\includegraphics[width=0.3\linewidth]{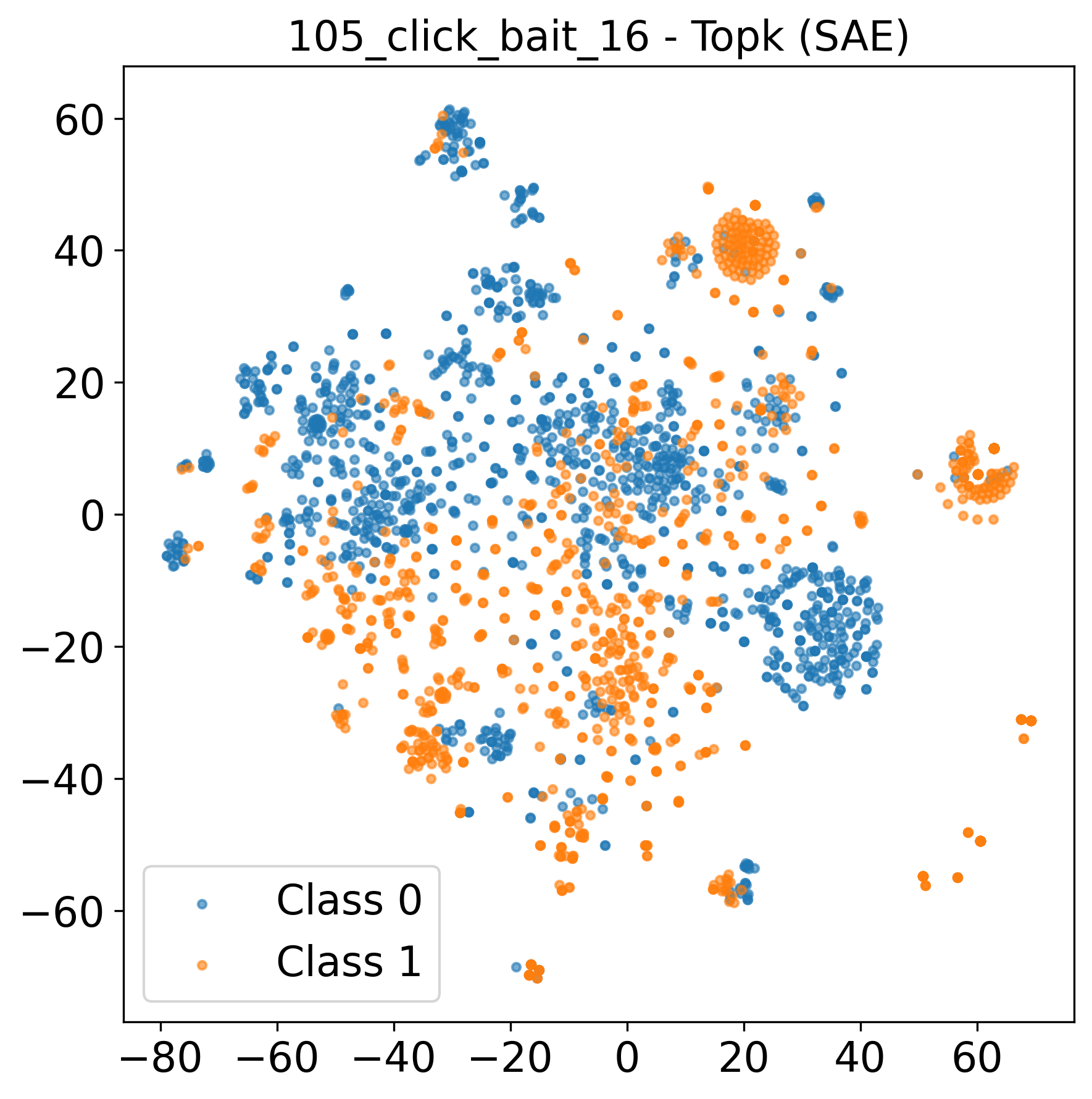}\\
\includegraphics[width=0.3\linewidth]{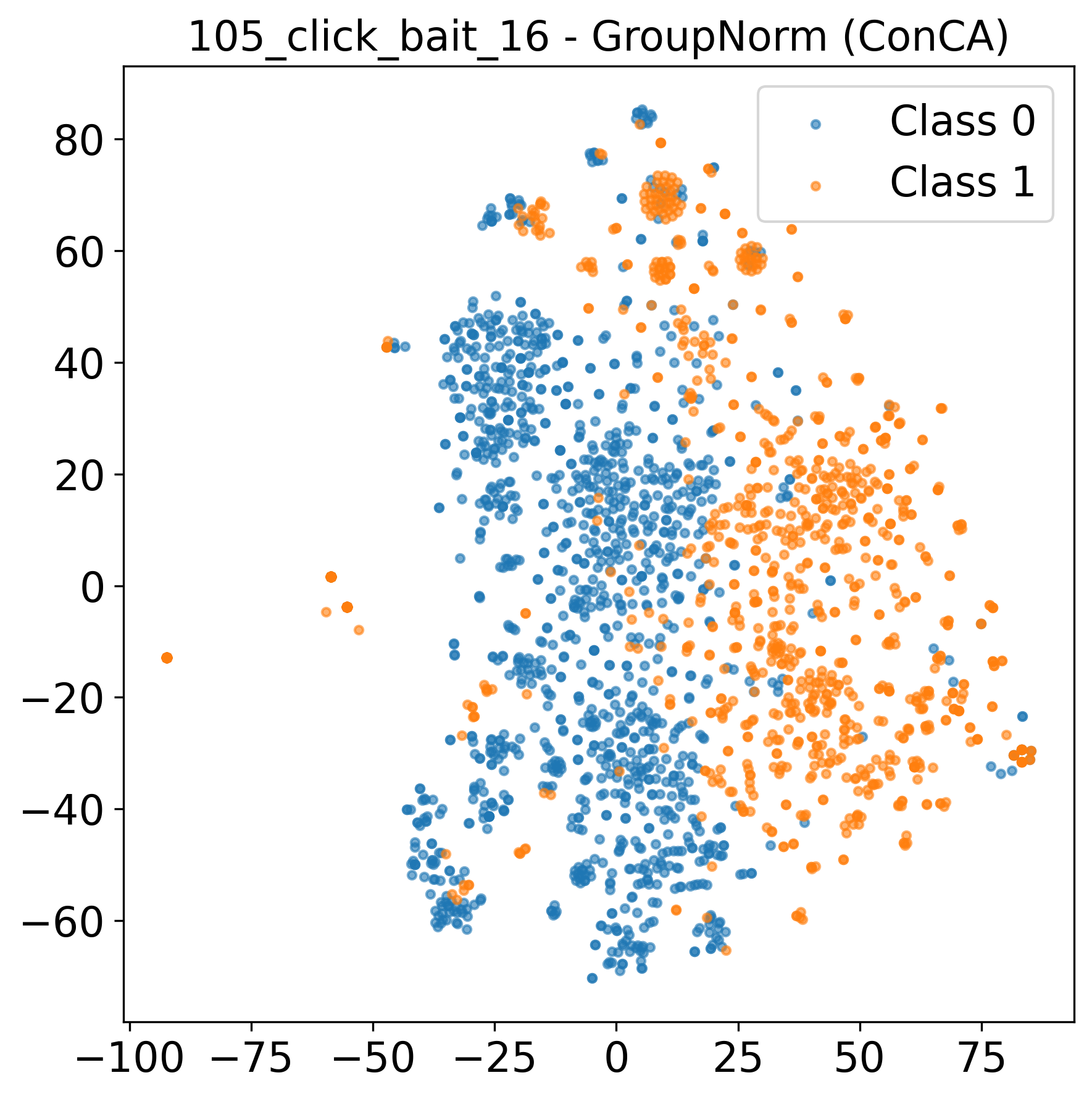}
\includegraphics[width=0.3\linewidth]{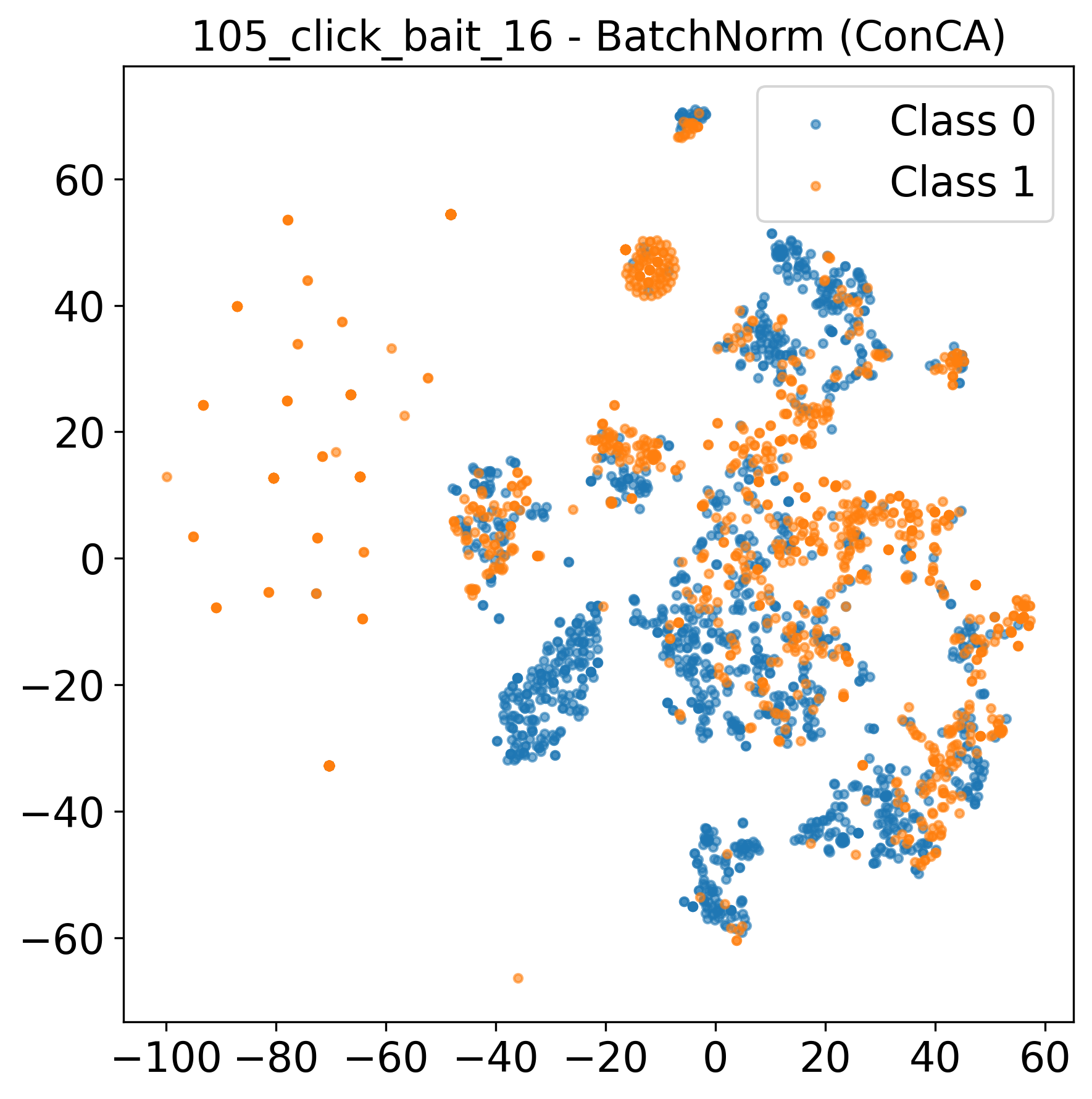}
\includegraphics[width=0.3\linewidth]{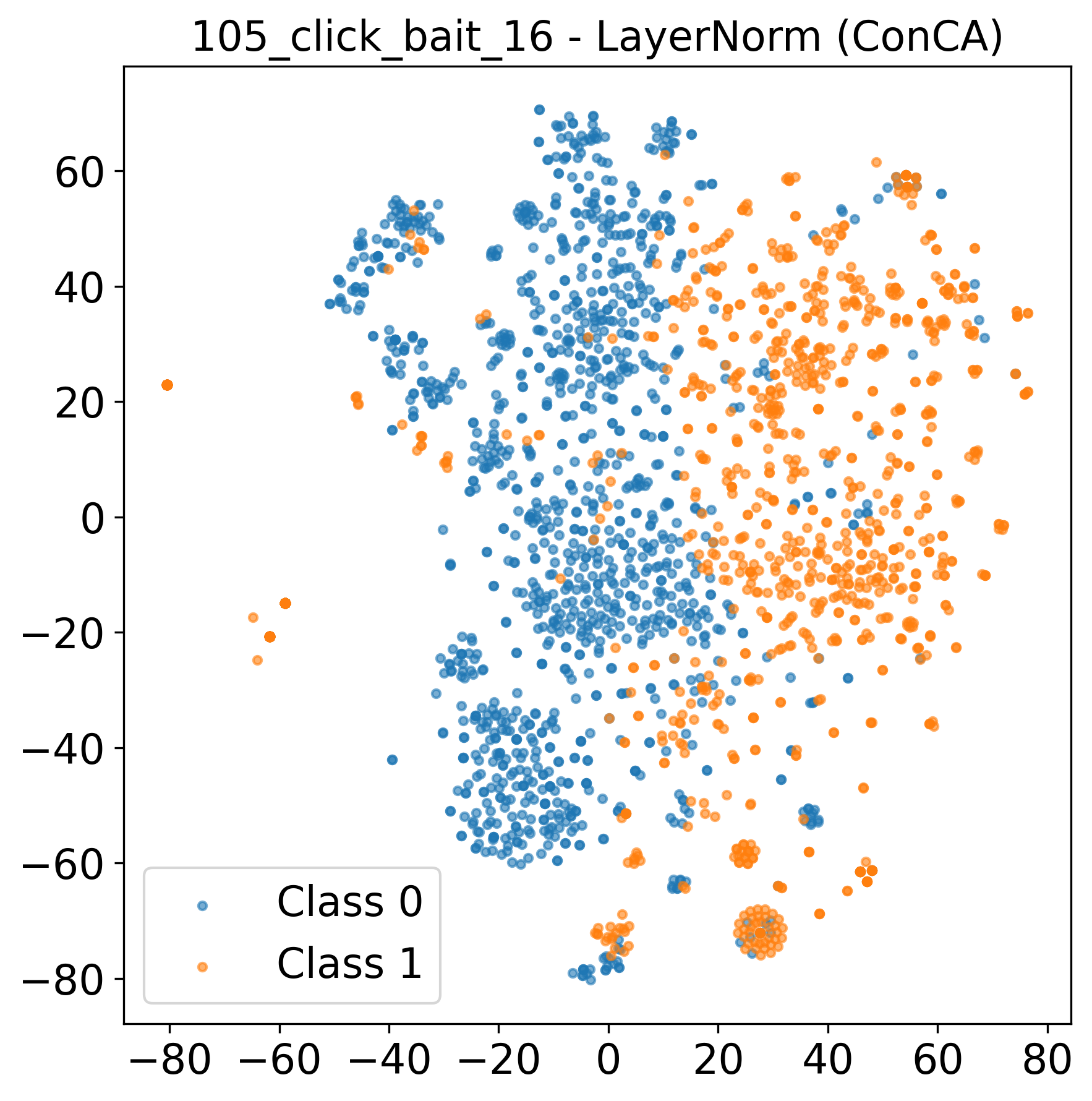}\\
\caption{Visualization of features extracted by SAE and ConCA variants on a example of few-shot classification task datasets. Each point represents a test sample, colored by its class label. ConCA configurations (e.g., LayerNorm, BatchNorm, GroupNorm) produce more compact and well-separated clusters, indicating more stable and discriminative representations compared to SAE variants.}
\label{fig:visual_ft}
\end{figure}

\section{Acknowledgment of LLMs}
We acknowledge that large language models (LLMs) were used in this work only for word-level tasks, including correcting typos, improving grammar, and refining phrasing. No substantive content, results, or scientific interpretations were generated by LLMs. All scientific ideas, analyses, and conclusions presented in this manuscript are solely the work of the authors.

{
\section{Practical Diagnostic for the Diversity Condition}
The diversity condition in our theory requires that the model’s output space contains enough linearly independent directions. Although the assumption is existential (it only requires that there exists such a set of output tokens), it is useful to provide an empirical procedure showing that modern LLMs indeed offer sufficiently diverse outputs. Since exhaustively checking all possible token combinations is infeasible, we design a practical proxy that searches for a diverse subset of outputs.

We begin by randomly sampling a large set of candidate tokens from the model’s vocabulary. One token is chosen as a reference output. For each remaining candidate, we compute the difference between its output embedding and that of the reference token. These differences represent all available output directions relative to the reference. To find a subset of tokens whose directions are as independent as possible, we apply a greedy selection procedure based on LU decomposition with pivoting. This method reorders the candidate directions in decreasing order of their contribution to the overall dimensionality. Taking the first 2,048 directions (for Pythia-12B) yields a set of outputs that spans the most independent subspace available within the candidate pool. We then assess how diverse this selected set actually is by measuring the spectrum of singular values derived from the chosen directions. A well-spread singular-value spectrum indicates that the selected outputs span a nearly full-dimensional space.

\begin{figure}
\centering
\includegraphics[width=0.3\linewidth]{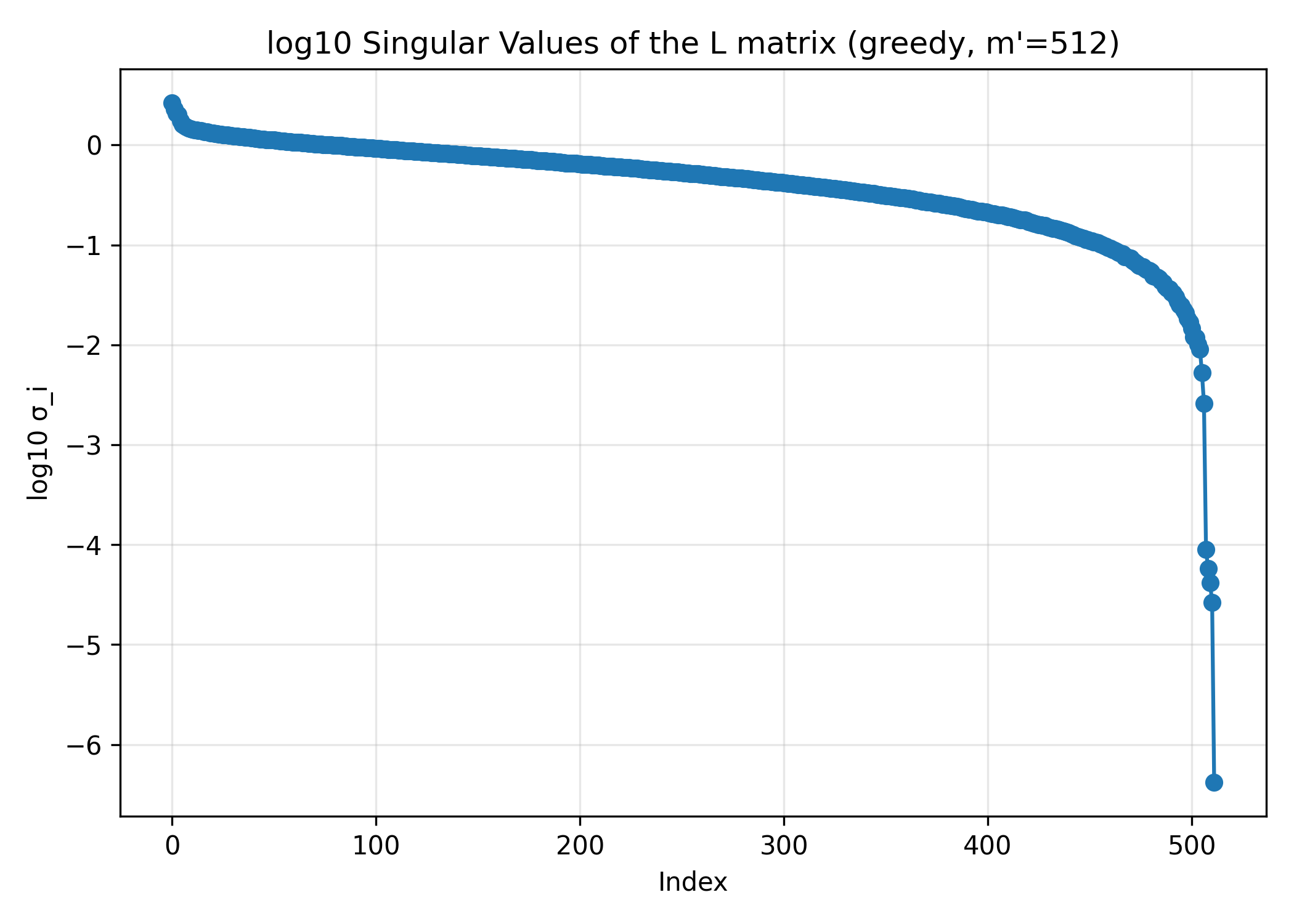}
\includegraphics[width=0.3\linewidth]{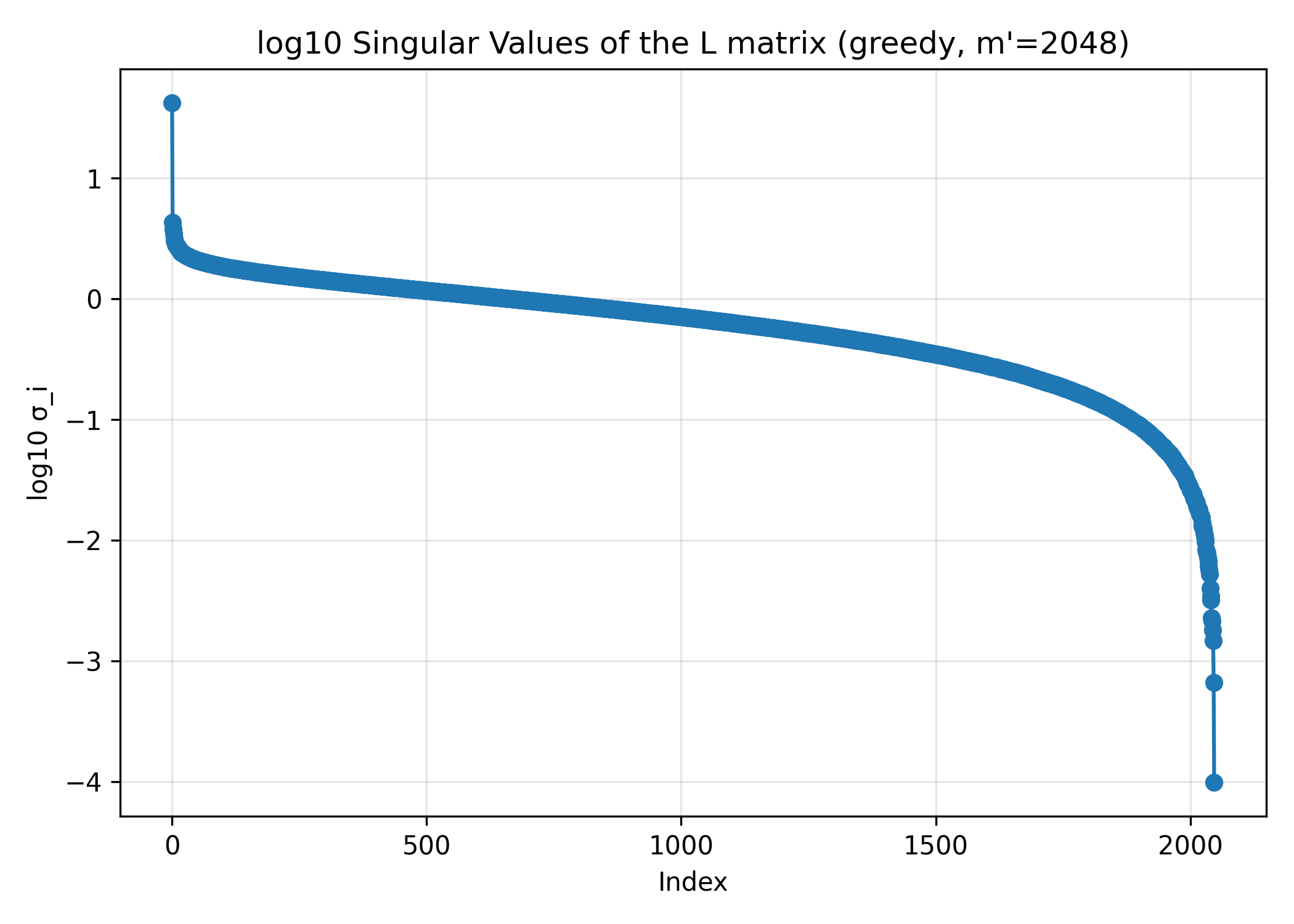}
\includegraphics[width=0.3\linewidth]{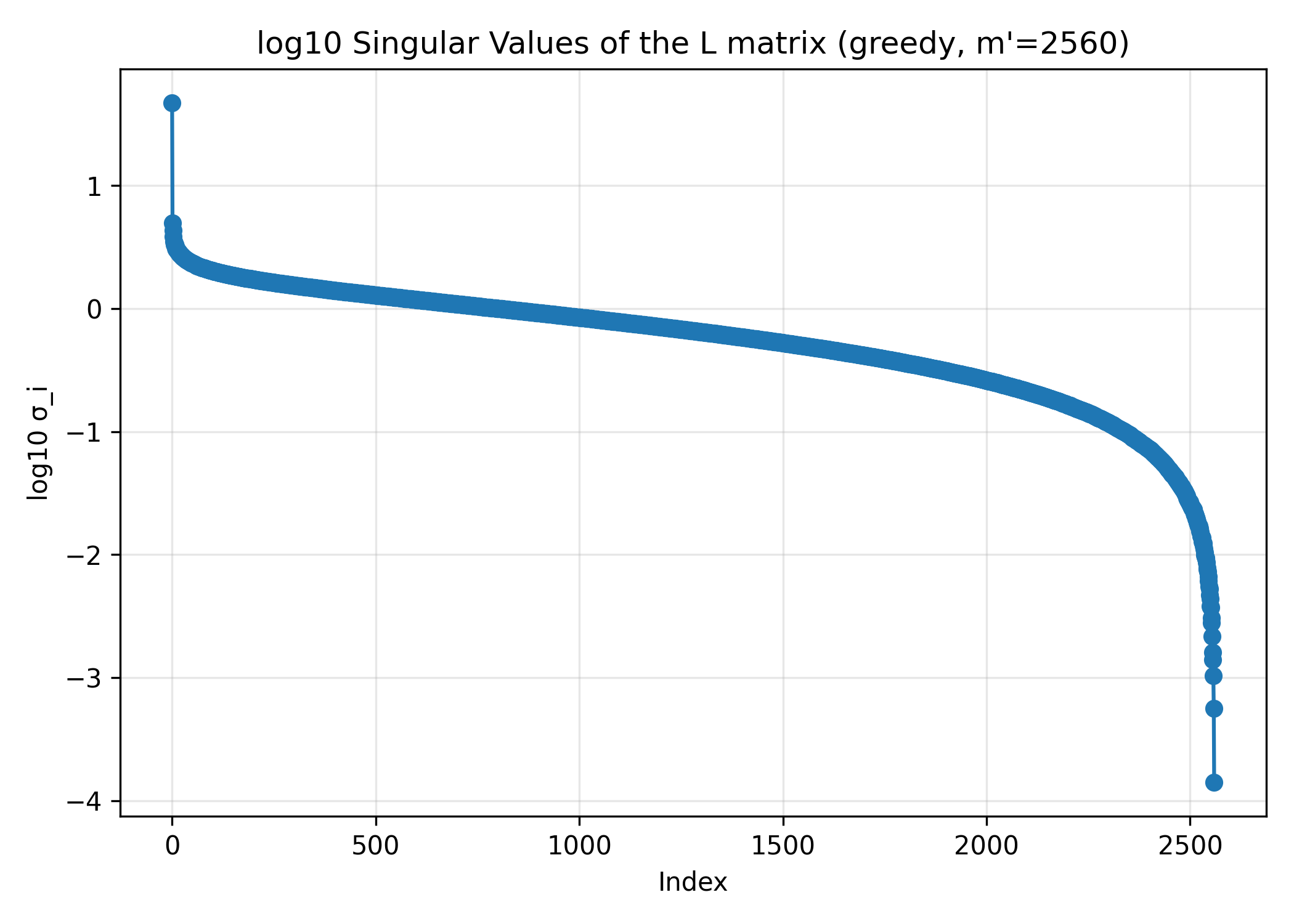}
\caption{Singular-value spectra of the greedily selected output-difference matrix across Pythia-70m, Pythia-1.4B, and Pythia-2.8B. Each curve plots the log-scale singular values (largest to smallest) obtained from a large candidate pool of output tokens using LU-pivoting selection. The spectra decay smoothly and only collapse in the final few dimensions, indicating that each model provides a numerically full-rank set of output directions and that the diversity assumption can be approximately satisfied in practice.}
\label{fig:svd}
\vskip -0.2in
\end{figure}

Across all model sizes, the spectra exhibit a consistent pattern.
The largest several hundred singular values remain high and decay smoothly, indicating that each model provides a substantial number of independent output directions. As the index approaches the effective dimensionality of the model, the tail of the spectrum gradually drops, but only the final few singular values approach very small magnitudes. This behavior suggests that the selected output directions nearly span the model’s representational space and are far from the rank-deficient structure we observe when tokens are selected at random.

Interestingly, the point at which the spectrum begins to decline sharply shifts with model scale (See Figure~\ref{fig:svd}): larger models (1.4B and above) maintain strong singular values for a greater proportion of directions compared to the 70m variant. This reflects the natural trend that larger models encode richer and more varied output embeddings. Nevertheless, even the 70m model remains numerically full-rank down to its final few dimensions.

\section{Informational Sufficiency Diagnostics}
Our theory requires an informational sufficiency condition: the learned representation should contain enough information to reliably determine the related latent concept. For binary latent factors, this means the posterior distribution over the factor should be sharply peaked given the representation. Although this assumption is mild and standard in representation analyses, it is useful to provide a practical diagnostic showing that LLMs indeed satisfy it. We use the constructed 27 counterfactual concept pairs as mentioned in Table \ref{tab:counterfactual_pairs}. These pairs cover a broad variety of concept types while keeping each concept operationally well-defined.

For each concept pair, we gather a list of word pairs exhibiting the target transformation. We extract last-token representations from the model for each word, forming two embedding sets corresponding to the two concept values. We then train a linear probe (logistic regression) to classify the target concept using 70/30 train–test splits repeated with three random seeds. The linear probe is intentionally simple: rather than maximising accuracy, its role is to estimate an empirical distribution $p(\mathbf{z}|\mathbf{x})$, allowing us to measure the uncertainty the representation leaves about the concept.

To quantify how well the representation specifies the latent concept, we compute the conditional entropy of the probe’s predicted distribution over the concept: entropy near 0 bits indicates that the representation almost perfectly determines the concept; entropy near 1 bit corresponds to complete uncertainty (uniform prediction).

\begin{figure}
\centering
\includegraphics[width=0.9\linewidth]{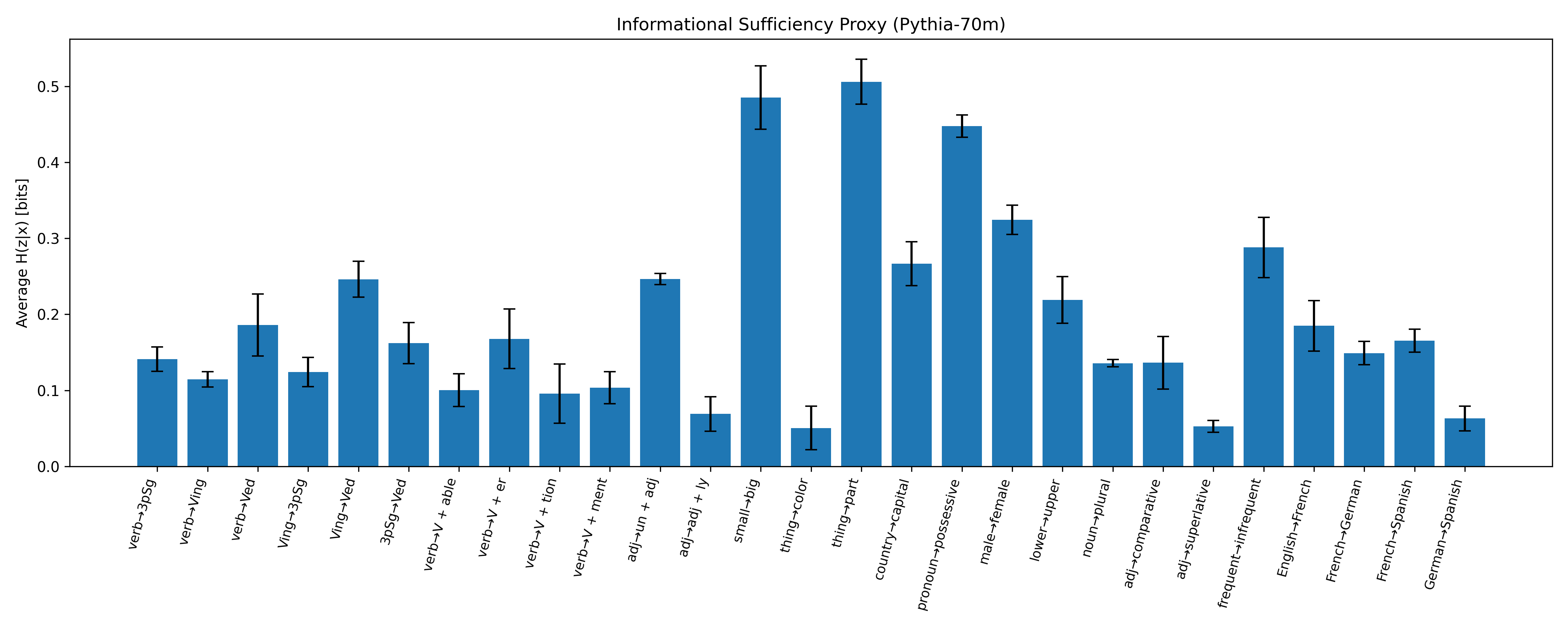}\
\includegraphics[width=0.9\linewidth]{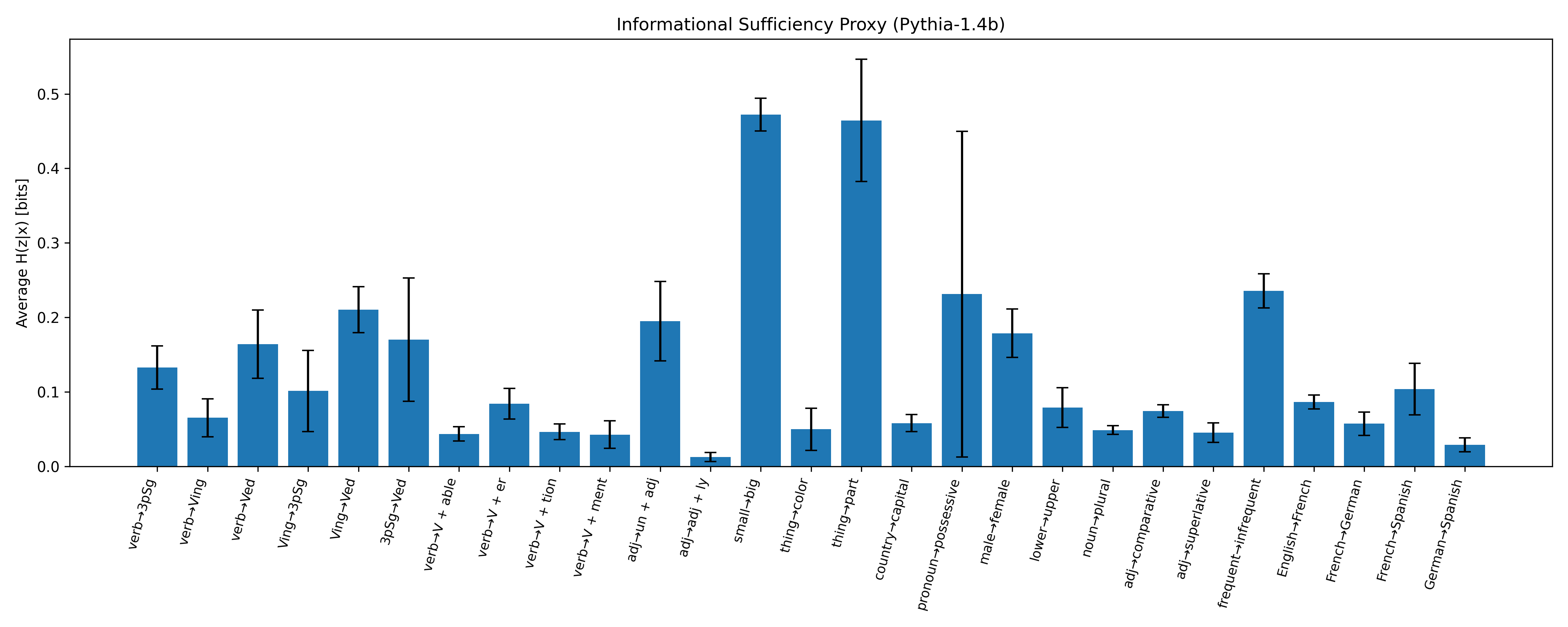}
\includegraphics[width=0.9\linewidth]{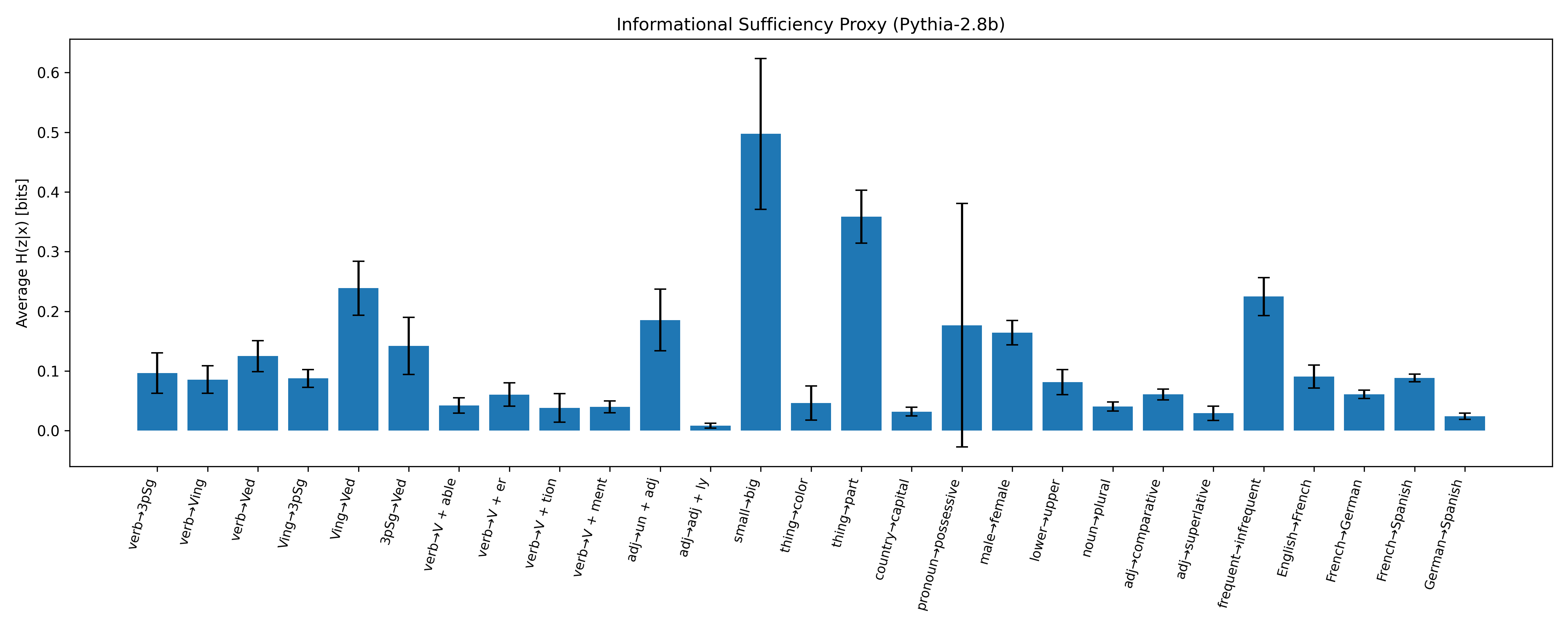}
\caption{Conditional entropy for the 27 concept pairs across three Pythia model scales (70m, 1.4b, 2.8b). Lower entropy indicates that the representation more sharply determines the underlying concept. As model size increases, entropies consistently decrease—especially for non-trivial semantic contrasts—illustrating that informational sufficiency (approximate invertibility) emerges naturally.}
\label{fig:entropy}
\vskip -0.2in
\end{figure} 

Across the 27 evaluated concepts (See Figure~\ref{fig:entropy}), we observe a clear and consistent trend: larger Pythia models yield substantially lower conditional entropy, indicating increasingly informative representations. The 70M model shows moderate informational sufficiency (typically 0.1–0.3 bits, with a few harder concepts higher), while the 1.4B and 2.8B models exhibit uniformly low entropies (mostly below 0.15 bits). This pattern suggests that, at the model scales relevant for our theoretical analysis, the representations almost deterministically encode the target concept. Consequently, the informational-sufficiency (approximate invertibility) assumption is empirically well supported.

\section{Synthetic validation of marginal vs.\ joint identifiability.}

To complement our theoretical discussion in Appendix C, and also compare our ConCA and SAEs, we design a synthetic experiment as follow. We begin by sampling 5 binary latent variables whose causal dependencies follow an Erdős–Rényi (ER) random directed acyclic graph (DAG). Each graph is drawn with an expected number of edges equal to 10. For every node in the DAG, we define a conditional distribution over its parents using a Bernoulli model whose parameters are sampled uniformly from the interval [0.2,0.8]. We also generate counterfactual data that only in one latent variable while keeping the remains unchanged, to train base model and also train linear probing as evaluation. To simulate a nonlinear mixture process, we then convert the latent variable samples into one-hot format and randomly apply a permutation matrix to the one-hot encoding, generating one-hot observed samples. These are then transformed into binary observed samples. To simulate next-token prediction, we randomly mask a part of the binary observed data, e.g.,, ${x}_i$, and predict it by use the remaining portion $\mathbf{x}_{\setminus i}$. After training, we obtain the learned representations. Note that we set the representation dimension to 10 (i.e., component-wise posterior corresponds to $2 \times 5$) in order to highlight the difference between our theoretical result and the formulation in \citep{liu2025predict}, which requires a representation dimension of 32 (i.e., joint posterior corresponds to $2^5$).

Given the above setup, our first experiment aims to highlight the difference between our theoretical result and the formulation in \citep{liu2025predict}. To this end, we train a linear probe on the learned representations (as justified by Corollary~\ref{app:linearprobing}). The probe achieves a classification accuracy of $0.923$ (std: 0.021). This demonstrates that $2 \times 5$ dimensional representations are already sufficient for linear classification, and that $2^5$ dimensional representations required by \citet{liu2025predict} are unnecessary, thereby empirically supporting our theoretical result in Theorem~\ref{theory}.

Our second experiment is to verify the performance of our ConCA and SAEs on the learned representations. To this end, we train SAEs (Panneal), and ConCA (LayerNorm). We compute Pearson correlation between logits obtained by linear probing trained on counterfactual pair, and features obtained by SAEs and ConCA. The results are as follows:
ConCA: $0.813$ (std: 0.031) SAEs: $0.635$ (std: 0.046). We emphasize that the scope of this work focuses on the theoretical guarantee in Theorem 2.1, which addresses identifiability up to the linear mixtures. Achieving unique recovery of individual concepts is substantially challenging. While additional assumptions, such as sparsity conditions familiar from compressed sensing, can in principle support uniqueness, it remains unclear whether the mixing matrix $\mathbf{A}$ in our setting satisfies the specific sparsity or incoherence conditions required. Exploring these assumptions is an interesting direction for future work.

\section{Verifying Exponential Substitutes on Pythia-70m}

A core modeling choice in ConCA is the replacement of the exponential function with numerically stable surrogates such as SoftPlus, motivated by the fact that LLM activations lie in ranges that make the true exponential function prone to gradient explosion and instability. To further validate this design choice, we conduct an additional controlled experiment where the exponential function becomes numerically stable.

Specifically, we construct a toy setting by clamping the final-layer hidden activations of Pythia-70m to bounded ranges where exponential function can be safely evaluated without numerical overflow. We consider the four clamping windows as shown in Table~\ref{tab:exp_vs_softplus_mean_std}. We additionally include LayerNorm and the same sparsity penalty used in our SoftPlus variant to ensure comparability. We then train ConCA variants using the true exponential under these artificially stabilized conditions, and compare them with our SoftPlus-based variant. Performance is evaluated using both Pearson correlation (between recovered concepts and linear-probe ground truth) and MSE reconstruction error.

\begin{table}[t]
\centering
\begin{tabular}{lccc}
\toprule
\textbf{Activation Type} & \textbf{Clamp Range} &
\textbf{Pearson $\uparrow$} &
\textbf{MSE $\downarrow$} \\
\midrule
Exp & $[-20,20]$ & $0.7029 \pm 0.0081$ & $10.59 \pm 2.10$ \\
Exp & $[-30,30]$ & $0.7055 \pm 0.0148$ & $11.18 \pm 0.85$ \\
Exp & $[-40,40]$ & $0.7020 \pm 0.0077$ & $11.03 \pm 0.97$ \\
Exp & $[-50,50]$ & $0.7006 \pm 0.0018$ & $10.65 \pm 0.55$ \\
\textbf{SoftPlus (ours)} & --- &
\textbf{$0.7324 \pm 0.0002$} &
\textbf{$7.63 \pm 0.40$} \\
\bottomrule
\end{tabular}
\caption{Performance comparison of exponential and SoftPlus activation variants under clamped toy settings. SoftPlus achieves consistently higher Pearson correlation and lower MSE, even when the exponential is artificially stabilized.}
\label{tab:exp_vs_softplus_mean_std}
\end{table}

Three observations emerge: Exp becomes usable only under artificial clamping. Even when stabilized, exp-based ConCA exhibits lower Pearson correlation (~0.69–0.72) and higher MSE (~10–12) across all clamp settings. SoftPlus consistently outperforms exp, achieving both the highest correlation (~0.73) and the lowest MSE (~7–8), despite being evaluated under more challenging, unclamped activation distributions. Together, these findings demonstrate that the surrogate activations used in ConCA are not only numerically safer but also better aligned with empirical behavior, even in settings explicitly constructed to favor the exponential function. This experiment thus reinforces our modeling choice and supports the theoretical motivation behind replacing exp in ConCA.

\section{Activation Patching Evaluation}

Activation patching is widely regarded as one of the strongest tests of functional interpretability. Unlike reconstruction-based metrics (e.g., MSE) or alignment metrics (e.g., Pearson correlation), activation patching directly measures whether a representation—after being encoded and reconstructed by a concept-extraction model—still leads the LLM to make similar token-level predictions. To evaluate this, we measure how substituting an internal hidden state with its ConCA/SAE reconstruction affects the LLM’s output logits. If a dictionary model faithfully captures the underlying functional structure, the LLM’s predictions should remain largely unchanged.

We use 10k randomly sampled token activations from The Pile activation dataset of Pythia-70m. Sampling is performed once, cached, and reused across models to ensure perfect comparability. For each sampled activation, we: 1: feed the original hidden state into the model’s final prediction layer to obtain the baseline logits. 2: reconstruct the hidden state using either our ConCA or SAEs. 3: feed the reconstructed activation back into the model and compare the resulting logits with the baseline. We report three widely used functional metrics: Argmax Match: whether the top predicted token stays the same. Top-10 Overlap: fraction of overlapping tokens in the top-10 predictions. Jensen–Shannon Divergence: distributional distance between original and reconstructed logits (lower is better). Each metric is averaged over all 10,000 activations.

\begin{figure}
\centering
\includegraphics[width=0.3\linewidth]{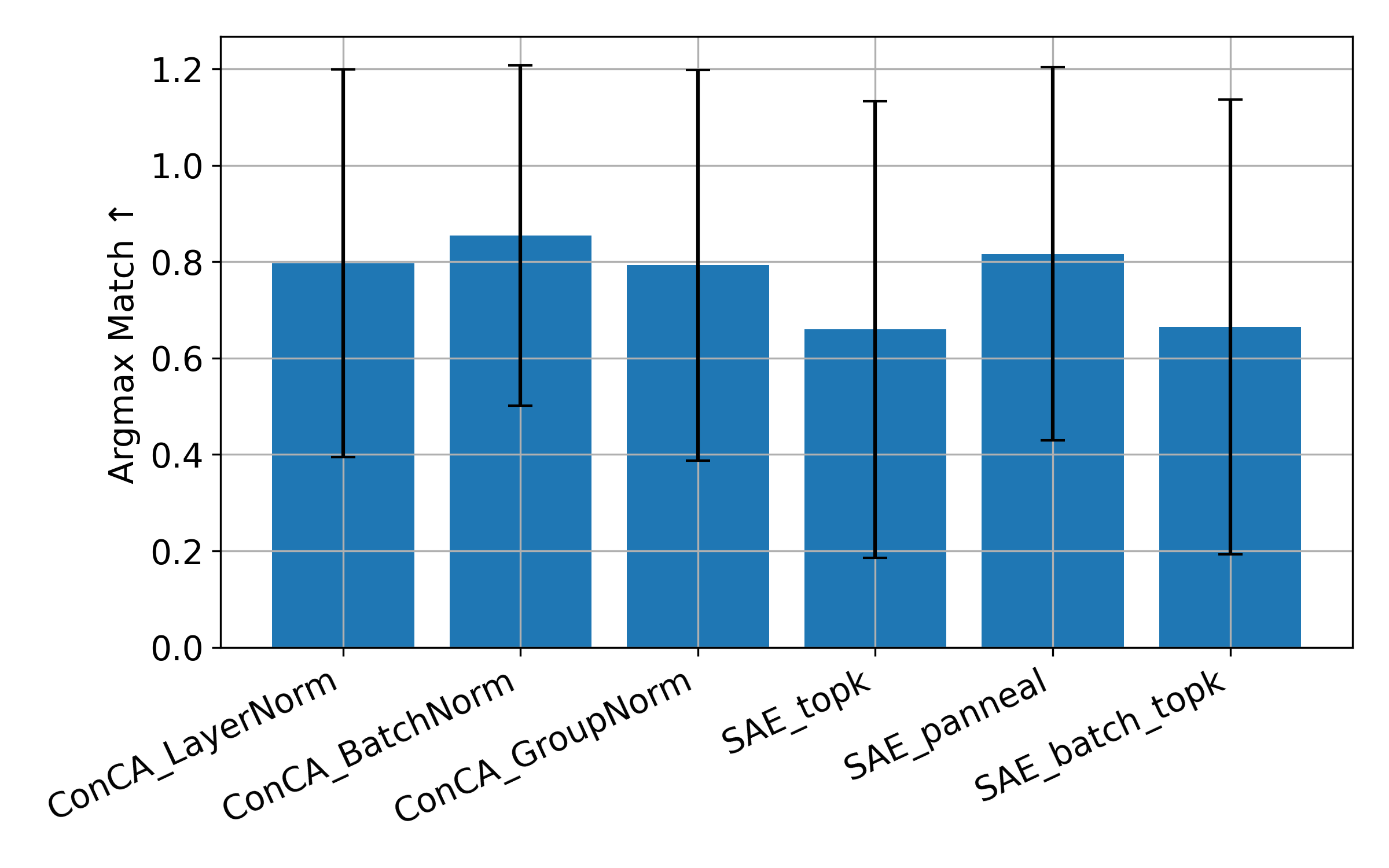}
\includegraphics[width=0.3\linewidth]{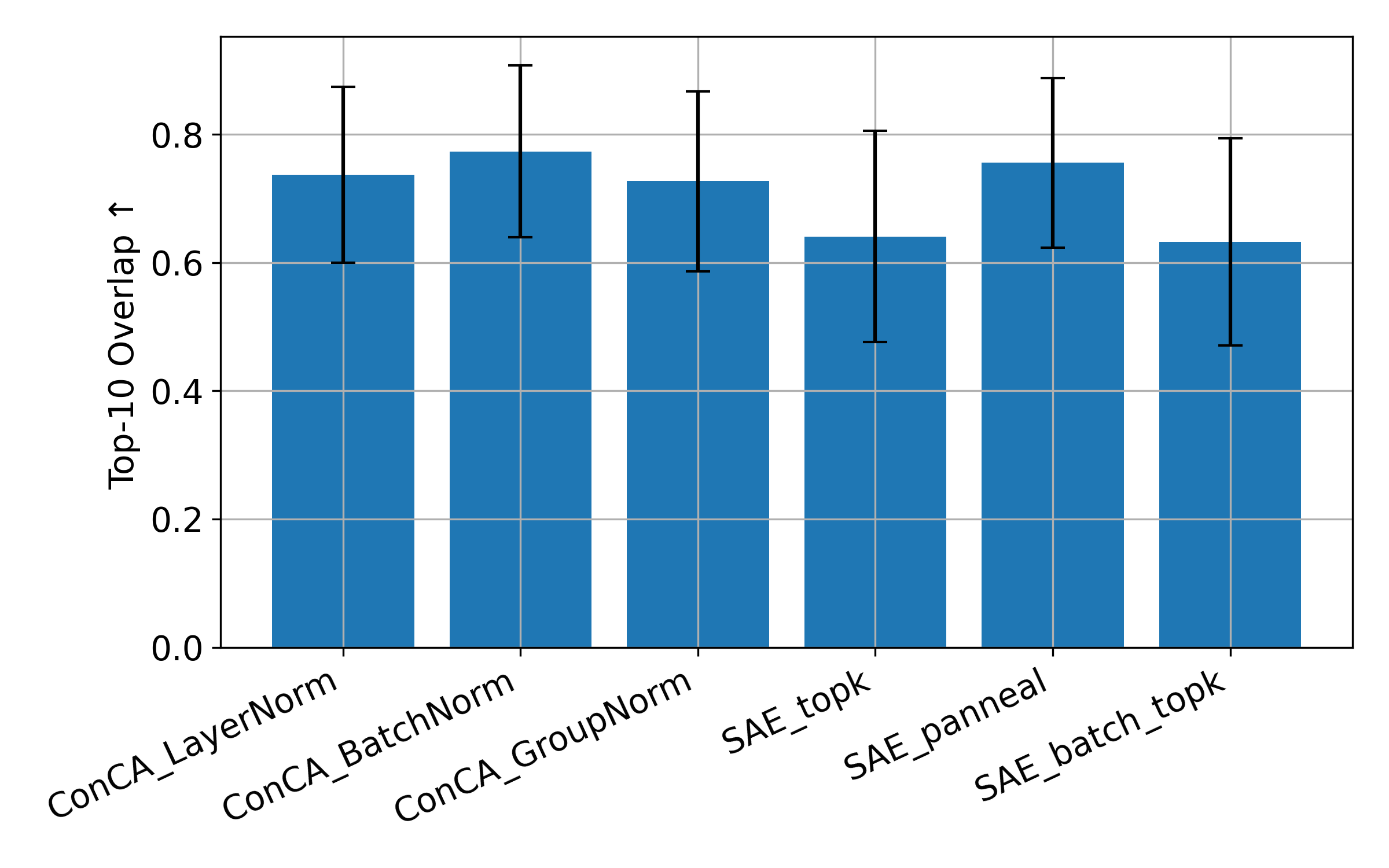}
\includegraphics[width=0.3\linewidth]{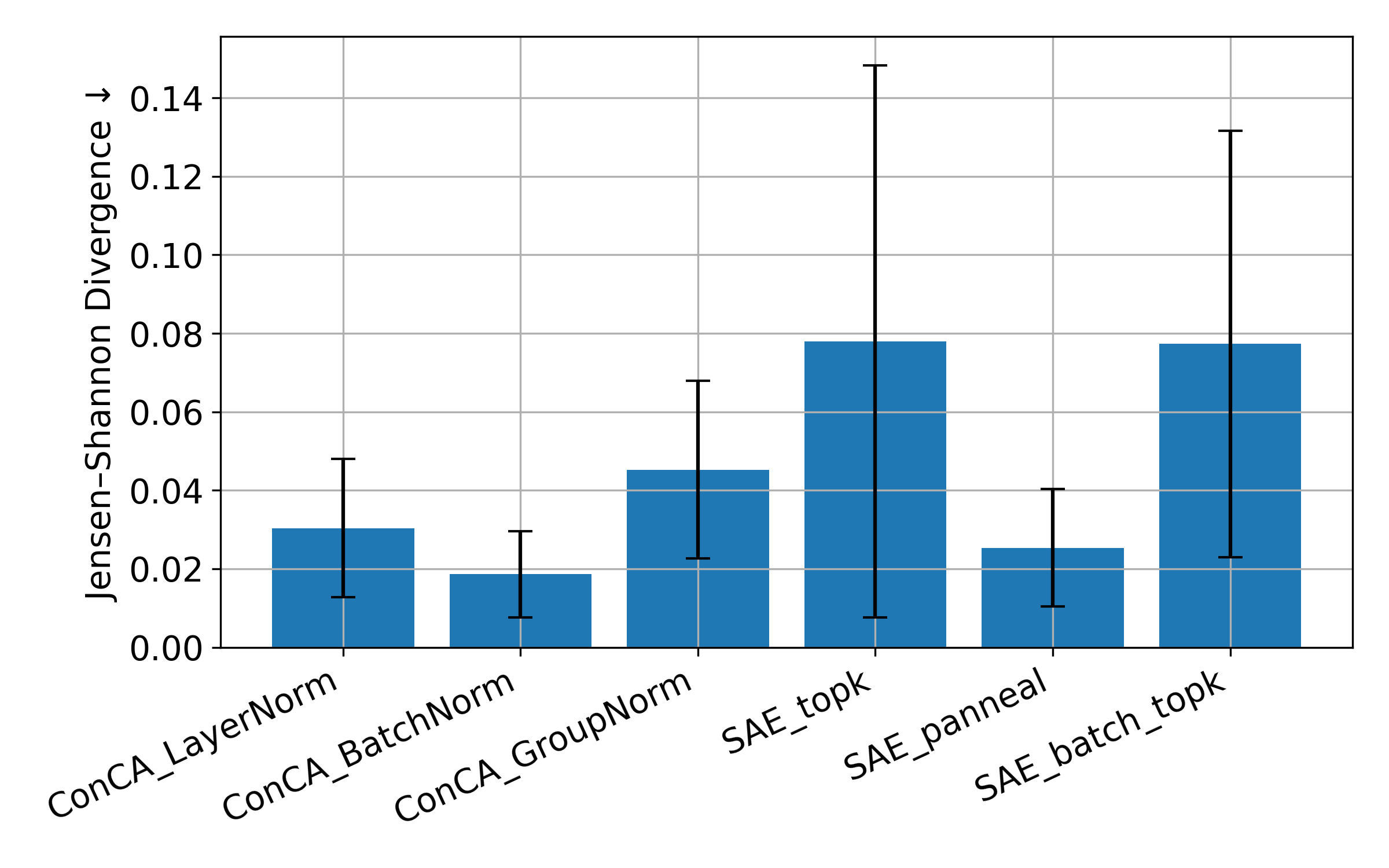}
\caption{Activation patching comparison across ConCA and SAE variants on Pythia-70M using 10,000 cached hidden activations from The Pile. (a) Argmax Match (higher is better): ConCA variants preserve the model’s top prediction more reliably than SAE baselines. (b) Top-10 Overlap (higher is better): ConCA reconstructions retain substantially more of the model’s predictive structure. (c) Jensen–Shannon Divergence (lower is better): ConCA achieves lower distributional distortion, indicating higher functional faithfulness.}
\label{fig:patching}
\vskip -0.2in
\end{figure} 

The results in Figure~\ref{fig:patching} show that ConCA-BatchNorm achieves the best overall performance across metrics. SAE-top-k and SAE-batch-top-k show the largest divergence, indicating functional mismatch despite good sparsity. ConCA variants exhibit lower variance, suggesting more stable behavior across diverse activations. This experiment demonstrates that: ConCA reconstructions lead the LLM to make more similar predictions, confirming that ConCA retains functional information more faithfully than SAEs.

\section{Experiments on Additional Counterfactual Concept Pairs}

While the original 27 counterfactual concept pairs from \citet{park2023linear} provide clean, expert-curated evaluations along several core linguistic axes (verb inflections, adjective morphology, noun attributes, and multilingual translations), they cover only a small portion of the semantic space relevant to modern LLM behavior. In particular, many practically important concepts, such as sentiment polarity, toxicity, factuality, stance, politeness, or degree/intensity—are not represented in the original benchmark. These dimensions are widely studied in interpretability and safety research, and their inclusion offers a more comprehensive evaluation of concept alignment.

To complement the original dataset, we construct 23 additional counterfactual concept pairs, each designed to capture a single semantic concept for linear probing and our theoretical disentanglement analysis. For every concept, we curate 50 pairs where only the targeted semantic attribute changes while all other factors (POS category, morphological structure, lexical frequency, etc.) remain controlled.

\begin{table}[ht]
\centering
\caption{Additional counterfactual concept pairs (sentiment, toxicity, factuality/truthfulness, stance, politeness). Each concept contains 50 pairs.}
\label{tab:counterfactual_pairs_extra}
\begin{tabular}{c p{4cm} p{4.5cm} r}
\toprule
\# & \textbf{Concept} & \textbf{Example Pair} & \textbf{Word Pair Counts} \\
\midrule
\multicolumn{4}{l}{\textit{Sentiment polarity}} \\
1  & positive $\rightarrow$ negative & (happy, sad) & 50 \\
2  & positive $\rightarrow$ neutral  & (amazing, average) & 50 \\
3  & negative $\rightarrow$ neutral  & (terrible, ordinary) & 50 \\
\midrule
\multicolumn{4}{l}{\textit{Toxicity}} \\
4  & toxic $\rightarrow$ neutral  & (stupid, silly) & 50 \\
5  & toxic $\rightarrow$ polite   & (idiot, friend) & 50 \\
6  & rude $\rightarrow$ polite    & (shut up, please speak) & 50 \\
\midrule
\multicolumn{4}{l}{\textit{Factuality / Truthfulness}} \\
7  & true $\rightarrow$ false      & (earth, flat-earth) & 50 \\
8  & factual $\rightarrow$ nonfactual & (oxygen, magic-power) & 50 \\
9  & real $\rightarrow$ fictional  & (doctor, wizard) & 50 \\
\midrule
\multicolumn{4}{l}{\textit{Stance / Subjectivity}} \\
10 & supportive $\rightarrow$ opposed    & (agree, oppose) & 50 \\
11 & approving $\rightarrow$ disapproving & (praise, blame) & 50 \\
12 & subjective $\rightarrow$ objective   & (biased, neutral) & 50 \\
\midrule
\multicolumn{4}{l}{\textit{Politeness / Formality}} \\
13 & polite $\rightarrow$ impolite  & (sorry, shut-up) & 50 \\
14 & formal $\rightarrow$ informal  & (assist, help) & 50 \\
15 & respectful $\rightarrow$ disrespectful & (sir, dude) & 50 \\
\midrule
\multicolumn{4}{l}{\textit{Emotion / Tone}} \\
16 & calm $\rightarrow$ angry  & (calm, furious) & 50 \\
17 & excited $\rightarrow$ bored & (excited, uninterested) & 50 \\
18 & friendly $\rightarrow$ hostile & (friendly, hostile) & 50 \\
\midrule
\multicolumn{4}{l}{\textit{Intensity / Degree}} \\
19 & mild $\rightarrow$ strong  & (warm, hot) & 50 \\
20 & weak $\rightarrow$ strong  & (soft, solid) & 50 \\
21 & low-certainty $\rightarrow$ high-certainty & (maybe, definitely) & 50 \\
\midrule
\multicolumn{4}{l}{\textit{Common semantic axes}} \\
22 & general $\rightarrow$ specific & (animal, dog) & 50 \\
23 & concrete $\rightarrow$ abstract & (chair, justice) & 50 \\
\bottomrule
\end{tabular}
\end{table}

These new concepts span five broader semantic families—sentiment polarity, toxicity/politeness, factuality/truthfulness, stance/subjectivity, and degree/intensity—and are summarized in Table~\ref{tab:counterfactual_pairs_extra}. Importantly, the pairs were constructed to satisfy the same identifiability constraints emphasized in our main text: each pair differs in one concept, preventing confounding correlations across multiple latent factors. This makes the dataset suitable both for linear probing and for measuring concept-level disentanglement.

A concise overview of the 23 added concepts is provided (full list and examples in Table~\ref{tab:counterfactual_pairs_extra}).
\begin{figure}
\centering
\includegraphics[width=0.4\linewidth]{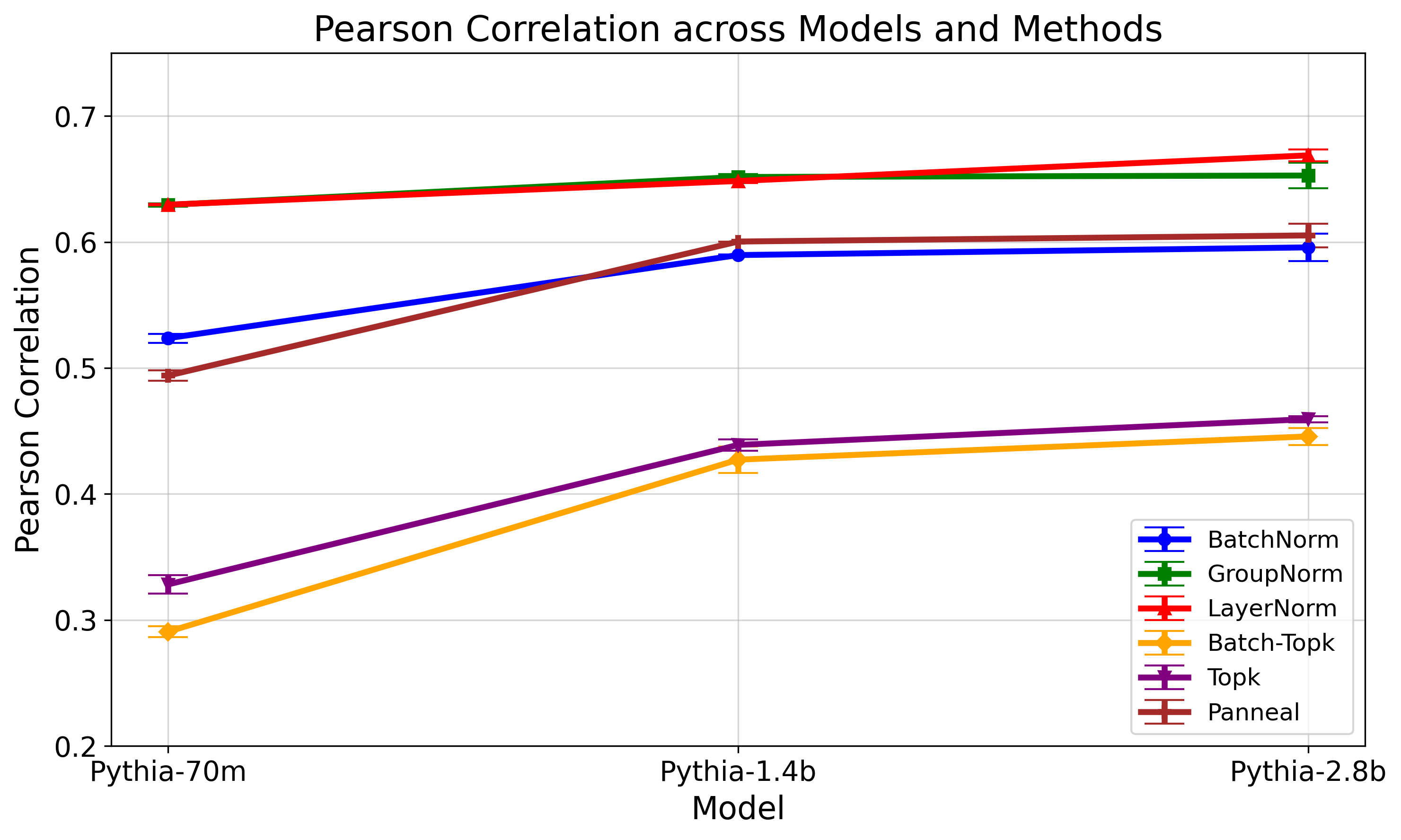}
\includegraphics[width=0.4\linewidth]{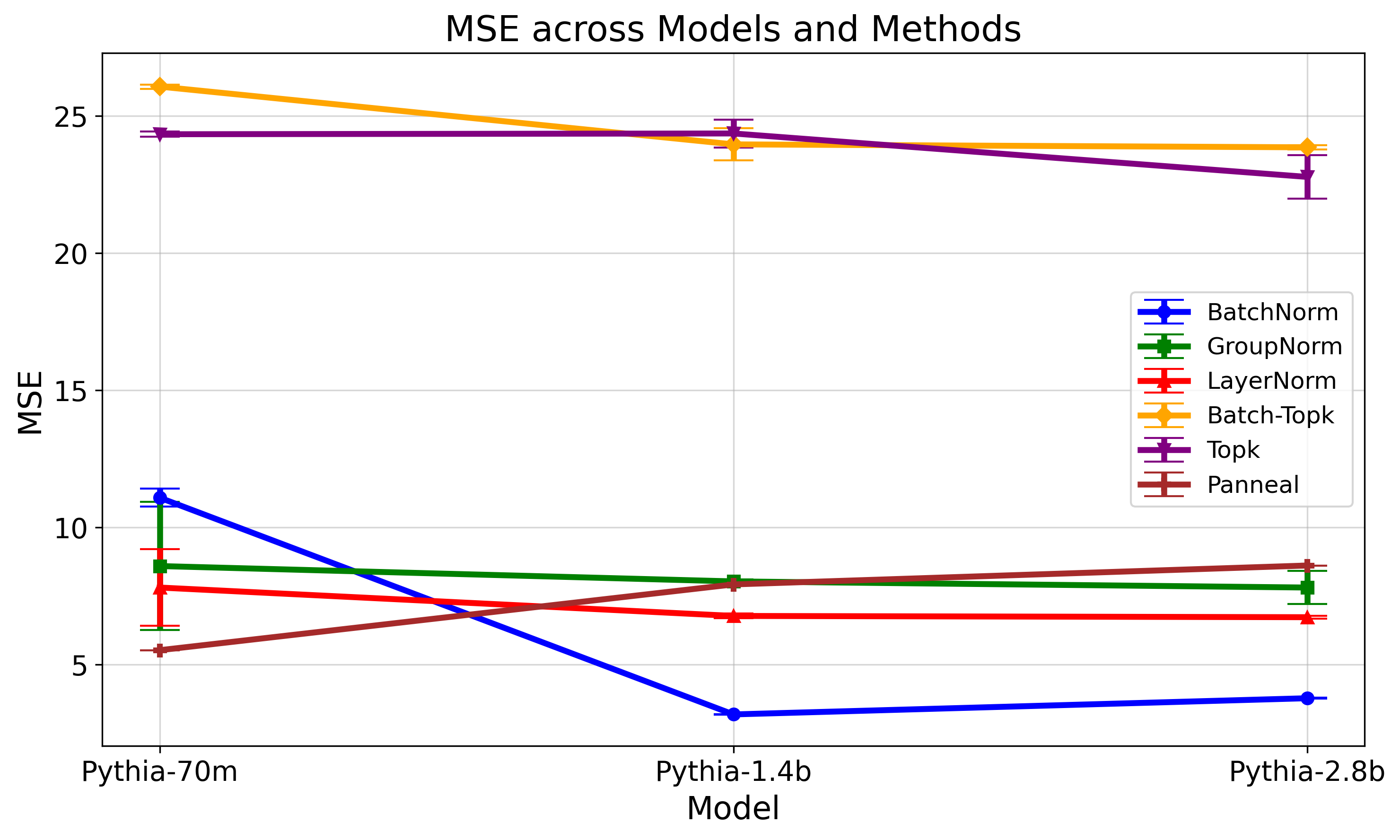}
\caption{Pearson correlation and MSE between predicted concept logits and matched dictionary features across model scales (Pythia-70M, 1.4B, 2.8B) and dictionary learning methods. Mean ± standard deviation over three random seeds is shown. ConCA variants achieve higher Pearson correlation than SAE baselines, with the performance gap most visible in smaller models and remaining stable as scale increases.}
\label{fig:adddata}
\vskip -0.2in
\end{figure} 
Overall, the results in the left in Figure \ref{fig:adddata} show that the ConCA variants demonstrate substantially stronger alignment than standard SAE baselines. In Pythia-70M, ConCA methods outperform SAEs by a large margin, indicating more faithful feature extraction in low-capacity language models. As model size increases to 1.4B and 2.8B, all methods improve, but ConCA retains a consistent advantage: LayerNorm-ConCA yields the highest and most stable correlations across scales, followed closely by GroupNorm and BatchNorm. In contrast, SAE-Top-k and Batch-Top-k lag significantly behind, and Panneal improves with scale but remains below ConCA. These results indicate that ConCA’s normalization-driven design leads to more stable and interpretable feature learning, particularly in smaller models where dictionary training is more brittle.

\section{Detailed Results of Figure ~\ref{fig:joint}}
\label{sec:detailsvalue}
While Figure~\ref{fig:joint} includes error bands reflecting variability across random seeds, many of these standard deviations are extremely small and therefore not visually distinguishable in the plots. To make these differences explicit, we report the full numerical mean and std values for each method–model pair in Tables~\ref{Pythia:value} and \ref{model:values}. 
\begin{table}[t]
\centering
\scriptsize
\setlength{\tabcolsep}{6pt}
\begin{tabular}{l l c c}
\toprule
\textbf{Method} & \textbf{Model} 
& \textbf{Pearson (MPC)} 
& \textbf{MSE} \\
\midrule

BatchNorm 
& pythia-70m   & $0.6500 \pm 0.00552$ & $8.3305 \pm 4.10287$ \\
& pythia-1.4b  & $0.7543 \pm 0.00004$ & $1.6418 \pm 0.00135$ \\
& pythia-2.8b  & $0.7630 \pm 0.00132$ & $2.4657 \pm 0.04794$ \\
\midrule

GroupNorm 
& pythia-70m   & $0.7285 \pm 0.00094$ & $9.0071 \pm 1.36961$ \\
& pythia-1.4b  & $0.7914 \pm 0.00222$ & $6.3021 \pm 0.09493$ \\
& pythia-2.8b  & $0.8038 \pm 0.00059$ & $7.1652 \pm 0.30648$ \\
\midrule

LayerNorm 
& pythia-70m   & $0.7325 \pm 0.00015$ & $7.6272 \pm 0.40461$ \\
& pythia-1.4b  & $0.7958 \pm 0.00009$ & $5.2756 \pm 0.09484$ \\
& pythia-2.8b  & $0.8074 \pm 0.00178$ & $3.5708 \pm 0.45880$ \\
\midrule

Batch-TopK 
& pythia-70m   & $0.5687 \pm 0.00691$ & $24.6165 \pm 0.10317$ \\
& pythia-1.4b  & $0.6856 \pm 0.00205$ & $24.3850 \pm 0.08773$ \\
& pythia-2.8b  & $0.7037 \pm 0.00211$ & $23.7690 \pm 0.07034$ \\
\midrule

TopK 
& pythia-70m   & $0.5851 \pm 0.01090$ & $22.9035 \pm 0.03389$ \\
& pythia-1.4b  & $0.6969 \pm 0.00399$ & $22.7108 \pm 0.10553$ \\
& pythia-2.8b  & $0.7074 \pm 0.00810$ & $22.5243 \pm 0.03869$ \\
\midrule

Panneal 
& pythia-70m   & $0.6474 \pm 0.00816$ & $7.7097 \pm 0.08165$ \\
& pythia-1.4b  & $0.7413 \pm 0.00234$ & $5.7218 \pm 0.08165$ \\
& pythia-2.8b  & $0.7613 \pm 0.00205$ & $6.3563 \pm 0.03300$ \\
\bottomrule
\end{tabular}
\caption{Comparison of Pearson correlation (MPC) and MSE across normalization methods and Pythia model scales. Each cell reports mean ± standard deviation over random seeds.}
\label{Pythia:value}
\end{table}
\begin{table}[t]
\centering
\scriptsize
\setlength{\tabcolsep}{6pt}
\begin{tabular}{l l c c}
\toprule
\textbf{Method} & \textbf{Model} 
& \textbf{Pearson (MPC)} 
& \textbf{MSE} \\
\midrule

BatchNorm 
& Qwen3-1.7B     & $0.7281 \pm 0.00050$ & $3.3193 \pm 0.00026$ \\
& gemma-3-1b-pt  & $0.6853 \pm 0.00374$ & $1.5604 \pm 0.00057$ \\
& pythia-1.4b    & $0.7543 \pm 0.00004$ & $1.6418 \pm 0.00135$ \\
\midrule

GroupNorm 
& Qwen3-1.7B     & $0.7979 \pm 0.00023$ & $6.1347 \pm 0.00051$ \\
& gemma-3-1b-pt  & $0.7791 \pm 0.00123$ & $6.5240 \pm 0.05917$ \\
& pythia-1.4b    & $0.7914 \pm 0.00222$ & $6.3021 \pm 0.09493$ \\
\midrule

LayerNorm 
& Qwen3-1.7B     & $0.7900 \pm 0.00034$ & $5.9607 \pm 0.05288$ \\
& gemma-3-1b-pt  & $0.7738 \pm 0.00019$ & $6.1671 \pm 0.12262$ \\
& pythia-1.4b    & $0.7958 \pm 0.00009$ & $5.2756 \pm 0.09484$ \\
\midrule

Batch-TopK 
& Qwen3-1.7B     & $0.6926 \pm 0.01003$ & $18.1650 \pm 0.11887$ \\
& gemma-3-1b-pt  & $0.6617 \pm 0.00301$ & $44.4503 \pm 0.20351$ \\
& pythia-1.4b    & $0.6856 \pm 0.00205$ & $24.3850 \pm 0.08773$ \\
\midrule

Panneal 
& Qwen3-1.7B     & $0.7345 \pm 0.00500$ & $3.1175 \pm 0.03114$ \\
& gemma-3-1b-pt  & $0.6770 \pm 0.00419$ & $2.7399 \pm 0.02055$ \\
& pythia-1.4b    & $0.7413 \pm 0.00234$ & $5.7218 \pm 0.08165$ \\
\midrule

TopK 
& Qwen3-1.7B     & $0.7027 \pm 0.00024$ & $15.0734 \pm 0.03822$ \\
& gemma-3-1b-pt  & $0.6758 \pm 0.00334$ & $42.6148 \pm 0.18811$ \\
& pythia-1.4b    & $0.6969 \pm 0.00399$ & $22.7108 \pm 0.10553$ \\
\bottomrule
\end{tabular}
\caption{Comparison of Pearson correlation (MPC) and MSE across methods and architectures. Values are mean ± standard deviation over random seeds.}
\label{model:values}
\end{table}

\section{Relation with Linear Representation Hypothesis}
In Theorem~2.1, the mixing matrix $\mathbf{A}$ offers a way to interpret how latent semantic information may be organized within the representation space of LLMs. At a high level, $\mathbf{A}$ can be viewed as describing how the log-posteriors of latent concepts might be linearly combined inside hidden activations. This perspective is related to the broader \emph{Linear Representation Hypothesis}, which suggests that certain semantic attributes in neural representations may interact in an approximately linear manner.

Under this view, each column of $\mathbf{A}$ may be interpreted as indicating a possible direction associated with a particular latent factor, while differences across rows could correspond to the kinds of \emph{difference vectors} often observed in counterfactual pairs or steering-vector analyses. In particular, when two inputs differ only in one latent concept, their representation difference may align with the corresponding column of $\mathbf{A}$, reminiscent of the vector arithmetic phenomena discussed in both word embeddings and modern LLMs.

Such observations hint that $\mathbf{A}$ may induce a geometric structure in which examples sharing similar latent concept values tend to cluster along certain affine subspaces, while changes in concept values correspond to movement along interpretable directions. From this perspective, the mixing matrix provides a possible explanation for why linear unmixing methods like ConCA can recover meaningful concept-level variations in practice. Of course, these interpretations are exploratory, and the precise geometric structure may depend on various factors such as model architecture, data distribution, and training objective.}

\end{document}